\title{A Statistical Framework for Data-dependent Retrieval-Augmented Models}
\date{}
\author{Soumya Basu $^\ast$ \quad Ankit Singh Rawat $^\ast$ \quad Manzil Zaheer \thanks{Equal contribution in alphabetical order.}
}
\affil{ Google, New York \\
{\small \texttt{\{basusoumya,ankitsrawat,manzilzaheer\}@google.com}}
 \vspace{-3mm}
}
\newcommand{\Hquad}{\hspace{0.5em}} 
\def\1{\bm{1}}
\newcommand{\DSf}{\mathsf{D}}
\newcommand{\DCal}{\mathscr{D}}
\newcommand{\ECal}{\mathscr{E}}
\newcommand{\FCal}{\mathscr{F}}
\newcommand{\ICal}{\mathscr{I}}
\newcommand{\LCal}{\mathscr{L}}
\newcommand{\SCal}{\mathscr{S}}
\newcommand{\XCal}{\mathscr{X}}
\newcommand{\YCal}{\mathscr{Y}}
\newcommand{\ZCal}{\mathscr{Z}}
\newcommand{\RF}{\mathfrak{R}}
\newcommand{\EC}{\mathcal{E}}
\newcommand{\NC}{\mathcal{N}}
\DeclareMathAlphabet{\mathsfit}{\encodingdefault}{\sfdefault}{m}{sl}
\SetMathAlphabet{\mathsfit}{bold}{\encodingdefault}{\sfdefault}{bx}{n}
\def\sN{{\mathbb{N}}}
\newcommand{\R}{\mathbb{R}}
\DeclareMathOperator*{\argmax}{arg\,max}
\DeclareMathOperator*{\argmin}{arg\,min}
\theoremstyle{plain}
\newtheorem{theorem}{Theorem}[section]
\newtheorem{proposition}[theorem]{Proposition}
\theoremstyle{definition}
\newtheorem{definition}[theorem]{Definition}
\newtheorem{assumption}[theorem]{Assumption}
\theoremstyle{remark}
\begin{document}
\maketitle

%%%%%%%%%%%%%%%%%%%%%%%%%%%%%%%%%%%%%%%%%%%%%%%%%%%%%%%%%%%%%%%%%%%%%%%%%%%%%%%%%%%%
%%%%%%%%%%%%%%%%%%%%%%%%%%%%%%%%%%%%%%%%%%%%%%%%%%%%%%%%%%%%%%%%%%%%%%%%%%%%%%%%%%%%
% Abstract
%%%%%%%%%%%%%%%%%%%%%%%%%%%%%%%%%%%%%%%%%%%%%%%%%%%%%%%%%%%%%%%%%%%%%%%%%%%%%%%%%%%%
%%%%%%%%%%%%%%%%%%%%%%%%%%%%%%%%%%%%%%%%%%%%%%%%%%%%%%%%%%%%%%%%%%%%%%%%%%%%%%%%%%%%
\begin{abstract}
Modern ML systems increasingly augment input instances with additional relevant information to enhance final prediction. Despite growing interest in such retrieval-augmented models, their fundamental properties and training are not well understood. We propose a statistical framework to study such models with two components: 1) a {\em retriever} to identify the relevant information out of a large corpus via a data-dependent metric; and 2) a {\em predictor} that consumes the input instances along with the retrieved information to make the final predictions. We present a principled method for end-to-end training of both components and draw connections with various training approaches in the literature. Furthermore, we establish excess risk bounds for retrieval-augmented models while delineating the contributions of both retriever and predictor towards the model performance.We validate the utility of our proposed training methods along with the key takeaways from our statistical analysis on open domain question answering task where retrieval augmentation is important.
\end{abstract}

%%%%%%%%%%%%%%%%%%%%%%%%%%%%%%%%%%%%%%%%%%%%%%%%%%%%%%%%%%%%%%%%%%%%%%%%%%%%%%%%%%%%
%%%%%%%%%%%%%%%%%%%%%%%%%%%%%%%%%%%%%%%%%%%%%%%%%%%%%%%%%%%%%%%%%%%%%%%%%%%%%%%%%%%%
% Introduction
%%%%%%%%%%%%%%%%%%%%%%%%%%%%%%%%%%%%%%%%%%%%%%%%%%%%%%%%%%%%%%%%%%%%%%%%%%%%%%%%%%%%
%%%%%%%%%%%%%%%%%%%%%%%%%%%%%%%%%%%%%%%%%%%%%%%%%%%%%%%%%%%%%%%%%%%%%%%%%%%%%%%%%%%%
% \vspace{-3mm}
\section{Introduction}
\label{sec:intro}

Recent advancements in machine learning (ML) have not only led to breakthroughs on long-standing challenging tasks across various fields,  but they have also inspired a great deal of interest  to develop ML models that can solve even harder tasks~\citep{meinhardt2022trackformer,lewkowycz2022solving,cramer2021alphafold2} or focus on completely new fields~\citep{austin2021program,openai2023gpt4tr,singhal2023large}. While scaling the size of \textit{parametric} ML models, such as neural networks, is becoming the predominant approach to meet such demands~\citep{brown2020language,chowdhery2022palm,touvron2023llama,dosovitskiy2021an,dehghani2023scaling}, the excellent performance realized by this approach is marred by 
drawbacks such as high computational cost, inefficient storage of world knowledge in parameters, lack of transparency in model behavior, and reduced grounding/factuality 
of model predictions.  

Recognizing these shortcomings, \textit{retrieval-augmented models} (RAMs) have emerged as a promising alternative. Such models typically employ two components, namely \textit{retriever} and \textit{predictor}, during inference on a given input instance: The retriever first identifies instance-specific relevant information from a data-store, and then the predictor jointly processes the retrieved information and the input instance to make a final prediction. 
In practice, RAMs have enjoyed favorable performance vs. compute trade-off~\citep{retro,das2021cbr,thai2023machine} as employing moderate-size parametric models as retriever and predictor in a RAM often matches or exceeds the performance of a much larger standalone ML model that directly maps input instances to predictions. Similarly, conditioning prediction on the retrieved information has shown to exhibit improved grounding~\citep{shuster2021retrieval,lin2023ra,asai2023self}. 
Furthermore, having access to an external corpus can obviate the need to store task-specific world knowledge in model parameters and enable incorporating dynamically evolving knowledge~\citep{izacard2022atlas,pmlr-v162-liska22a}.

Despite these desirable characteristics, training RAMs presents multiple challenges.
The natural approach of independently training retriever and predictor can be sub-optimal~\citep{izacard2022atlas}. Moreover, it requires collecting intermediate supervision on the instance-dependent relevant information to retrieve, which is missing in common datasets and expensive to obtain in general.
A common strategy to circumvent the lack of intermediate supervision is to perform end-to-end training which presents its own unique challenges in the context of RAMs. 
Fundamentally, the retrieval corresponds to the non-differentiable discrete operation of selecting relevant information from a data-store, e.g., via top-k selection based on retriever scores, which prevents direct gradient propagation to the entire retriever. Several clever solutions to above-mentioned issues have been proposed in the literature that focus on different training objectives to propagate learning signal from the predictor into the retriever.
However, a formal study that unifies these solutions is missing from the literature. 

Another key challenge that prevents the resource-efficient development and deployment of RAMs is the limited understanding of their basic properties such as their generalization behavior and expressive power. For instance, how do the retriever and predictor components interact to ensure good task-specific performance? Are there any principles guiding the selection of the retriever and predictor components? 
How does (size of) the data-store feature in the final performance of a RAM?

In this paper, we address both aforementioned shortcoming in the literature pertaining RAMs. To unify the training of RAMs, we begin with writing down the natural objective function, which somehow has eluded the literature. This natural objective simply minimizes the expected prediction loss, where the expectation is taken over the distribution induced by the retriever. Empirically, we find this objective to be effective on standard benchmarks: NaturalQuestions~\citep[NQ;][]{kwiatkowski2019natural} and TriviaQA~\citep{joshi2017triviaqa}.

As for the generalization and expressive power, we present an excess risk bound for RAMs that captures the effect of retrieval and prediction function classes. The proposed bound allows us to highlight how retriever and predictor components play complementary roles to reduce approximation error as we increase their respective function class complexity.
We also capture the role of data store in improving the model performance by reducing the approximation error.
On the generalization front, we carefully decouple the generalization term in the excess risk over the predictor and retriever function classes.
This allows us to tightly control the generalization term with only {logarithmic} dependence on the data store size.
As a concrete instantiation for our excess risk bounds, we consider feed-forward neural networks of varying depth for both the retriever and the predictor.

To summarize, our main contributions include:
\vspace{-2mm}
\begin{itemize}[leftmargin=4mm, itemsep=1mm, partopsep=0pt,parsep=0pt]
    \item We present a principled objective for end-to-end training of RAMs focusing on a classification setting (Sec.~\ref{sec:problem}~\ref{sec:excess-risk}) and draw connections between existing approaches for training RAMs (Sec.~\ref{sec:connections}). 
    \item We derive excess risk bound highlighting the role played by retriever and predictor functions classes as well as {the data-store towards ensuing improved performance by RAMs  (Sec.~\ref{sec:final-risk-bound})};
    capturing the trade off between model capacities at retriever and predictor (Sec.~\ref{sec:mlp-tradeoff}).
    \item We validated the utility of the proposed objective on two standard QA benchmarks:~NaturalQuestions~(NQ) and TriviaQA (Sec.~\ref{sec:experiment}). 
\end{itemize}
% \input{010_intro}

%%%%%%%%%%%%%%%%%%%%%%%%%%%%%%%%%%%%%%%%%%%%%%%%%%%%%%%%%%%%%%%%%%%%%%%%%%%%%%%%%%%%
%%%%%%%%%%%%%%%%%%%%%%%%%%%%%%%%%%%%%%%%%%%%%%%%%%%%%%%%%%%%%%%%%%%%%%%%%%%%%%%%%%%%
% Related work (can go to end)
%%%%%%%%%%%%%%%%%%%%%%%%%%%%%%%%%%%%%%%%%%%%%%%%%%%%%%%%%%%%%%%%%%%%%%%%%%%%%%%%%%%%
%%%%%%%%%%%%%%%%%%%%%%%%%%%%%%%%%%%%%%%%%%%%%%%%%%%%%%%%%%%%%%%%%%%%%%%%%%%%%%%%%%%%
% \input{020_related_work}

%%%%%%%%%%%%%%%%%%%%%%%%%%%%%%%%%%%%%%%%%%%%%%%%%%%%%%%%%%%%%%%%%%%%%%%%%%%%%%%%%%%%
%%%%%%%%%%%%%%%%%%%%%%%%%%%%%%%%%%%%%%%%%%%%%%%%%%%%%%%%%%%%%%%%%%%%%%%%%%%%%%%%%%%%
% Problem setup and background
%%%%%%%%%%%%%%%%%%%%%%%%%%%%%%%%%%%%%%%%%%%%%%%%%%%%%%%%%%%%%%%%%%%%%%%%%%%%%%%%%%%%
%%%%%%%%%%%%%%%%%%%%%%%%%%%%%%%%%%%%%%%%%%%%%%%%%%%%%%%%%%%%%%%%%%%%%%%%%%%%%%%%%%%%
\section{Problem setup}
\label{sec:problem}

In this paper, we focus on developing a systematic understanding of RAMs with learned retrievers in a classification setting where the model has access to a data-store. Towards this, we begin by formally defining the problem setup and providing the necessary background along with the notations used.

Let's first consider the standard classification setting which requires predicting a class in $\YCal$ for a given instance $x \in \XCal$.
Assume that $\DSf_{XY}$ captures the underlying data distribution and one has access to $n$ training examples $\SCal_n \triangleq \{(x_i, y_i)\}_{i \in [n]}$ that are independent and identically distributed (i.i.d.) according to $\DSf_{XY}$. Given $\SCal_n$, one hopes to learn a classifier $f: \XCal \to \R^{|\YCal|}$ that minimizes the miss-classification error:
\begin{align}
    R(f) = \mathbb{P}_{(X, Y) \sim \DSf_{XY}}\big[\argmax_{y \in \YCal}f^y(X) \neq Y\big],
\end{align}
where $f^y(x)$ denotes the score that $f$ assigns to the $y$-th class, given the input instance $x$.
Since directly optimizing the miss-classification error or $0/1$-loss poses computational challenges, one typically selects the classifier that minimizes the empirical risk associated with a well behaved surrogate loss function $\ell: \R^{|\YCal|} \times \YCal \to \mathbb{R}$ on the training sample $\SCal_n$:
\begin{align}
    R_{\ell, n}(f) = \frac{1}{n}\sum_{i \in [n]}\ell\big(f(x_i), y_i\big).
\end{align}
The (population) risk associated with the surrogate loss function takes the following form:
\begin{align}
    R_{\ell}(f) = \mathbb{E}_{(X, Y) \sim \DSf_{XY}}\big[\ell\big(f(X), Y\big)\big].
\end{align}
Different from the standard classification setup described above, we now consider the classification task with access to a data-store: Given an instance $x$, the classifier can potentially leverage a data-store $\ICal \subseteq \ZCal$ -- a collection of potentially relevant information or evidences, where $\ZCal$ denotes the space of all possible evidences. Accordingly, one can define the empirical and population risks of a classifier $f(\cdot, \ICal): \XCal \to \R^{|\YCal|}$ as follows:
\begin{align}
    R_{\ell, \ICal, n}(f) &= \frac{1}{n}\sum_{i \in [n]}\ell\big(f(x_i, \ICal), y_i\big), \\
    R_{\ell, \ICal}(f) &= \mathbb{E}\big[\ell\big(f(X, \ICal), Y\big)\big], \label{eq:prrisk_rbm}
\end{align}
where expectation is take over in $(X, Y) \sim \DSf_{XY}$ as well as the possible randomness in $f(\cdot, \ICal)$. However, due its prohibitive computational cost, such a general classifier that directly processes the entire data-store for each prediction is far from how an additional data-store is utilized by ML models in practice.

This motivates us to study the following explicit retrieval-augmented classification setup to utilize the data-store: Given an input instance $x \in \XCal$, one first retrieves input-dependent supporting evidences $\EC^x \subset \ICal$ with the help of a {\em retriever model} which has access to the entire data-store $\ICal$.
Now, given $x$ and $\ECal^x$, one invokes a {\em predictor model} to predict the class associated with $x$. Thus, a retriever-augmented classification setup consists of two key components models: 1) retriever model and 2) predictor model, which we formally introduce next.

\noindent \textbf{Retriever model.}~For the retrieval stage, we rely on a {\em retriever model} to capture the relevance of an evidence $z \in \ICal$ towards the input instance $x \in \XCal$. Let $r_{\theta}: \XCal \times \ZCal \to \R$ be the retriever model parameterized by $\theta \in \Theta$ that assigns a relevance score $r_{\theta}(x, z)$ to the instance-evidence pair $(x, z)$. Furthermore, for each instance $x$, the retriever model $r_{\theta}$ induces the following distribution over the set of potential evidences:
% \vspace{-2mm}
\begin{align}
\label{eq:rdistn}
    p_{\theta, \ICal}\big(z | x\big) = \frac{\exp\big(r_{\theta}(x, z)\big)}{\sum_{z' \in \ICal}\exp\big(r_{\theta}(x, z')\big)}, \quad \forall~z\in\ICal.
    % \vspace*{-2mm}
\end{align}
There are multiple strategies to construct the set of input-dependent supporting evidences $\ECal^x$ based on $r_{\theta}$. For example, for a fixed integer $k \geq 1$, one could select $k$ evidences corresponding to the $k$ highest scores in $\{r_{\theta}(x, z)\}_{z\in \ICal}$. Another strategy is to sample $k$ evidences according to the distribution $ p_{\theta, \ICal}(\cdot|x)$ in \eqref{eq:rdistn}. Here, one could perform the sampling with or without replacement. In what follows, we denote the retrieved supporting evidence for the instance $x$ as $\ECal^{x}_{\theta}$ to highlight the dependence on the underlying retriever model.   

\noindent \textbf{Predictor model.}~Let $h_{\xi}: \XCal \times \ICal^{\ast} \to \R^{|\YCal|}$ be the predictor model parameterized by $\xi \in \Xi$, where $\ICal^{\ast}$ denotes the Kleene star on $\ICal$. Given $x \in \XCal$ and $\EC \in \ICal^{\ast}$, the predictor model $h_{\xi}$ assigns a score to each class in $\YCal$, defining a distribution over $\YCal$ as follows:
% \vspace{-2mm}
\begin{align}
\label{eq:hdistn}
    p_{\xi}\big(y | x, \ECal) = \frac{\exp\big(h^{y}_{\xi}(x, \EC)\big)}{\sum_{y' \in \YCal}\exp\big(h^{y'}_{\xi}(x, \EC)\big)}, \quad \forall~y \in \YCal,
\end{align}
where $h^{y}_{\xi}(\cdot, \cdot)$ denotes the score assigned to the $y$-th class by the predictor model $h_{\xi}$.

For ease of exposition, we focus on the setting with $k = |\ECal^x_{\theta}|=1, \forall x \in \XCal,$ in our analysis throughout this paper. This corresponds to retrieving a single supporting evidence for each input instance. Our analysis can be generalized to $k > 1$ by working with a $\tilde{\ICal} \subseteq \ICal^{k}$ as the new data-store and $\tilde{p}_{\theta, \ICal}(\cdot |x)$ as a distribution over $\tilde{\ICal}$ obtained by suitably modifying $p_{\theta, \ICal}$ in \eqref{eq:rdistn}. 
For example, when $k$ supporting evidences are sampled with replacement, then the following holds $\forall (z_1,\ldots, z_k) \in \ICal^k$.
\begin{align*}
    \tilde{p}_{\theta, \ICal}\big((z_1,\ldots, z_k) \big\vert x\big) = \prod_{j\in [k]} p_{\theta, \ICal}(z_j | x).
\end{align*}
% }
% \vspace{-5mm}

\textbf{Empirical risk minimization and excess risk for RAMs.}~
For a pair of retriever and predictor models parameterized by $\theta$ and $\xi$, respectively, we can define the empirical and population risks associated with a (surrogate) loss function $\ell$ as follows:
% \vspace{-2mm}
\begin{align}
\label{eq:erisk_ram}
     R_{\ell, \ICal, n}(\xi, \theta) &= \frac{1}{n}\sum_{i \in [n]}\sum_{z \in \ICal}p_{\theta}(z|x)\ell\big(h_{\xi}(x_i, z), y_i\big), \\
    R_{\ell, \ICal}(\xi, \theta) &= \mathbb{E}\big[\ell\big(h_{\xi}(X, \EC^{X}_{\theta}), Y\big)\big]. \label{eq:prisk_ram}
    % \vspace{-2mm}
\end{align}
Note that the expectation in \eqref{eq:prisk_ram} is taken over $(X, Y) \sim \DSf_{XY}$ as well as the randomness involved in the retrieval stage, e.g., sampling the evidences according to $p_{\theta, \ICal}(\cdot | x)$ in \eqref{eq:rdistn}. Given a pair of predictor class $\Xi$ and retriever class $\Theta$, let $(\hat{\xi}, \hat{\theta})$ denote the predictor-retriever pair obtained via \textit{empirical risk minimization} (ERM) as follows:
% \vspace{-1mm}
\begin{align}\label{eq:erm}
  (\hat{\xi}, \hat{\theta}) \in \argmin_{(\xi, \theta) \in \Xi \times \Theta} R_{\ell, \ICal, n}(\xi, \theta).
\end{align} 
% \vspace{-5mm}

Let $\FCal_{\rm all}$ denote the set of all measurable functions from $\XCal \times \ZCal$ to $\R^{|\YCal|}$. The optimal risk for the classification with access to the data-store is achieved by the best possible predictor $f^{\ell}_{\rm opt, \ICal} \in \FCal_{\rm all}$ when it has access to the best retrieved evidence in $\ICal$. In particular, we have % \vspace{-1mm}
\begin{align}
\label{eq:fellopt}
f^{\ell}_{\mathrm{opt},\ICal} = \argmin_{f\in \FCal_{\mathrm{all}}} \mathbb{E}\big[\min_{z\in \ICal}\ell(f(X,z), Y)\big].%\nonumber
\end{align}
% \vspace{-5mm}

Given $f^{\ell}_{\mathrm{opt},\ICal}$, we defined the \textit{excess} risk of a predictor-retriever pair $(\xi, \theta)$ as follows:
% \vspace{-1mm}
\begin{align}\label{eq:excess-ell}
 &\Delta_{\ell, \ICal}(\xi, \theta) = R_{\ell, \ICal}(\xi, \theta) - R_{\ell, \ICal}(f^{\ell}_{\mathrm{opt}, \ICal}) 
 \triangleq R_{\ell, \ICal}(\xi, \theta) - \mathbb{E}\big[\min_{z\in \ICal}\ell(f^{\ell}_{\mathrm{opt},\ICal}(X,z), Y)\big]. 
\end{align}
% \vspace{-5mm}

With the formal definition of the classification setting with access to a data-store and the necessary background in place, 
we proceed to address the two key objectives of this work:
1) Proposing a natural and efficient joint end-to-end training procedure for the predictor-retriever pair in a RAM; and 2) Developing a rigorous statistical understanding of RAMs focusing on the interaction between predictor and retriever components towards reducing overall excess risk.

%%%%%%%%%%%%%%%%%%%%%%%%%%%%%%%%%%%%%%%%%%%%%%%%%%%%%%%%%%%%%%%%%%%%%%%%%%%%%%%%%%%%
%%%%%%%%%%%%%%%%%%%%%%%%%%%%%%%%%%%%%%%%%%%%%%%%%%%%%%%%%%%%%%%%%%%%%%%%%%%%%%%%%%%%
% Algorithm
%%%%%%%%%%%%%%%%%%%%%%%%%%%%%%%%%%%%%%%%%%%%%%%%%%%%%%%%%%%%%%%%%%%%%%%%%%%%%%%%%%%%
%%%%%%%%%%%%%%%%%%%%%%%%%%%%%%%%%%%%%%%%%%%%%%%%%%%%%%%%%%%%%%%%%%%%%%%%%%%%%%%%%%%%
% \input{040_algorithm_new}
% \input{040_algo_new}

%%%%%%%%%%%%%%%%%%%%%%%%%%%%%%%%%%%%%%%%%%%%%%%%%%%%%%%%%%%%%%%%%%%%%%%%%%%%%%%%%%%%
%%%%%%%%%%%%%%%%%%%%%%%%%%%%%%%%%%%%%%%%%%%%%%%%%%%%%%%%%%%%%%%%%%%%%%%%%%%%%%%%%%%%
% Analysis
%%%%%%%%%%%%%%%%%%%%%%%%%%%%%%%%%%%%%%%%%%%%%%%%%%%%%%%%%%%%%%%%%%%%%%%%%%%%%%%%%%%%
%%%%%%%%%%%%%%%%%%%%%%%%%%%%%%%%%%%%%%%%%%%%%%%%%%%%%%%%%%%%%%%%%%%%%%%%%%%%%%%%%%%%

\section{Joint training and excess risk}
\label{sec:excess-risk}

Recall that training a RAM involves training both the retriever $r_{\theta}: \XCal \times \ZCal \to \R$ and the predictor $h_{\xi}: \XCal \times \ICal \to \R^{|\YCal|}$ components of the model 
without access to intermediate supervision on retrieval, which is infeasible to obtain in most practical settings. 
Thus, it becomes critical to devise methods to jointly train $r_{\theta}$ and $h_{\xi}$ with access to only labeled instances $\SCal_n = \{(x_i, y_i)\}_{i \in [n]} \subseteq \XCal \times \YCal$ with the predictor guiding the retriever training based on how valuable the retriever-provided evidences are towards the correct final prediction.

Towards this, we leverage the empirical risk from \eqref{eq:erisk_ram} along with the 
log-loss $\ell(h_{\xi}(x, z), y) = -\log p_{\xi}(y | x, z)$,
where $p_{\xi}(y|x, z)$ is defined in \eqref{eq:hdistn}. In particular, this leads to the following joint end-to-end training objective:
\begin{align}
\label{eq:the_obj}
\LCal_n(\xi, \theta; \ICal) \triangleq R_{{\rm log}, \ICal, n}(\xi, \theta)  = - \frac{1}{n} \sum_{i \in [n]}\sum_{z \in \ICal} p_{\theta, \ICal}(z | x_i)\cdot \log p_{\xi}(y_i | x_i, z).
\end{align}
% \vspace{-5mm}

Note that the objective in \eqref{eq:the_obj} aims to improve the end-to-end performance of a RAM in deployment in the sense that the objective aims to minimize the expected loss given the selected evidences as per the retriever-induced distribution.
One can use gradient-based methods to jointly minimize the objective in \eqref{eq:the_obj} with respect to $(\xi, \theta)$; however, its efficient implementation is non-trivial due to the sum over entire data-store $\ICal$. In App.~\ref{sec:effcient}, we discuss some approximate design choices.
Lastly, please refer to Sec.~\ref{sec:connections} for connections between our proposed objective in \eqref{eq:the_obj} and some of the existing end-to-end training approaches for RAMs.

Next, to study the generalization and expressive power of RAMs, we want to bound the excess risk $\Delta_{\ell, \ICal}(\hat{\xi}, \hat{\theta})$ as defined in~\eqref{eq:excess-ell}. 
We consider $\XCal$ to be a compact subspace of $\mathbb{R}^{d_x}$ and, for simplicity, take $\XCal \subseteq [-1, 1]^{d_x}$. Similarly, we consider that each retrieval example $z \in \ICal$ is embedded in the space $[-1, 1]^{d_z}$. We consider a data-store that polynomially scales with training data size, i.e., $|\ICal| = {\rm poly}(n)$. 
For the purpose of analysis, we specialize our log-loss to be bounded by $\ell_{\max} > 0$, which is given as
% \vspace{-2mm}
\begin{align}\label{eq:bounded-cross-entropy-loss}
&\ell(h_{\xi}(x,z), y) = \min(\ell_{\max}, - \log p_{\xi}(y|x,z)) =  \min\bigg(\ell_{\max}, \log\Big(\sum_{y'\in \YCal}\exp(h^{y'}_{\xi}(x,z))\Big) - h^{y}_{\xi}(x,z)\bigg), \nonumber 
\end{align}
% \vspace{-2mm}
where $p_{\xi}(y|x,z)$ and $h^{y}_{\xi}(x,z)$ are defined in~\eqref{eq:hdistn}.

\subsection{Excess risk decomposition}
Our excess risk relies on separating out the contribution coming from the retriever and the predictor during the joint training. Moreover, the retriever and predictor errors can be each split into generalization and approximation error.  

The population risk optimizer of our joint training over the space $\Xi \times \Theta$ is defined as
% \vspace{-2mm}
\begin{align*}
&\xi^{\ast}_{\rm joint}, \theta^{\ast}_{\rm joint}  = \argmin_{(\xi, \theta) \in \Xi \times \Theta} \mathbb{E}_{X}\big[\mathbb{E}_{Z\sim p_{\theta}(\cdot| X)} \mathbb{E}_{Y|X} \ell\big(h_{\xi}(X, Z), Y)\big].  
\end{align*}

For a predictor $\xi$, sample $x \in \XCal$ and retrieved example $z \in \ICal$, let us denote the risk averaged over the labels $\YCal$ as 
% \vspace{-1mm}
\begin{equation}\label{eq:label-avg-risk}
   g_{\xi}(x,z) = \mathbb{E}_{Y|X=x}[\ell\big(h_{\xi}(x, z), Y)]. 
\end{equation}
For any fixed predictor $\xi$ (\textit{not necessarily} in $\Xi$) and fixed data-store $\ICal$, the retriever that optimizes the joint population risk is given as $p^{\ast, \xi}(z|x) = \mathbbm{1}_{\argmin_{z'\in \ICal} g_{\xi}(x, z')}(z)$, where a tie is broken arbitrarily. Note that, for each sample $x$, the best retrieved evidence $z$ may change. We define the optimal predictor \textit{within the class} $\Xi$ with best possible retriever as
% \vspace{-2mm}
$$
\xi^{\ast} = \argmin_{\xi \in \Xi} \mathbb{E}_{X}\big[\min_{z \in \ICal} g_{\xi}(X, z)\big].
$$
The optimal retriever \textit{within the class} $\Theta$ for a given predictor $\xi$ is defined as 
%\vspace{-2mm}
$$
\theta(\xi) = \argmin_{\theta \in \Theta} \mathbb{E}_{X}\big[\mathbb{E}_{Z\sim p_{\theta}(\cdot| X)} g_{\xi}(X, Z)\big].
$$

The excess risk for the classes $\Theta$ and $\Xi$ can be bounded as 
\begin{align}\label{eq:joint-excess-bound}
& \Delta_{\ell, \ICal}(\hat{\xi}, \hat{\theta}) \leq \underbrace{\sum_{(\theta, \xi) \in \{(\hat{\theta}, \hat{\xi}), (\theta^{\ast}_{\rm joint}, \xi^{\ast}_{\rm joint})\}}|R_{\ell, \ICal}(\xi, \theta) - R_{\ell, \ICal, n}(\xi, \theta)|}_{\text{Generalization Error}} \nonumber \\
& \qquad + \underbrace{R_{\ell, \ICal}(\xi^*, \theta(\xi^{\ast})) - \mathbb{E}_{X}\big[\min_{z \in \ICal} g_{\xi^*}(X, z)\big]}_{\text{retriever error}} + \underbrace{\mathbb{E}_{X}\big[\min_{z \in \ICal} g_{\xi^*}(X, z)\big]  - R_{\ell, \ICal}(f_{{\rm opt}, \ICal}^{\ell})}_{\text{predictor error}}
\end{align}

\subsection{Generalization error}
We first bound the generalization error and relate it to the covering number of the retriever and predictor class.

As our loss is bounded by $\ell_{\max}$, through standard concentration bounds~\citep{shalev2014understanding}, we obtain that, for any $\delta > 0$, with probability at least $(1- \delta)$:
$$|R_{\ell, \ICal}(\xi^{\ast}_{\rm joint}, \theta^{\ast}_{\rm joint}) - R_{\ell, \ICal, n}(\xi^{\ast}_{\rm joint}, \theta^{\ast}_{\rm joint})| \leq 3 \ell_{\max}\sqrt{\tfrac{\log(1/\delta)}{n}}.$$

However, $(\hat{\xi}, \hat{\theta})$ is learned from the data. A high probability generalization error requires taking union over the space of $\Xi \times \Theta$. We employ Rademacher complexity based generalization error bounds. Next, the covering number of the space $\Xi$ is used to bound the associated Rademacher complexity. See \citet{shalev2014understanding} for details.  

We define two norms which are used in defining the covering numbers for $\Theta$ and $\Xi$. In particular,~$\forall \mathbf{u} \in \mathbb{R}^{n \times |\ICal|}$ {and fixed} $\xi \in \Xi, \theta \in \Theta$,
\begin{align}\label{eq:cov-norm}
 &\|\mathbf{u}\|_{2, [n], \xi} = \Big(\tfrac{1}{n} \sum_{i\in [n]} \big(\sum_{z \in \ICal}u_{i,z}\ell\big(h_{\xi}(x_i, z), y_i\big)\big)^2 \Big)^{1/2},\nonumber\\
 &\|\mathbf{u}\|_{2, [n], \theta} = \Big(\tfrac{1}{n} \sum_{i\in [n]} \big(\sum_{z \in \ICal}p_{\theta}(z|x_i)u_{i,z}\big)^2 \Big)^{1/2}.
\end{align}
We also define $\mathcal{N}(\Xi, \nu, {\|\cdot\|_{2, [n], \theta}})$ to be the $\nu$-covering number  for the class $\Xi$ with respect to the norm $\|\cdot\|_{2, [n], \theta}$, and $\mathcal{N}(\Theta, \nu, {\|\cdot\|_{2, [n], \xi}})$ to be the $\nu$-covering number  for the class $\Theta$ with respect to the norm $\|\cdot\|_{2, [n], \xi}$. Then we have the generalization bound given as 
\begin{align}
     &|R_{\ell, \ICal}(\hat{\xi}, \hat{\theta}) - R_{\ell, \ICal, n}(\hat{\xi}, \hat{\theta})| \leq \inf_{\varepsilon \in [0, \ell_{\max}/2]} \Big( 8\varepsilon 
     + \tfrac{24}{\sqrt{n}} \int_{\varepsilon}^{\tfrac{\ell_{\max}}{2}} f_{\mathcal{N}}(\nu/2; \Theta, \Xi) + f_{\mathcal{N}}(\nu/2; \Xi, \Theta) d \nu\Big),
\end{align}
% \vspace{-5mm}
%
for $f_{\mathcal{N}}(\nu; \mathcal{A}, \mathcal{B}) = \sup_{b \in \mathcal{B}}\sqrt{\log(\mathcal{N}(\mathcal{A}, \nu, \|\cdot\|_{2, [n], b}))}.$

We use ideas in \citet{zhang2023mathematical} to upper bound the covering number with pseudo-dimension (defined in the Appendix~\ref{sec:prelims}) of the function class. This allows us to have a $\log |\ICal|$ dependence in the generalization error, while working with norm unbounded function classes.

\subsection{Approximation error}
We next proceed to bound the retriever and predictor approximation errors. Towards this, we extensively use the Sobolev functions spaces. A Sobolev space for a domain $\Omega$ is characterized by two quantities, $\kappa$ -- the number of weak-derivatives a (real-valued) function within it possesses, and $L_p(\Omega)$ -- the norm with respect to which these derivatives are integrable. 
Please see Appendix~\ref{sec:prelims} for a complete definition.

\subsubsection{Retriever error}
The retriever error is given by how well the score function $r_{\theta}(x,z)$ approximates the optimal retriever given $\xi^*$. In order to do so we first need to impose some smoothness constraints on the function $g_{\xi^*}: \XCal \times \ZCal \to \mathbb{R}$. In particular, we assume the following.
\begin{assumption}[Complexity of $g_{\xi^*}$]\label{a:gap-sobolev-main}
There exists a baseline function $b_{\xi^*}: [-1, 1]^{d_x} \to \mathbb{R}$ such that the function $\mathrm{gap}_{\xi^*}:[-1, 1]^{d_x + d_z}  \to \mathbb{R}$ defined by $\mathrm{gap}_{\xi^*}(x, z)\triangleq(g_{\xi^*}(x,z) - b_{\xi^*}(x))$ lies in the Sobolev space with $\kappa$ derivatives and $L_{\infty}([-1, 1]^{d_x + d_z})$ norm.
\end{assumption}
The above assumption says that for the predictor $\xi^*$ the loss profile (averaged over labels in $\YCal$) $g_{\xi^*}(x, z)$, has two components --  a (possibly) complex $b_{\xi^*}(x)$ component that is uniform over $z$, and a `smooth' $\mathrm{gap}_{\xi^*}(x,z)$ component. In other words, given two similar retrieved evidences, the predictor incurs similar losses when each of the evidences is utilized with an input instance.

Then, for any $\tau > 0$, we can bound the retriever loss as follows: 
\begin{align}\label{eq:reteiver-loss}
    &R_{\ell, \ICal}(\xi^*, \theta(\xi^{\ast})) - \mathbb{E}_{X}\big[\min_{z \in \ICal} g_{\xi^*}(X, z)\big] 
    \leq \inf_{\theta \in \Theta} \ell_{\max} \|r_{\theta} + \tau\cdot \mathrm{gap}_{\xi^*}\|_{\infty} + \frac{\log |\ICal|}{\tau^2}
\end{align}

\subsubsection{Predictor error}
The predictor error is measured with the optimal retrieval (as the retriever error is considered separately above). For this, we need to first quantify how the retrieval augmentation using the data-store $\ICal$ helps. 

\paragraph{Usefulness of retrieval set:}
We start with characterization of the prediction task in the presence of the data-store $\ICal \subset \ZCal$.  We assume that there exists a score function $h_{*}: \XCal \times \ZCal \to \mathbb{R}^{|\YCal|}$, and the corresponding probability distribution 
\begin{equation}\label{eq:score-function-main}
    p_{*}^y(x, z) = \frac{\exp(h_{*}^y(x,z))}{\sum_{y'} \exp(h_{*}^{y'}(x,z))},
\end{equation}
that approximates  $p_{\DSf_{XY}}^y(x) := \mathbb{P}_{Y \sim \DSf_{Y|X}}(y|X=x)$ well for all $x \in \XCal$ and $y \in \YCal$. Furthermore, we want this score function $h_{*}$ to lie coordinate wise in a Sobolev space. The following assumption formalizes this. % captures the above intuition. 
\begin{assumption}[Retrieval quality]\label{a:true-sobolev-main}
There exists a score function $h_{*}: \XCal \times \ZCal \to \mathbb{R}^{|\YCal|}$ such that
% \vspace{-3mm}
\begin{enumerate}[leftmargin=4mm, itemsep=1mm, partopsep=0pt,parsep=0pt]
    \item for each $y \in \YCal$, the function $h_{*}^y$ lies in the Sobolev space with $\kappa_{\ICal}$ derivatives and finite $L_{\infty}([-1, 1]^{d_x + d_z})$ norm,
    \item for any $x \in \XCal$, there exists a retrieved evidence $z^*(x) \in \ICal$ such that $p_{*}^y(x, z)$, as defined in~\eqref{eq:score-function-main}, satisfies
    % \vspace{-1mm}
    $$\max_{y \in \YCal}\sup_{x\in \XCal}|p_{*}^y(x, z^*(x)) - p_{\DSf_{XY}}^y(x)| \leq c_{\ICal} |\ICal|^{-\gamma_{\ICal}}.$$ 
%
% \vspace{-3mm}
\end{enumerate}
% \vspace{-4mm}
\end{assumption}
Note that this is independent of the retriever class $\Theta$ and $\Xi$, and captures intrinsic property of the data-store $\ICal$. The tuple $(\gamma_{\ICal}, d_z, \kappa_{\ICal})$ defines the usefulness of $\ICal$. In particular, the higher $\gamma_{\ICal}$ the closer the approximation; and the higher the $\kappa_{\ICal}$ and smaller the embedding dimension $d_z$ the `simpler' the score function used for this
approximation.

Under the Assumption~\ref{a:true-sobolev-main}, we bound the predictor error as 
\begin{align}\label{eq:predictor-loss}
    \mathbb{E}_{X}\big[\min_{z \in \ICal} g_{\xi^*}(X, z)\big]  - R_{\ell, \ICal}(f_{{\rm opt}, \ICal}^{\ell}) &\leq  \inf_{\xi \in \Xi} 2  \mathbb{E}_{X}\big[ \max_{y \in \YCal} |h_{\xi}^y(X, z^*(X)) - h_{*}^y(X, z^*(X))|\big] + \nonumber \\
    &\qquad \quad (|\YCal| - 1)\exp(-\ell_{\max}) + c_{\ICal} |\ICal|^{- \gamma_{\ICal}}\exp(\ell_{\max}). 
\end{align}
{
One key step in arriving to the above inequality is expressing the loss of $f_{{\rm opt}, \ICal}^{\ell}$ using the probability function $h_{*}$ defined in Assumption~\ref{a:true-sobolev-main}.} In particular, under Assumption~\ref{a:true-sobolev-main}, we show that
\begin{align*}
\mathbb{E}_{X}\big[\min_{z\in \ICal} g_{f_{{\rm opt}, \ICal}^{\ell}}(X, z)\big] \geq \mathbb{E}_{X}\big[g_{h_{*}}(X, z^*(X))\big] - 
(|\YCal| - 1)\exp(-\ell_{\max}) - c_{\ICal} |\ICal|^{- \gamma_{\ICal}}\exp(\ell_{\max}). 
\end{align*}

\subsection{Final excess risk bound}
\label{sec:final-risk-bound}
We now combine the three components of the excess risk bounds under Assumptions~\ref{a:gap-sobolev-main} and \ref{a:true-sobolev-main} and discuss the design tradeoffs. % Our main result is as follows:
The following theorem captures our main theoretical result.
\begin{theorem}[Excess risk of joint training]\label{thm:main}
Under Assumption~\ref{a:gap-sobolev-main} and \ref{a:true-sobolev-main}, the excess risk for the retriever class $\Theta$ and predictor class $\Xi$ is bounded as 
\begin{align*}
    % \vspace{-3mm}
    &\Delta_{\ell, \ICal}(\hat{\xi}, \hat{\theta})\leq 3 \ell_{\max} (\tfrac{1}{n} + \sqrt{\tfrac{\log(n)}{n}}) + \inf_{\varepsilon \in [0, \tfrac{\ell_{\max}}{2}]} 8\varepsilon + \tfrac{24}{\sqrt{n}} \int_{\varepsilon}^{\tfrac{\ell_{\max}}{2}} f_{\mathcal{N}}(\tfrac{\nu}{2}; \Theta, \Xi) + f_{\mathcal{N}}(\tfrac{\nu}{2}; \Xi, \Theta) d \nu \nonumber \\
    & \qquad+ \inf_{\theta \in \Theta}\inf_{\tau > 0} \ell_{\max} \|r_{\theta} + \tau \cdot \mathrm{gap}_{\xi^*}\|_{\infty} + \frac{\log |\ICal|}{\tau^2} \nonumber \\
    & \qquad + \inf_{\xi \in \Xi} 2  \mathbb{E}_{X}\big[ \max_{y \in \YCal} |h_{\xi}^y(X, z^*(X)) - h_{*}^y(X, z^*(X))|\big]   + (|\YCal| - 1)\exp(-\ell_{\max})  + c_{\ICal} |\ICal|^{- \gamma_{\ICal}}\exp(\ell_{\max}),
\end{align*}
where $f_{\mathcal{N}}(\nu; \mathcal{A}, \mathcal{B})\triangleq\sup_{b \in \mathcal{B}}\sqrt{\log(\mathcal{N}(\mathcal{A}, \nu, \|\cdot\|_{2, [n], b}))}$ and $\|\cdot\|_{2, [n], \theta}$ and $\|\cdot\|_{2, [n], \xi}$ are defined in~\eqref{eq:cov-norm}.
\end{theorem}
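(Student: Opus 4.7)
The theorem is essentially an assembly of the excess-risk decomposition in \eqref{eq:joint-excess-bound} together with the three component bounds derived in the preceding subsections, so the strategy is to verify each piece is applicable and then sum under a union bound. I would start by invoking \eqref{eq:joint-excess-bound}, which splits $\Delta_{\ell, \ICal}(\hat{\xi}, \hat{\theta})$ into (a) the two generalization gaps, one at the ERM solution $(\hat{\xi}, \hat{\theta})$ and one at the population minimizer $(\xi^{\ast}_{\rm joint}, \theta^{\ast}_{\rm joint})$; (b) the retriever approximation error at $(\xi^*, \theta(\xi^*))$; and (c) the predictor approximation error. This decomposition itself is an algebraic telescoping that uses only the optimality $R_{\ell, \ICal, n}(\hat{\xi}, \hat{\theta}) \leq R_{\ell, \ICal, n}(\xi^{\ast}_{\rm joint}, \theta^{\ast}_{\rm joint})$ from \eqref{eq:erm}.

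Next I would bound the two generalization terms. For the deterministic pair $(\xi^{\ast}_{\rm joint}, \theta^{\ast}_{\rm joint})$, since the loss is bounded by $\ell_{\max}$, a standard Hoeffding/McDiarmid argument yields the gap $\leq 3\ell_{\max}\sqrt{\log(1/\delta)/n}$ with probability $\geq 1-\delta$. For the random ERM pair I would take a Rademacher-complexity route: the empirical process $(\xi,\theta) \mapsto R_{\ell, \ICal, n}(\xi,\theta) - R_{\ell, \ICal}(\xi,\theta)$ admits a product structure because, conditional on one argument, perturbing the other acts on a single slice of the loss (either through $p_\theta(z|\cdot)$ with $\xi$ fixed, or through $\ell(h_\xi(\cdot,z),\cdot)$ with $\theta$ fixed). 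This motivates the two norms in \eqref{eq:cov-norm} and lets one apply Dudley's entropy integral independently on each slice, giving the integrand $f_{\mathcal{N}}(\nu/2;\Theta,\Xi) + f_{\mathcal{N}}(\nu/2;\Xi,\Theta)$. Choosing $\delta = 1/n$ and combining with the previous bound produces the $3\ell_{\max}(1/n + \sqrt{\log n / n})$ prefactor in the statement.

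The third step is to apply the retriever approximation bound \eqref{eq:reteiver-loss} to the term $R_{\ell, \ICal}(\xi^*, \theta(\xi^{\ast})) - \mathbb{E}_X[\min_{z\in \ICal} g_{\xi^*}(X,z)]$; since that bound holds for every $\theta \in \Theta$ and every $\tau > 0$, I can freely take the infimum over both. Symmetrically, the predictor approximation bound \eqref{eq:predictor-loss} applies to the remaining term and yields an infimum over $\xi \in \Xi$ together with the $\exp(\pm \ell_{\max})$ correction terms coming from Assumption~\ref{a:true-sobolev-main}. Adding these contributions together and taking a union bound over the two concentration events (each with probability $\delta = 1/n$) yields the stated inequality.

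\textbf{Main obstacle.} The nontrivial step is the generalization bound for the ERM pair, specifically decoupling the joint cover of $\Xi \times \Theta$ into the additive chaining integrals over $\Xi$ (for fixed $\theta$) and $\Theta$ (for fixed $\xi$). This is what keeps the dependence on $|\ICal|$ logarithmic rather than polynomial, and it requires the pseudo-dimension based chaining argument of \citet{zhang2023mathematical} to translate the data-dependent covering numbers $\mathcal{N}(\cdot, \nu, \|\cdot\|_{2,[n],\cdot})$ into a usable Rademacher bound on the composite loss class. Once that decoupling is in place, the rest of the proof is a clean accounting of the three error contributions.
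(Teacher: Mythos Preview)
Your proposal is correct and follows essentially the same approach as the paper: invoke the decomposition \eqref{eq:joint-excess-bound}, handle the two generalization terms separately (Hoeffding for the fixed population optimizer, Dudley chaining for the ERM pair, with $\delta = 1/n$), and then plug in the retriever and predictor approximation bounds \eqref{eq:reteiver-loss} and \eqref{eq:predictor-loss}. One small clarification: the pseudo-dimension machinery from \citet{zhang2023mathematical} is \emph{not} needed for Theorem~\ref{thm:main} itself, since the statement is already phrased in terms of the covering numbers $f_{\mathcal{N}}$; that translation only enters later for the MLP instantiation (Theorem~\ref{thm:main-mlp}). The decoupling of the joint cover into the two additive pieces is obtained simply by the triangle inequality $\sqrt{\log \mathcal{N}(\Theta\times\Xi,\nu,\|\cdot\|_{2,[n]})} \le \sup_{\xi}\sqrt{\log \mathcal{N}(\Theta,\nu/2,\|\cdot\|_{2,[n],\xi})} + \sup_{\theta}\sqrt{\log \mathcal{N}(\Xi,\nu/2,\|\cdot\|_{2,[n],\theta})}$ applied inside a single Dudley integral over the product class, which is exactly the mechanism you describe as ``conditional on one argument, perturbing the other.''
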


\subsection{Illustrative example: MLPs}
\label{sec:mlp-tradeoff}
We instantiate both our retriever and predictor classes to be %deep neural network, aka 
multi-layer perceptron (MLP) with depth $L_{\rm ret}$ \& width $W_{\rm ret} = O(d_x + d_z)$ and depth $L_{\rm pred}$ \& width $W_{\rm pred} = O(|\YCal|(d_x + d_z))$, respectively.  The class ${\rm MLP}\left(\mathbb{R}^{d}, \mathbb{R}^k;  L, W\right)$ is defined in Appendix \ref{sec:prelims}. The specialized excess risk bound for this setting is given as
\begin{theorem}[Excess risk for  MLP]\label{thm:main-mlp}
Under Assumption~\ref{a:gap-sobolev-main} and \ref{a:true-sobolev-main}, the excess risk for the retriever class $\Theta = MLP\left(\mathbb{R}^{d_x + d_z}, \mathbb{R};  L_{\rm ret}, O(d_x + d_z)\right)$ and predictor class $\Xi= MLP\left(\mathbb{R}^{d_x + d_z}, \mathbb{R}^{|\YCal|};  L_{\rm pred}, O(|\YCal|(d_x + d_z))\right)$ is bounded as 
% \vspace{-2mm}
\begin{align*}
    % \vspace{-3mm}
    &\Delta_{\ell, \ICal}(\hat{\xi}, \hat{\theta}) \leq  \tilde{O}\left(\frac{\ell_{\max}}{\sqrt{n}} \left(L_{\rm ret} +  L_{\rm pred}|\YCal| \right) \right) +  O\Big(\ell_{\max}L_{\rm ret}^{- \tfrac{4\kappa}{3(d_x + d_z)}}\log^{1/3}(|\ICal|)\Big) + \nonumber \\
    &   O\left(L_{\rm pred}^{-\tfrac{2\kappa_{\ICal}}{(d_x + d_z)}} +  (|\YCal| - 1)\exp(-\ell_{\max}) + c_{\ICal} |\ICal|^{- \gamma_{\ICal}} \exp(\ell_{\max})\right).
\end{align*}
\end{theorem}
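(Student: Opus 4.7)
The plan is to specialize Theorem~\ref{thm:main} to the MLP classes by separately estimating each of its three blocks: the generalization term driven by covering numbers of $\Theta$ and $\Xi$; the retriever approximation term $\inf_\theta \inf_{\tau>0} \ell_{\max}\|r_\theta + \tau\cdot \mathrm{gap}_{\xi^*}\|_\infty + \log|\ICal|/\tau^2$; and the predictor approximation term involving a coordinate-wise $L_\infty$ approximation of $h_*$. The MLP structure enters in two different ways: covering numbers are controlled through pseudo-dimension, whereas approximation error is controlled through a Sobolev-space approximation theorem for deep ReLU networks (Yarotsky-type).

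For the generalization term, I would invoke a pseudo-dimension bound of the form $\mathrm{pdim} = \tilde O(L \cdot N_{\rm net})$ for ReLU MLPs, where $N_{\rm net}$ is the total parameter count. For the stated widths, and treating $d_x, d_z$ as constants, this gives $\mathrm{pdim}(\Theta) = \tilde O(L_{\rm ret}^2)$ and $\mathrm{pdim}(\Xi) = \tilde O(|\YCal|^2 L_{\rm pred}^2)$. Combining with $\log \mathcal{N}(\cdot, \nu, \|\cdot\|) \lesssim \mathrm{pdim}\cdot \log(\ell_{\max}/\nu)$ as in~\citet{zhang2023mathematical} yields $f_{\mathcal{N}}(\nu;\Theta,\Xi) = \tilde O(L_{\rm ret}\sqrt{\log(1/\nu)})$ and $f_{\mathcal{N}}(\nu;\Xi,\Theta) = \tilde O(|\YCal| L_{\rm pred}\sqrt{\log(1/\nu)})$. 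Picking $\varepsilon \asymp 1/\sqrt{n}$ and evaluating the Dudley integral in Theorem~\ref{thm:main} then produces the $\tilde O(\ell_{\max}(L_{\rm ret} + |\YCal| L_{\rm pred})/\sqrt{n})$ first term. A subtlety worth tracking is the data-dependent cross norms $\|\cdot\|_{2,[n],\xi}$ and $\|\cdot\|_{2,[n],\theta}$; since wrapping the MLP class with the (Lipschitz) log-sum-exp loss does not change its pseudo-dimension, $f_{\mathcal{N}}$ is unaffected up to logarithmic factors absorbed into $\tilde O$.

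For the retriever approximation, I would invoke a deep ReLU Sobolev-approximation theorem on $[-1,1]^{d_x+d_z}$: with depth $L_{\rm ret}$ and width $O(d_x+d_z)$ one can approximate any target with $\kappa$ weak derivatives to $L_\infty$ error $O\bigl(L_{\rm ret}^{-2\kappa/(d_x+d_z)}\bigr)$ times its Sobolev norm, which is available by Assumption~\ref{a:gap-sobolev-main}. Since the target $-\tau\cdot \mathrm{gap}_{\xi^*}$ has Sobolev norm linear in $\tau$, the infimum collapses to $\ell_{\max}\,\tau\cdot L_{\rm ret}^{-2\kappa/(d_x+d_z)} + \log|\ICal|/\tau^2$. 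A first-order condition in $\tau$ gives $\tau^\star \asymp \bigl(\log|\ICal|/(\ell_{\max} L_{\rm ret}^{-2\kappa/(d_x+d_z)})\bigr)^{1/3}$; substituting back and retaining the dominant factor yields the $O\bigl(\ell_{\max} L_{\rm ret}^{-4\kappa/(3(d_x+d_z))}\log^{1/3}|\ICal|\bigr)$ term in the bound.

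For the predictor approximation, the width $O(|\YCal|(d_x+d_z))$ permits a parallel coordinate-wise Sobolev approximation: $|\YCal|$ disjoint sub-networks of width $O(d_x+d_z)$ and depth $L_{\rm pred}$ can simultaneously approximate each $h_*^y$ to $L_\infty$ error $O\bigl(L_{\rm pred}^{-2\kappa_{\ICal}/(d_x+d_z)}\bigr)$ using Assumption~\ref{a:true-sobolev-main}, without any extra $|\YCal|$ factor on the rate. Plugging this into the predictor approximation term of Theorem~\ref{thm:main} and carrying through the $(|\YCal|-1)e^{-\ell_{\max}}$ and $c_{\ICal}|\ICal|^{-\gamma_{\ICal}}e^{\ell_{\max}}$ residuals produces the claimed predictor contribution. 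The main obstacle I anticipate is making the Sobolev-to-MLP approximation theorem match the $4\kappa/3$ exponent cleanly: one needs the approximation constant to scale exactly linearly in the Sobolev norm of the target (so that the $\tau$-balance works out), and one must check that the specific construction chosen still respects the fixed width $O(d_x+d_z)$ rather than a width that grows with the desired accuracy.
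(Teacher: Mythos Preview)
Your proposal is correct and follows essentially the same route as the paper: the paper also instantiates Theorem~\ref{thm:main} by (i) reducing the cross-norms $\|\cdot\|_{2,[n],\xi}$ and $\|\cdot\|_{2,[n],\theta}$ to sup-norm covers of the underlying MLP score functions, bounding those via pseudo-dimension \`a la Bartlett et al.\ and Zhang, and integrating in the Dudley bound; (ii) applying Siegel's fixed-width deep ReLU Sobolev theorem (which has exactly the linear-in-Sobolev-norm constant and width $25(d_x+d_z)+31$ you flag as the obstacle) to $-\tau\cdot\mathrm{gap}_{\xi^*}$ and balancing $\tau$; and (iii) stacking $|\YCal|$ parallel width-$O(d_x+d_z)$ subnetworks to approximate each $h_*^y$ coordinatewise. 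The only minor imprecision is that the covering-number bound from~\citet{zhang2023mathematical} carries a $\log(n|\ICal|/\nu)$ rather than $\log(\ell_{\max}/\nu)$ factor, and for the multi-output predictor the paper invokes a multi-output variant of the Bartlett et al.\ VC bound; both are absorbed into your $\tilde O$.
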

% \vspace{-4mm}

Finally, to capture the optimal trade-off under finite data size $n$, we consider classes of retriever and predictors that change with the data size, denoted by $\Theta_n$ and $\Xi_n$, with growing depths $L_{{\rm ret}, n}$ and $L_{{\rm pred},n}$ respectively. Similarly, we also consider growing upper bound on the loss function by $\ell_{\max, n}$. %, as well as growing $\ell_{\max, n}$. 
Let $d_{\rm tot} = d_x + d_z$. For $L_{{\rm ret}, n} = n^{\tfrac{3d_{\rm tot}}{6d_{\rm tot} + 8 \kappa}}$, $L_{{\rm pred}, n} = (\sqrt{n}/|\YCal|)^{\tfrac{d_{\rm tot}}{2 d_{\rm tot} + 4 \kappa_{\ICal}}}$, and $\ell_{\max, n} = \log |\YCal|  + \frac{\kappa_{\ICal}}{(d_{\rm tot} + 2 \kappa_{\ICal})}\log n $, the excess risk is bounded by 
$$
O\bigg(n^{- \tfrac{2\kappa}{3d_{\rm tot} + 4 \kappa}} + \max\Big(|\ICal|^{-\gamma_{\ICal}} |\YCal| n^{\tfrac{\kappa_{\ICal}}{d_{\rm tot} + 2 \kappa_{\ICal}}}, 
\big(\frac{n}{|\YCal|^2}\big)^{- \tfrac{\kappa_{\ICal}}{d_{\rm tot} + 2 \kappa_{\ICal}}}\Big)\bigg).$$
% \vspace{-5mm}

We should contrast the above result with the prediction when there is no retrieval. Let us assume that the functions $p_{\DSf_{XY}}^y(x)$ for all $y \in \YCal$ lies in the Sobolev space with derivative $\kappa_{\rm true}$ and $L_{\infty}$ norm. The predictor excess risk rate with $L_{{\rm pred}, n} = (\sqrt{n}/|\YCal|)^{\frac{d_x}{d_x + 2 \kappa_{\rm true}}}$ is $O((n/ |\YCal|^2)^{- \tfrac{\kappa_{\rm true}}{d_x + 2 \kappa_{\rm true}}})$.

Note that our analysis indicates that we may  {\em gain through retrieval}: For large enough data store $|\ICal| \geq  |\YCal|^{\tfrac{d_{\rm tot} \gamma_{\ICal}^{-1}}{d_{\rm tot} + 2 \kappa_{\ICal}}} n^{\tfrac{2\kappa_{\ICal} \gamma_{\ICal}^{-1}}{d_{\rm tot} + 2 \kappa_{\ICal}}}$, as the data size $n$ increases and  
$\kappa >  \tfrac{3 d_{\rm tot}}{2 d_x} \kappa_{\rm true}$ and $\kappa_{\ICal} > \tfrac{d_{\rm tot}}{d_x} \kappa_{\rm true}$ (see Fig.~\ref{fig:my_label}).

\begin{figure}
    \vspace{-2mm}
    \centering
    \hfill
    \includegraphics[width=0.4\linewidth,trim=6mm 7mm 3mm 4mm,clip]{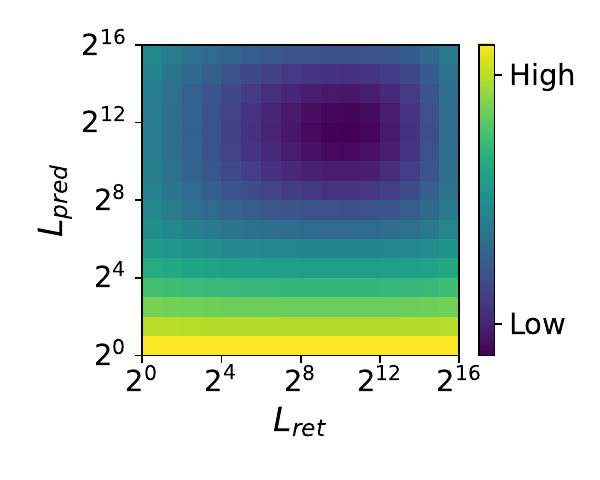}\hfill
    \includegraphics[width=0.4\linewidth]{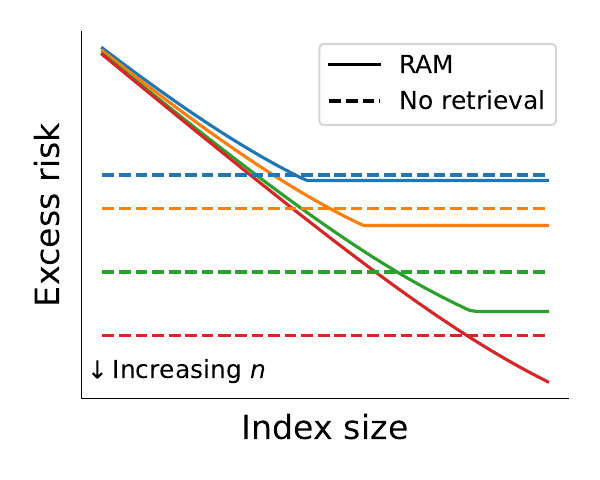}
    \hfill
    \vspace{-3mm}
    \caption{\textbf{Left}: Excess risk bound as we vary retriever and predictor size for a fixed $n$ and $\ICal$ based on Theorem~\ref{thm:main-mlp}. Note that different size combination of predictor and retriever  achieves same risk bound.
    \textbf{Right}: Excess risk bound of RAM as we increase data-store size in contrast to direct MLP predictor with no retrieval. We plot for various values of $n$, with each color corresponding to a fixed $n$.}
    \label{fig:my_label}
    \vspace{-3mm}
\end{figure}

\begin{table*}[t]
    % \small
    \centering
    \scalebox{0.98}{
    \renewcommand{\arraystretch}{0.95}
    \begin{tabular}{@{}lccc ccc ccc ccc@{}}
    \toprule
     \multicolumn{1}{@{}l}{\multirow{2.5}{*}{Method}} &  & \multicolumn{3}{c}{small} &                      & \multicolumn{3}{c}{base} &                      & \multicolumn{3}{c}{large} \\ \cmidrule(lr){3-5} \cmidrule(lr){7-9} \cmidrule(l){11-13} 
    &  & small   & base   & large  &  & small   & base  & large  &  & small   & base   & large  \\ \midrule
% \multicolumn{9}{@{}l}{Fixed retriever $\theta_0$, train reader $\xi$} \\[1mm]
% \qquad Cross-Entropy
%  &  & 0.23  & 0.27  & 0.28  &  & 0.28  & 0.32  & 0.35  &  & 0.32  & 0.36  & 0.38 \\
% \midrule
% \multicolumn{9}{@{}l}{Fixed predictor $\xi^\star(\theta_0)$, train retriever $\theta$} \\[1mm]
% \qquad EMDR2 &  & 0.24  & 0.29  & 0.31  &  & 0.29  & 0.34  & 0.37  &  & 0.33  & 0.38  & 0.41 \\
% \qquad PDist &  & 0.30  & 0.35  & 0.38  &  & 0.34  & 0.40  & 0.43  &  & 0.38  & 0.43  & 0.44 \\
% \qquad Cross-Entropy + PG &  & 0.26  & 0.31  & 0.32  &  & 0.31  & 0.36  & 0.38  &  & 0.36  & 0.40  & 0.41 \\
% \qquad Cross-Entropy + TopK &  & 0.29  & 0.35  & 0.38  &  & 0.34  & 0.40  & 0.43  &  & 0.37  & 0.42  & 0.45 \\
% \midrule
% \multicolumn{9}{@{}l}{Jointly train predictor $\xi$ and retriever $\theta$} \\[1mm]
% \qquad EMDR2 &  & 0.24  & 0.30  & 0.33  &  & 0.30  & 0.36  & 0.39  &  & 0.35  & 0.40  & 0.42 \\
% \qquad PDist &  & 0.29  & 0.33  & 0.37  &  & 0.33  & 0.37  & 0.39  &  & 0.36  & 0.40  & 0.42 \\
% \qquad Cross-Entropy + PG &  & 0.27  & 0.31  & 0.33  &  & 0.33  & 0.37  & 0.38  &  & 0.36  & 0.40  & 0.41 \\
% \qquad Cross-Entropy + TopK &  & 0.33  & 0.38  & 0.40  &  & 0.37  & 0.42  & 0.45  &  & 0.39  & 0.44  & 0.46 \\
%
\multicolumn{9}{@{}l}{No retriever, train predictor $\xi$} \\[0.5mm]
\Hquad Reverse Cross-Entropy & & & 19.6 & & & & 25.5 & & & & 29.1\\
\midrule
\multicolumn{9}{@{}l}{Fixed retriever $\theta_0$, train predictor $\xi$} \\[0.5mm]
\Hquad Reverse Cross-Entropy
 &  & 23.2  & 26.6  & 28.3  &  & 27.5  & 32.4  & 34.7  &  & 32.2  & 36.4  & 37.8 \\
\midrule
\multicolumn{9}{@{}l}{Fixed predictor $\xi^\star(\theta_0)$, train retriever $\theta$} \\[0.5mm]
\Hquad EMDR2 &  & 23.9  & 28.5  & 31.0  &  & 29.2  & 34.2  & 36.6  &  & 33.4  & 38.0  & 40.8 \\
\Hquad PDist &  & 30.1  & 34.5  & 38.4  &  & 34.0  & 39.7  & 42.8  &  & 37.6  & 42.8  & 44.7 \\
\Hquad Reverse Cross-Entropy + PG &  & 25.9  & 30.6  & 31.7  &  & 31.5  & 36.4  & 37.9  &  & 36.0  & 40.2  & 41.4 \\
\Hquad Reverse Cross-Entropy + TopK &  & 29.4  & 35.5  & 37.9  &  & 33.8  & 39.7  & 43.0  &  & 37.2  & 42.3  & 45.0 \\
\midrule
\multicolumn{9}{@{}l}{Jointly train predictor $\xi$ and retriever $\theta$} \\[0.5mm]
\Hquad EMDR2 &  & 24.1  & 30.4  & 32.7  &  & 30.4  & 35.6  & 39.3  &  & 34.5  & 39.7  & 42.1 \\
\Hquad PDist &  & 28.7  & 33.2  & 36.6  &  & 33.3  & 37.1  & 38.8  &  & 36.2  & 40.2  & 41.6 \\
\Hquad Reverse Cross-Entropy + PG &  & 27.1  & 31.0  & 32.7  &  & 33.3  & 37.2  & 38.2  &  & 36.5  & 39.8  & 41.4 \\
\Hquad Reverse Cross-Entropy + TopK &  & 32.8  & 37.8  & 40.1  &  & 36.6  & 41.8  & 44.8  &  & 38.8  & 43.8  & 46.4 \\
    \bottomrule
    \end{tabular}}
    \vspace{-1mm}
    \caption{\textbf{Exact match accuracy on NQ}. We measure the performance of RAMs across various training paradigms and model sizes. Top row specifies the predictor size and the second row specifies the retriever size.}
    \label{tab:nq_accuracy}
    %\vspace{-5mm}
\end{table*}

\subsection{Connections with prior end-to-end training}
\label{sec:connections}

We conclude our treatment of end-to-end training of RAMs by drawing parallels between our proposed method with some representative approaches from the literature.
% \textbf{End-to-end training of Multi-Document Reader and Retriever 

\textbf{EMDR$^2$}~\citet{sachan2021emdr2} 
minimize the following objective based on the negative log-likelihood:
\begin{align}
\label{eq:emdr2}
&\LCal^{\textsc{Emdr}^2}_{n}(\xi, \theta; \ICal) = -\frac{1}{n}\sum_{i \in [n]} \log p_{\xi, \theta, \ICal}(y|x)  =  -\frac{1}{n}\sum_{i \in [n]} \log \Big(\sum_{z \in \ICal} p_{\theta, \ICal}(z | x_i) \cdot p_{\xi}(y_i | x_i, z)\Big).
\end{align}
It follows from the convexity of $-\log(\cdot)$ and Jensen's inequality that our objective in \eqref{eq:the_obj} upper bounds the EMDR$^2$ objective in \eqref{eq:emdr2}; as a result, minimizing the former also minimizes the latter but not vice versa.

\textbf{Perplexity distillation (PDist)}~Another approach for joint training of RAMs in the literature involves optimizing two distinct objectives for training the predictor and retriever components. For example, \citet{izacard2022atlas} propose multiple objectives for retriever training, including PDist~\citep{sachan2023questions} which is defined as follows:
\begin{align}
\label{eq:pdist}
&\LCal^{\textsc{PDist}}_{\ICal, n}(\theta; \xi, \ICal) = \frac{1}{n}\sum_{i\in[n]}\mathrm{CE}\big(p^{\textsc{PDist}}_{\xi, \ICal}(Z|x_i, y_i), p_{\theta, \ICal}(Z|x_i)\big),
\end{align}
where $\mathrm{CE}(\cdot, \cdot)$ denotes the 
cross entropy between two distributions
and 
\begin{align*}
p^{\textsc{PDist}}_{\xi, \ICal}(z|x, y) = { p_{\xi}(y|x, z)}/{\sum_{z' \in \ICal} p_{\xi}(y|x, z')}\quad \forall~z\in \ICal,
\end{align*}
represents a predictor-assigned distribution over evidences based on their utility towards making correct prediction. As for the predictor training, they optimize an objective akin to \eqref{eq:the_obj} with respect to $\xi$. Besides this similarity in the predictor training, our approach for retrieval training has a subtle connection with PDist.
Note that PDist optimizes \textit{forward cross-entropy} between the predictor and the retriever induced distributions to train the retriever. On the other hand, our objective in \eqref{eq:the_obj} is closer to $\frac{1}{n}\sum_{i}\mathrm{CE}(p_{\theta, \ICal}(Z|x_i), p^{\textsc{PDist}}_{\xi, \ICal})$, the \text{reversed cross-entropy} between the two distributions.  
The former has the ``mean-seeking'' behavior whereas the latter has the ``mode-seeking'' behavior~\citep{huszar2015not,gu2023knowledge,agarwal2023gkd}.

\textbf{Similarity with RLHF/RLAIF}~
Note that the per-example objective of our retrieval training approach takes the form: 
\begin{align}
\mathbb{E}_{Z \sim p_{\theta, \ICal}(\cdot | x_i)}\big[\ell\big(h_{\xi}(x_i, Z), y_i\big)\big],    
\end{align}
i.e., the predictor model provides feedback on the (value) of the evidences sampled by the retriever model. Alternatively, one can view $-\ell\big(h_{\xi}(x_i, Z), y_i\big)$ as the reward assigned to the evidence $z$ by the predictor model $h_{\xi}$ and retriever model aims to select those evidences that maximize this reward value. This is similar to RLHF~\citep{ziegler2019fine} or RLAIF~\citep{bai2022constitutional} paradigm, where the underlying LLM aims to sample those generations which maximize the reward assigned by a reward model. However, note that in RLHF/RLAIF paradigm the policy network and reward model are not jointly trained together unlike in RAM.

%%%%%%%%%%%%%%%%%%%%%%%%%%%%%%%%%%%%%%%%%%%%%%%%%%%%%%%%%%%%%%%%%%%%%%%%%%%%%%%%%%%%
%%%%%%%%%%%%%%%%%%%%%%%%%%%%%%%%%%%%%%%%%%%%%%%%%%%%%%%%%%%%%%%%%%%%%%%%%%%%%%%%%%%%
% Experiments
%%%%%%%%%%%%%%%%%%%%%%%%%%%%%%%%%%%%%%%%%%%%%%%%%%%%%%%%%%%%%%%%%%%%%%%%%%%%%%%%%%%%
%%%%%%%%%%%%%%%%%%%%%%%%%%%%%%%%%%%%%%%%%%%%%%%%%%%%%%%%%%%%%%%%%%%%%%%%%%%%%%%%%%%%
\section{Experiments}
\label{sec:experiment}

There have been numerous successful practical applications of RAMs in the literature (e.g.,~\citet{sachan2021emdr2,izacard2022atlas}). 
Here, we present a brief empirical study for such models in order to corroborate the benefits predicted by our theoretical results.
In particular,
we consider the task of open-domain question answering and show that proposed objective is competitive to the objectives proposed in the literature and observe the trade-offs in model capacity between retriever and predictor model.

\textbf{Data}~
Our evaluation is based on two benchmark datasets: NQOpen~\cite{kwiatkowski2019natural} and TriviaQA~\cite{joshi2017triviaqa}, which serve as sources for supervised examples $(x,y)$, while chunked Wikipedia 2018 is used as the data-store  $\ICal$ following literature~\citep{Karpukhin:2020}.
Consistent with established practices, we employ the exact match metric to assess the correspondence between the predicted answers and the ground truth. Additionally, we introduce a recall metric to measure the frequency at which the answer string appears within the retrieved documents.

\textbf{Models}~
We implement the retriever component using GTR~\citep{ni2022large} and the predictor component using T5~\citep{raffel2020exploring}.
We sweep across small, base, and large configurations for both retriever and predictor. The details regarding the model sizes, expressed in terms of the number of parameters, are provided in Table~\ref{tab:params} (App.~\ref{app:expt}).

\begin{table*}[!htbp]
    \centering
    \scalebox{0.95}{
    \renewcommand{\arraystretch}{0.98}
    \begin{tabular}{@{}lccc ccc ccc ccc@{}}
    \toprule
     \multicolumn{1}{@{}l}{\multirow{2.5}{*}{Method}} &  & \multicolumn{3}{c}{small} &                      & \multicolumn{3}{c}{base} &                      & \multicolumn{3}{c}{large} \\ \cmidrule(lr){3-5} \cmidrule(lr){7-9} \cmidrule(l){11-13} 
    &  & small   & base   & large  &  & small   & base  & large  &  & small   & base   & large  \\ \midrule
\multicolumn{9}{@{}l}{No retriever, train predictor $\xi$} \\[1mm]
\Hquad Reverse Cross-Entropy & & & 17.9 & & & & 23.1 & & & & 28.0\\
\midrule
\multicolumn{9}{@{}l}{Fixed retriever $\theta_0$, train predictor $\xi$} \\[1mm]
\Hquad Reverse Cross-Entropy
 &  & 31.5  & 34.9  & 38.8  &  & 37.0  & 40.6  & 44.4  &  & 43.4  & 45.9  & 49.7 \\
\midrule
\multicolumn{9}{@{}l}{Fixed predictor $\xi^\star(\theta_0)$, train retriever $\theta$} \\[1mm]
\Hquad EMDR2 &  & 34.6  & 41.3  & 48.3  &  & 40.1  & 48.2  & 53.4  &  & 46.0  & 50.7  & 54.9 \\
\Hquad PDist &  & 45.7  & 53.3  & 57.2  &  & 50.8  & 53.2  & 61.6  &  & 53.5  & 55.4  & 62.3 \\
\Hquad Reverse Cross-Entropy + PG &  & 43.2  & 46.7  & 54.3  &  & 48.6  & 56.1  & 55.1  &  & 51.7  & 56.4  & 56.7 \\
\Hquad Reverse Cross-Entropy + TopK &  & 43.6  & 50.4  & 54.4  &  & 48.6  & 54.9  & 58.5  &  & 52.1  & 56.6  & 60.3 \\
\midrule
\multicolumn{9}{@{}l}{Jointly train predictor $\xi$ and retriever $\theta$} \\[1mm]
\Hquad EMDR2 &  & 37.0  & 43.1  & 49.7  &  & 42.4  & 50.5  & 55.6  &  & 47.1  & 53.4  & 59.2 \\
\Hquad PDist &  & 46.7  & 54.3  & 57.3  &  & 48.8  & 56.7  & 60.7  &  & 51.0  & 58.5  & 63.3 \\
\Hquad Reverse Cross-Entropy + PG &  & 47.0  & 52.9  & 55.7  &  & 49.9  & 57.6  & 61.1  &  & 52.1  & 59.8  & 59.2 \\
\Hquad Reverse Cross-Entropy + TopK &  & 46.8  & 52.9  & 56.0  &  & 49.2  & 56.6  & 60.1  &  & 52.3  & 58.8  & 62.4 \\
    \bottomrule
    \end{tabular}}
    \caption{\textbf{Exact match accuracy on TriviaQA}. We measure the performance of RAMs across various training paradigms and model sizes. Top row specifies the predictor size and the second row specifies the retriever size.}
    \label{tab:tqa_accuracy}
\end{table*}

\textbf{Methods}~
We compare following approaches: 1) utilizing no retriever, directly training predictor, 2) employing a fixed retriever, but training the predictor, 3) using a fixed predictor, but training the retriever, and 4) conducting joint training of both components.
For the joint training and the retriever training phases, we experiment with multiple objectives: EMDR2 (cf.~\eqref{eq:emdr2}), PDist (cf.~\eqref{eq:pdist}), Reverse Cross-Entropy + PG (cf.~\eqref{eq:ce-pg} in App.~\ref{sec:effcient}), and Reverse Cross-Entropy + TopK (cf.~\eqref{eq:ce-topk} in App.~\ref{sec:effcient}).
Efficiently implementing any of these objectives is challenging due to the need to compute the gradient with respect to expectation over the entire data-store.
We consider two approaches for computing the gradients approximately by: 1) restricting the expectation to top-K elements similar to EMDR2 and PDist; and 2) using REINFORCE~\citep{williams1992simple} to obtain an unbaised estimate. More details can be found in App.~\ref{sec:effcient}.

\textbf{Observation 1}~
The addition of a retrieval component markedly enhances performance, as demonstrated in Tables~\ref{tab:nq_accuracy} and \ref{tab:tqa_accuracy}, which present the exact match accuracy. Further improvements are observed when the retriever is specifically trained while keeping the predictor fixed. Joint training emerges as the most effective strategy.

\textbf{Observation 2}~
Tables~\ref{tab:nq_recall} and \ref{tab:tqa_recall} (App.~\ref{app:expt}) list the recall for the presence of the answer string within the retrieved content. PDist consistently achieves the highest recall, aligning with expectations given its design for distilling the retriever based on the predictor's scores. However, despite its superior recall, other objectives may lead to better overall performance than PDist, suggesting that different objectives optimize the retriever and predictor with varying efficiencies.

\textbf{Observation 3}~
Finally, in Table~\ref{tab:timing}, we report the query per second (QPS), as a proxy for computational cost, achieved by different configuration of retriever and predictor model sizes. 
For achieving a specific accuracy threshold (e.g., $\geq$38.8 on NQ), multiple configurations are viable, such as pairing a large predictor with a small retriever, a base model for both, or a small predictor with a large retriever. The associated query per second (QPS) rates for these configurations are 135, 333, and 800, respectively, illustrating that equivalent accuracy levels can be attained with significantly differing QPS rate.
This corroborates with our trade-offs in excess risk bounds for MLPs with different capacity in retriever and predictor components as illustrated in Figure~\ref{fig:my_label}.
Thus, adding capacity to different parts of the model has different repercussion on quality and computational cost.

%%%%%%%%%%%%%%%%%%%%%%%%%%%%%%%%%%%%%%%%%%%%%%%%%%%%%%%%%%%%%%%%%%%%%%%%%%%%%%%%%%%%
%%%%%%%%%%%%%%%%%%%%%%%%%%%%%%%%%%%%%%%%%%%%%%%%%%%%%%%%%%%%%%%%%%%%%%%%%%%%%%%%%%%%
% Discussion
%%%%%%%%%%%%%%%%%%%%%%%%%%%%%%%%%%%%%%%%%%%%%%%%%%%%%%%%%%%%%%%%%%%%%%%%%%%%%%%%%%%%
%%%%%%%%%%%%%%%%%%%%%%%%%%%%%%%%%%%%%%%%%%%%%%%%%%%%%%%%%%%%%%%%%%%%%%%%%%%%%%%%%%%%
% \vspace{-1mm}
\section{Discussion and related work}
\label{sec:rw}
% \vspace{-1mm}

\begin{table*}[!th]
    \centering
    \begin{tabular}{@{}cccc cccc cccc@{}}
    \toprule
      \multicolumn{3}{c}{small} & &
      \multicolumn{3}{c}{base}  & &
      \multicolumn{3}{c}{large} \\ 
      \cmidrule{1-3} \cmidrule(lr){5-7} \cmidrule(l){9-11} 
    small   & base   & large  &  & small   & base  & large  &  & small   & base   & large  \\ \midrule
    822.60  & 819.83  & 800.89  &  & 334.30  & 333.22  & 331.06  &  & 135.06  & 135.34  & 134.87 \\
    \bottomrule
    \end{tabular}
    \caption{\textbf{Query per second}. We measure the query per second processed by RAMs as a proxy for computational cost across various  model sizes. Top row specifies the predictor size and the second row specifies the retriever size.}
    \label{tab:timing}
\end{table*}

Several works have proposed some form of retrieval augmented models.
Here, we provide a brief account of the evolution of RAMs and discuss how our proposed joint-learning objective and the framework for excess risk analysis compare with existing end-to-end training methods.

\textbf{Augment with local neighborhood}~
The first approaches dating back to 1970s employed just augmenting training instance in the local neighborhood of the input space~\citep{stone1977consistent,stone1980optimal}.
Such approaches gained a lot of attention as parametric regression was not adequate in various practical applications of the time.
This line of work aims to fit a low-degree polynomial at each point in the data set based on a subset of data points,
which resulted in a rich literature on local polynomial regression in low dimensions
~\citep{katkovnik1979dynamic,cleveland1979robust,pinsker1980optimal,donoho1988automatic,ruppert1994local,ibragimov2013statistical}.
These classical ideas have found their application in many ML algorithms such as face recognition~\citep{jain2011online}, dimensionality reduction via local linear embeddings~\citep{roweis2000nonlinear}, domain adaptation~\citep{yang2021exploiting}, test time training on neighboring points~\citep{sun2020test,gandelsman2022test}, etc.
Recently,~\citet{basu23a} generalized this setup of augmenting with a local neighborhood of the input instance in the context of modern ML models like neural networks %and kernel methods 
and proposed a statistical framework to study such retrieval-based models. 
However, they \textit{do not} consider a learned or a specialized distance metric to find the augmenting set, which is critical for realizing good performance in practice~\citep{schonberger2017comparative,karpukhin2020dense} and studied in the present work. 

\textbf{Fixed retriever augmentation}~
Next generation retrieval augmented models started to deploy either a hand crafted or a learned retriever.
\citet{zhang2006svm} employed SIFT~\citep{lowe1999object} based retrieval followed by a SVM~\citep{cortes1995support} classifier to improve performance on multiple vision tasks.
\citet{chen2009similarity} studied generalization bounds for SVM-kNN methods -- one of the limited works in this domain with formal analysis.
For natural language understanding, 
methods like TF-IDF~\citep{sparck1972statistical} were employed in the tasks like case based reasoning~\citep{Leake1996AcquiringCA} and open-domain question answering (ODQA;~\citealt{voorhees1999trec}). Unlike many previous methods, one retrieves relevant text passages in ODQA settings as opposed to retrieving labelled training pairs.
With introduction of transformers~\citep{vaswani2017attention}, both retriever and predictor models based on encoder and decoder, respectively, have become popular across various domains, including image classification~\citep{long2022retrieval,iscen2023improving}, text classification~\citep{wang_more_valuable,zemlyanskiy-etal-2022-generate}, ODQA~\citep{lee-etal-2019-latent,izacard-grave-2021-leveraging}, language modelling~\citep{retro}, and even protein folding prediction~\citep{cramer2021alphafold2}. 
Even using the same transformer model
as both retriever and predictor boosts performance in language modeling~\citep{khandelwal2020knnlm}. 
Unlike SVM-kNN~\citep{chen2009similarity}, to best of our knowledge, a formal analysis of retrieval-augmented approaches with modern neural networks is missing from the literature. 
Interestingly, retrieving examples also helps in-context learning~\citep{rubin-etal-2022-learning, pmlr-v202-li23l}.
Our framework covers this scenario with $z$ representing the in-context examples retrieved from a data-store of examples.
Our risk bounds can provide insights into why in-context learning with \textit{retrieved} few-shot examples performs better than a zero-shot model.

\textbf{End-to-end trained retriever augmentation}~
For ODQA, \citet{guu2020realm} proposed maximizing the marginalized likelihood by considering the retrieved set as a latent variable. EMDR2~\citep{sachan2021emdr2} optimized the same objective by approximating it based on the retriever induced distribution on the elements that receive top-K scores by the retriever.
Hindsight~\citep{paranjape2022hindsight} instead optimizes the ELBO by introducing a variational distribution with access to the outputs.
VOD~\citep{lievin23a} further generalized the standard ELBO based on KL divergence by employing R\'{e}nyi divergence thereby tightening the lower bound. 
On the other hand, Atlas~\citep{izacard2022atlas} proposed an auxiliary loss for training the retriever directly 
rather than following the latent variable approach.
Interestingly, RAG~\citep{lewis2020retrieval} proposed to only train the query encoder for retriever, leaving the retrieval index fixed, thereby alleviating much of the end-to-end training difficulties of RAMs, but at cost of limiting model adaptation flexibility.
None of these prior works studied statistical properties vis-\`a-vis expressivity and generalization of RAMs. 

\section{Conclusion}
In this work, we initiate the development of a theoretical framework to study the statistical properties of RAMs with data-dependent retrieval. 
Our excess-risks analysis allows us to highlight how retriever and predictor components play complementary roles in reducing approximation error
as we increase their respective function class complexity. 
We surface both theoretically and empirically a Pareto surface achieving the same performance with different size predictors and retrievers.
As future work, it would be interesting to study the effect of dynamically updatable data-store and multi-step retrievals for making predictions.

\bibliographystyle{plainnat}
\bibliography{ref}

\begin{thebibliography}{80}
\providecommand{\natexlab}[1]{#1}
\providecommand{\url}[1]{\texttt{#1}}
\expandafter\ifx\csname urlstyle\endcsname\relax
  \providecommand{\doi}[1]{doi: #1}\else
  \providecommand{\doi}{doi: \begingroup \urlstyle{rm}\Url}\fi

\bibitem[Agarwal et~al.(2023)Agarwal, Vieillard, Stanczyk, Ramos, Geist, and
  Bachem]{agarwal2023gkd}
Rishabh Agarwal, Nino Vieillard, Piotr Stanczyk, Sabela Ramos, Matthieu Geist,
  and Olivier Bachem.
\newblock Gkd: Generalized knowledge distillation for auto-regressive sequence
  models.
\newblock \emph{arXiv preprint arXiv:2306.13649}, 2023.

\bibitem[Asai et~al.(2023)Asai, Wu, Wang, Sil, and Hajishirzi]{asai2023self}
Akari Asai, Zeqiu Wu, Yizhong Wang, Avirup Sil, and Hannaneh Hajishirzi.
\newblock Self-rag: Learning to retrieve, generate, and critique through
  self-reflection.
\newblock \emph{arXiv preprint arXiv:2310.11511}, 2023.

\bibitem[Austin et~al.(2021)Austin, Odena, Nye, Bosma, Michalewski, Dohan,
  Jiang, Cai, Terry, Le, et~al.]{austin2021program}
Jacob Austin, Augustus Odena, Maxwell Nye, Maarten Bosma, Henryk Michalewski,
  David Dohan, Ellen Jiang, Carrie Cai, Michael Terry, Quoc Le, et~al.
\newblock Program synthesis with large language models.
\newblock \emph{arXiv preprint arXiv:2108.07732}, 2021.

\bibitem[Bai et~al.(2022)Bai, Kadavath, Kundu, Askell, Kernion, Jones, Chen,
  Goldie, Mirhoseini, McKinnon, et~al.]{bai2022constitutional}
Yuntao Bai, Saurav Kadavath, Sandipan Kundu, Amanda Askell, Jackson Kernion,
  Andy Jones, Anna Chen, Anna Goldie, Azalia Mirhoseini, Cameron McKinnon,
  et~al.
\newblock Constitutional ai: Harmlessness from ai feedback.
\newblock \emph{arXiv preprint arXiv:2212.08073}, 2022.

\bibitem[Bartlett et~al.(2019)Bartlett, Harvey, Liaw, and
  Mehrabian]{bartlett2019nearly}
Peter~L Bartlett, Nick Harvey, Christopher Liaw, and Abbas Mehrabian.
\newblock Nearly-tight vc-dimension and pseudodimension bounds for piecewise
  linear neural networks.
\newblock \emph{The Journal of Machine Learning Research}, 20\penalty0
  (1):\penalty0 2285--2301, 2019.

\bibitem[Basu et~al.(2023)Basu, Rawat, and Zaheer]{basu23a}
Soumya Basu, Ankit~Singh Rawat, and Manzil Zaheer.
\newblock A statistical perspective on retrieval-based models.
\newblock In Andreas Krause, Emma Brunskill, Kyunghyun Cho, Barbara Engelhardt,
  Sivan Sabato, and Jonathan Scarlett, editors, \emph{Proceedings of the 40th
  International Conference on Machine Learning}, volume 202 of
  \emph{Proceedings of Machine Learning Research}, pages 1852--1886. PMLR,
  23--29 Jul 2023.
\newblock URL \url{https://proceedings.mlr.press/v202/basu23a.html}.

\bibitem[Borgeaud et~al.(2021)Borgeaud, Mensch, Hoffmann, Cai, Rutherford,
  Millican, van~den Driessche, Lespiau, Damoc, Clark, de~Las~Casas, Guy,
  Menick, Ring, Hennigan, Huang, Maggiore, Jones, Cassirer, Brock, Paganini,
  Irving, Vinyals, Osindero, Simonyan, Rae, Elsen, and Sifre]{retro}
Sebastian Borgeaud, Arthur Mensch, Jordan Hoffmann, Trevor Cai, Eliza
  Rutherford, Katie Millican, George van~den Driessche, Jean{-}Baptiste
  Lespiau, Bogdan Damoc, Aidan Clark, Diego de~Las~Casas, Aurelia Guy, Jacob
  Menick, Roman Ring, Tom Hennigan, Saffron Huang, Loren Maggiore, Chris Jones,
  Albin Cassirer, Andy Brock, Michela Paganini, Geoffrey Irving, Oriol Vinyals,
  Simon Osindero, Karen Simonyan, Jack~W. Rae, Erich Elsen, and Laurent Sifre.
\newblock Improving language models by retrieving from trillions of tokens.
\newblock \emph{CoRR}, abs/2112.04426, 2021.

\bibitem[Brown et~al.(2020)Brown, Mann, Ryder, Subbiah, Kaplan, Dhariwal,
  Neelakantan, Shyam, Sastry, Askell, Agarwal, Herbert-Voss, Krueger, Henighan,
  Child, Ramesh, Ziegler, Wu, Winter, Hesse, Chen, Sigler, Litwin, Gray, Chess,
  Clark, Berner, McCandlish, Radford, Sutskever, and Amodei]{brown2020language}
Tom~B. Brown, Benjamin Mann, Nick Ryder, Melanie Subbiah, Jared Kaplan,
  Prafulla Dhariwal, Arvind Neelakantan, Pranav Shyam, Girish Sastry, Amanda
  Askell, Sandhini Agarwal, Ariel Herbert-Voss, Gretchen Krueger, Tom Henighan,
  Rewon Child, Aditya Ramesh, Daniel~M. Ziegler, Jeffrey Wu, Clemens Winter,
  Christopher Hesse, Mark Chen, Eric Sigler, Mateusz Litwin, Scott Gray,
  Benjamin Chess, Jack Clark, Christopher Berner, Sam McCandlish, Alec Radford,
  Ilya Sutskever, and Dario Amodei.
\newblock Language models are few-shot learners, 2020.

\bibitem[Burda et~al.(2015)Burda, Grosse, and
  Salakhutdinov]{burda2015importance}
Yuri Burda, Roger Grosse, and Ruslan Salakhutdinov.
\newblock Importance weighted autoencoders.
\newblock \emph{arXiv preprint arXiv:1509.00519}, 2015.

\bibitem[Chen et~al.(2009)Chen, Garcia, Gupta, Rahimi, and
  Cazzanti]{chen2009similarity}
Yihua Chen, Eric~K Garcia, Maya~R Gupta, Ali Rahimi, and Luca Cazzanti.
\newblock Similarity-based classification: Concepts and algorithms.
\newblock \emph{Journal of Machine Learning Research}, 10\penalty0 (3), 2009.

\bibitem[Chowdhery et~al.(2022)Chowdhery, Narang, Devlin, Bosma, Mishra,
  Roberts, Barham, Chung, Sutton, Gehrmann, et~al.]{chowdhery2022palm}
Aakanksha Chowdhery, Sharan Narang, Jacob Devlin, Maarten Bosma, Gaurav Mishra,
  Adam Roberts, Paul Barham, Hyung~Won Chung, Charles Sutton, Sebastian
  Gehrmann, et~al.
\newblock Palm: Scaling language modeling with pathways.
\newblock \emph{arXiv preprint arXiv:2204.02311}, 2022.

\bibitem[Cleveland(1979)]{cleveland1979robust}
William~S Cleveland.
\newblock Robust locally weighted regression and smoothing scatterplots.
\newblock \emph{Journal of the American statistical association}, 74\penalty0
  (368):\penalty0 829--836, 1979.

\bibitem[Cortes and Vapnik(1995)]{cortes1995support}
Corinna Cortes and Vladimir Vapnik.
\newblock Support-vector networks.
\newblock \emph{Machine learning}, 20\penalty0 (3):\penalty0 273--297, 1995.

\bibitem[Cramer(2021)]{cramer2021alphafold2}
Patrick Cramer.
\newblock Alphafold2 and the future of structural biology.
\newblock \emph{Nature Structural \& Molecular Biology}, 28\penalty0
  (9):\penalty0 704--705, 2021.

\bibitem[Das et~al.(2021)Das, Zaheer, Thai, Godbole, Perez, Lee, Tan,
  Polymenakos, and McCallum]{das2021cbr}
Rajarshi Das, Manzil Zaheer, Dung Thai, Ameya Godbole, Ethan Perez, Jay~Yoon
  Lee, Lizhen Tan, Lazaros Polymenakos, and Andrew McCallum.
\newblock Case-based reasoning for natural language queries over knowledge
  bases.
\newblock In \emph{Proceedings of the 2021 Conference on Empirical Methods in
  Natural Language Processing}, pages 9594--9611, Online and Punta Cana,
  Dominican Republic, November 2021. Association for Computational Linguistics.
\newblock \doi{10.18653/v1/2021.emnlp-main.755}.

\bibitem[Dehghani et~al.(2023)Dehghani, Djolonga, Mustafa, Padlewski, Heek,
  Gilmer, Steiner, Caron, Geirhos, Alabdulmohsin, et~al.]{dehghani2023scaling}
Mostafa Dehghani, Josip Djolonga, Basil Mustafa, Piotr Padlewski, Jonathan
  Heek, Justin Gilmer, Andreas~Peter Steiner, Mathilde Caron, Robert Geirhos,
  Ibrahim Alabdulmohsin, et~al.
\newblock Scaling vision transformers to 22 billion parameters.
\newblock In \emph{International Conference on Machine Learning}, pages
  7480--7512. PMLR, 2023.

\bibitem[Donoho and Liu(1988)]{donoho1988automatic}
David~L Donoho and Richard~C Liu.
\newblock The" automatic" robustness of minimum distance functionals.
\newblock \emph{The Annals of Statistics}, 16\penalty0 (2):\penalty0 552--586,
  1988.

\bibitem[Dosovitskiy et~al.(2021)Dosovitskiy, Beyer, Kolesnikov, Weissenborn,
  Zhai, Unterthiner, Dehghani, Minderer, Heigold, Gelly, Uszkoreit, and
  Houlsby]{dosovitskiy2021an}
Alexey Dosovitskiy, Lucas Beyer, Alexander Kolesnikov, Dirk Weissenborn,
  Xiaohua Zhai, Thomas Unterthiner, Mostafa Dehghani, Matthias Minderer, Georg
  Heigold, Sylvain Gelly, Jakob Uszkoreit, and Neil Houlsby.
\newblock An image is worth 16x16 words: Transformers for image recognition at
  scale.
\newblock In \emph{International Conference on Learning Representations}, 2021.

\bibitem[Epasto et~al.(2020)Epasto, Mahdian, Mirrokni, and
  Zampetakis]{epasto2020optimal}
Alessandro Epasto, Mohammad Mahdian, Vahab Mirrokni, and Emmanouil Zampetakis.
\newblock Optimal approximation-smoothness tradeoffs for soft-max functions.
\newblock \emph{Advances in Neural Information Processing Systems},
  33:\penalty0 2651--2660, 2020.

\bibitem[Gandelsman et~al.(2022)Gandelsman, Sun, Chen, and
  Efros]{gandelsman2022test}
Yossi Gandelsman, Yu~Sun, Xinlei Chen, and Alexei Efros.
\newblock Test-time training with masked autoencoders.
\newblock \emph{Advances in Neural Information Processing Systems},
  35:\penalty0 29374--29385, 2022.

\bibitem[Grathwohl et~al.(2021)Grathwohl, Swersky, Hashemi, Duvenaud, and
  Maddison]{grathwohl2021oops}
Will Grathwohl, Kevin Swersky, Milad Hashemi, David Duvenaud, and Chris
  Maddison.
\newblock Oops i took a gradient: Scalable sampling for discrete distributions.
\newblock In \emph{International Conference on Machine Learning}, pages
  3831--3841. PMLR, 2021.

\bibitem[Gu et~al.(2023)Gu, Dong, Wei, and Huang]{gu2023knowledge}
Yuxian Gu, Li~Dong, Furu Wei, and Minlie Huang.
\newblock Knowledge distillation of large language models.
\newblock \emph{arXiv preprint arXiv:2306.08543}, 2023.

\bibitem[Guu et~al.(2020)Guu, Lee, Tung, Pasupat, and Chang]{guu2020realm}
Kelvin Guu, Kenton Lee, Zora Tung, Panupong Pasupat, and Ming-Wei Chang.
\newblock Realm: Retrieval-augmented language model pre-training.
\newblock In \emph{Proceedings of the 37th International Conference on Machine
  Learning}, ICML'20. JMLR.org, 2020.

\bibitem[henrikl
  (https://math.stackexchange.com/users/351007/henrikl)(2021)]{4170855}
henrikl (https://math.stackexchange.com/users/351007/henrikl).
\newblock 1-smoothness of the symmetric softmax function.
\newblock Mathematics Stack Exchange, 2021.
\newblock URL \url{https://math.stackexchange.com/q/4170855}.
\newblock URL:https://math.stackexchange.com/q/4170855 (version: 2021-06-12).

\bibitem[Husz{\'a}r(2015)]{huszar2015not}
Ferenc Husz{\'a}r.
\newblock How (not) to train your generative model: Scheduled sampling,
  likelihood, adversary?
\newblock \emph{arXiv preprint arXiv:1511.05101}, 2015.

\bibitem[Ibragimov and Has~Minskii(2013)]{ibragimov2013statistical}
Ildar~Abdulovich Ibragimov and Rafail~Zalmanovich Has~Minskii.
\newblock \emph{Statistical estimation: asymptotic theory}, volume~16.
\newblock Springer Science \& Business Media, 2013.

\bibitem[Iscen et~al.(2023)Iscen, Fathi, and Schmid]{iscen2023improving}
Ahmet Iscen, Alireza Fathi, and Cordelia Schmid.
\newblock Improving image recognition by retrieving from web-scale image-text
  data.
\newblock In \emph{Proceedings of the IEEE/CVF Conference on Computer Vision
  and Pattern Recognition}, pages 19295--19304, 2023.

\bibitem[Izacard and Grave(2021)]{izacard-grave-2021-leveraging}
Gautier Izacard and Edouard Grave.
\newblock Leveraging passage retrieval with generative models for open domain
  question answering.
\newblock In \emph{Proceedings of the 16th Conference of the European Chapter
  of the Association for Computational Linguistics: Main Volume}, pages
  874--880, Online, April 2021. Association for Computational Linguistics.
\newblock \doi{10.18653/v1/2021.eacl-main.74}.
\newblock URL \url{https://aclanthology.org/2021.eacl-main.74}.

\bibitem[Izacard et~al.(2022)Izacard, Lewis, Lomeli, Hosseini, Petroni, Schick,
  Dwivedi-Yu, Joulin, Riedel, and Grave]{izacard2022atlas}
Gautier Izacard, Patrick Lewis, Maria Lomeli, Lucas Hosseini, Fabio Petroni,
  Timo Schick, Jane Dwivedi-Yu, Armand Joulin, Sebastian Riedel, and Edouard
  Grave.
\newblock Few-shot learning with retrieval augmented language models.
\newblock \emph{arXiv preprint arXiv:2208.03299}, 2022.

\bibitem[Jain and Learned-Miller(2011)]{jain2011online}
Vidit Jain and Erik Learned-Miller.
\newblock Online domain adaptation of a pre-trained cascade of classifiers.
\newblock In \emph{CVPR 2011}, pages 577--584. IEEE, 2011.

\bibitem[Joshi et~al.(2017)Joshi, Choi, Weld, and
  Zettlemoyer]{joshi2017triviaqa}
Mandar Joshi, Eunsol Choi, Daniel~S Weld, and Luke Zettlemoyer.
\newblock Triviaqa: A large scale distantly supervised challenge dataset for
  reading comprehension.
\newblock In \emph{Proceedings of the 55th Annual Meeting of the Association
  for Computational Linguistics (Volume 1: Long Papers)}, pages 1601--1611,
  2017.

\bibitem[Karpukhin et~al.(2020{\natexlab{a}})Karpukhin, Oguz, Min, Lewis, Wu,
  Edunov, Chen, and Yih]{Karpukhin:2020}
Vladimir Karpukhin, Barlas Oguz, Sewon Min, Patrick Lewis, Ledell Wu, Sergey
  Edunov, Danqi Chen, and Wen-tau Yih.
\newblock Dense passage retrieval for open-domain question answering.
\newblock In \emph{Proceedings of the 2020 Conference on Empirical Methods in
  Natural Language Processing (EMNLP)}, pages 6769--6781, Online, November
  2020{\natexlab{a}}. Association for Computational Linguistics.

\bibitem[Karpukhin et~al.(2020{\natexlab{b}})Karpukhin, O{\u{g}}uz, Min, Lewis,
  Wu, Edunov, Chen, and Yih]{karpukhin2020dense}
Vladimir Karpukhin, Barlas O{\u{g}}uz, Sewon Min, Patrick Lewis, Ledell Wu,
  Sergey Edunov, Danqi Chen, and Wen-tau Yih.
\newblock Dense passage retrieval for open-domain question answering.
\newblock \emph{arXiv preprint arXiv:2004.04906}, 2020{\natexlab{b}}.

\bibitem[Katkovnik and Kheisin(1979)]{katkovnik1979dynamic}
Vladimir~Yakovlevich Katkovnik and VE~Kheisin.
\newblock Dynamic stochastic approximation of polynomials drifts.
\newblock \emph{Avtomatika i Telemekhanika}, pages 89--98, 1979.

\bibitem[Khandelwal et~al.(2020)Khandelwal, Levy, Jurafsky, Zettlemoyer, and
  Lewis]{khandelwal2020knnlm}
Urvashi Khandelwal, Omer Levy, Dan Jurafsky, Luke Zettlemoyer, and Mike Lewis.
\newblock Generalization through memorization: Nearest neighbor language
  models.
\newblock In \emph{International Conference on Learning Representations}, 2020.

\bibitem[Kwiatkowski et~al.(2019)Kwiatkowski, Palomaki, Redfield, Collins,
  Parikh, Alberti, Epstein, Polosukhin, Devlin, Lee,
  et~al.]{kwiatkowski2019natural}
Tom Kwiatkowski, Jennimaria Palomaki, Olivia Redfield, Michael Collins, Ankur
  Parikh, Chris Alberti, Danielle Epstein, Illia Polosukhin, Jacob Devlin,
  Kenton Lee, et~al.
\newblock Natural questions: a benchmark for question answering research.
\newblock \emph{Transactions of the Association for Computational Linguistics},
  7:\penalty0 453--466, 2019.

\bibitem[Leake et~al.(1996)Leake, Kinley, and Wilson]{Leake1996AcquiringCA}
David~B. Leake, Andrew Kinley, and David~C. Wilson.
\newblock Acquiring case adaptation knowledge: A hybrid approach.
\newblock In \emph{AAAI/IAAI, Vol. 1}, 1996.
\newblock URL \url{https://api.semanticscholar.org/CorpusID:11169287}.

\bibitem[Lee et~al.(2019)Lee, Chang, and Toutanova]{lee-etal-2019-latent}
Kenton Lee, Ming-Wei Chang, and Kristina Toutanova.
\newblock Latent retrieval for weakly supervised open domain question
  answering.
\newblock In \emph{Proceedings of the 57th Annual Meeting of the Association
  for Computational Linguistics}, pages 6086--6096, Florence, Italy, July 2019.
  Association for Computational Linguistics.
\newblock \doi{10.18653/v1/P19-1612}.
\newblock URL \url{https://aclanthology.org/P19-1612}.

\bibitem[Lewis et~al.(2020)Lewis, Perez, Piktus, Petroni, Karpukhin, Goyal,
  K{\"u}ttler, Lewis, Yih, Rockt{\"a}schel, et~al.]{lewis2020retrieval}
Patrick Lewis, Ethan Perez, Aleksandra Piktus, Fabio Petroni, Vladimir
  Karpukhin, Naman Goyal, Heinrich K{\"u}ttler, Mike Lewis, Wen-tau Yih, Tim
  Rockt{\"a}schel, et~al.
\newblock Retrieval-augmented generation for knowledge-intensive nlp tasks.
\newblock \emph{Advances in Neural Information Processing Systems},
  33:\penalty0 9459--9474, 2020.

\bibitem[Lewkowycz et~al.(2022)Lewkowycz, Andreassen, Dohan, Dyer, Michalewski,
  Ramasesh, Slone, Anil, Schlag, Gutman-Solo, et~al.]{lewkowycz2022solving}
Aitor Lewkowycz, Anders Andreassen, David Dohan, Ethan Dyer, Henryk
  Michalewski, Vinay Ramasesh, Ambrose Slone, Cem Anil, Imanol Schlag, Theo
  Gutman-Solo, et~al.
\newblock Solving quantitative reasoning problems with language models.
\newblock \emph{Advances in Neural Information Processing Systems},
  35:\penalty0 3843--3857, 2022.

\bibitem[Li et~al.(2023)Li, Ildiz, Papailiopoulos, and Oymak]{pmlr-v202-li23l}
Yingcong Li, Muhammed~Emrullah Ildiz, Dimitris Papailiopoulos, and Samet Oymak.
\newblock Transformers as algorithms: Generalization and stability in
  in-context learning.
\newblock In Andreas Krause, Emma Brunskill, Kyunghyun Cho, Barbara Engelhardt,
  Sivan Sabato, and Jonathan Scarlett, editors, \emph{Proceedings of the 40th
  International Conference on Machine Learning}, volume 202 of
  \emph{Proceedings of Machine Learning Research}, pages 19565--19594. PMLR,
  23--29 Jul 2023.
\newblock URL \url{https://proceedings.mlr.press/v202/li23l.html}.

\bibitem[Li\'{e}vin et~al.(2023)Li\'{e}vin, Motzfeldt, Jensen, and
  Winther]{lievin23a}
Valentin Li\'{e}vin, Andreas~Geert Motzfeldt, Ida~Riis Jensen, and Ole Winther.
\newblock Variational open-domain question answering.
\newblock In Andreas Krause, Emma Brunskill, Kyunghyun Cho, Barbara Engelhardt,
  Sivan Sabato, and Jonathan Scarlett, editors, \emph{Proceedings of the 40th
  International Conference on Machine Learning}, volume 202 of
  \emph{Proceedings of Machine Learning Research}, pages 20950--20977. PMLR,
  23--29 Jul 2023.

\bibitem[Lin et~al.(2023)Lin, Chen, Chen, Shi, Lomeli, James, Rodriguez, Kahn,
  Szilvasy, Lewis, et~al.]{lin2023ra}
Xi~Victoria Lin, Xilun Chen, Mingda Chen, Weijia Shi, Maria Lomeli, Rich James,
  Pedro Rodriguez, Jacob Kahn, Gergely Szilvasy, Mike Lewis, et~al.
\newblock Ra-dit: Retrieval-augmented dual instruction tuning.
\newblock \emph{arXiv preprint arXiv:2310.01352}, 2023.

\bibitem[Liska et~al.(2022)Liska, Kocisky, Gribovskaya, Terzi, Sezener,
  Agrawal, De~Masson~D'Autume, Scholtes, Zaheer, Young, Gilsenan-Mcmahon,
  Austin, Blunsom, and Lazaridou]{pmlr-v162-liska22a}
Adam Liska, Tomas Kocisky, Elena Gribovskaya, Tayfun Terzi, Eren Sezener,
  Devang Agrawal, Cyprien De~Masson~D'Autume, Tim Scholtes, Manzil Zaheer,
  Susannah Young, Ellen Gilsenan-Mcmahon, Sophia Austin, Phil Blunsom, and
  Angeliki Lazaridou.
\newblock {S}treaming{QA}: A benchmark for adaptation to new knowledge over
  time in question answering models.
\newblock In Kamalika Chaudhuri, Stefanie Jegelka, Le~Song, Csaba Szepesvari,
  Gang Niu, and Sivan Sabato, editors, \emph{Proceedings of the 39th
  International Conference on Machine Learning}, volume 162 of
  \emph{Proceedings of Machine Learning Research}, pages 13604--13622. PMLR,
  17--23 Jul 2022.
\newblock URL \url{https://proceedings.mlr.press/v162/liska22a.html}.

\bibitem[Long et~al.(2022)Long, Yin, Ajanthan, Nguyen, Purkait, Garg, Blair,
  Shen, and van~den Hengel]{long2022retrieval}
Alexander Long, Wei Yin, Thalaiyasingam Ajanthan, Vu~Nguyen, Pulak Purkait,
  Ravi Garg, Alan Blair, Chunhua Shen, and Anton van~den Hengel.
\newblock Retrieval augmented classification for long-tail visual recognition.
\newblock In \emph{Proceedings of the IEEE/CVF conference on computer vision
  and pattern recognition}, pages 6959--6969, 2022.

\bibitem[Lowe(1999)]{lowe1999object}
David~G Lowe.
\newblock Object recognition from local scale-invariant features.
\newblock In \emph{Proceedings of the seventh IEEE international conference on
  computer vision}, volume~2, pages 1150--1157. Ieee, 1999.

\bibitem[McSherry and Talwar(2007)]{mcsherry2007mechanism}
Frank McSherry and Kunal Talwar.
\newblock Mechanism design via differential privacy.
\newblock In \emph{48th Annual IEEE Symposium on Foundations of Computer
  Science (FOCS'07)}, pages 94--103. IEEE, 2007.

\bibitem[Meinhardt et~al.(2022)Meinhardt, Kirillov, Leal-Taixe, and
  Feichtenhofer]{meinhardt2022trackformer}
Tim Meinhardt, Alexander Kirillov, Laura Leal-Taixe, and Christoph
  Feichtenhofer.
\newblock Trackformer: Multi-object tracking with transformers.
\newblock In \emph{Proceedings of the IEEE/CVF conference on computer vision
  and pattern recognition}, pages 8844--8854, 2022.

\bibitem[Ni et~al.(2022)Ni, Qu, Lu, Dai, Abrego, Ma, Zhao, Luan, Hall, Chang,
  et~al.]{ni2022large}
Jianmo Ni, Chen Qu, Jing Lu, Zhuyun Dai, Gustavo~Hernandez Abrego, Ji~Ma,
  Vincent Zhao, Yi~Luan, Keith Hall, Ming-Wei Chang, et~al.
\newblock Large dual encoders are generalizable retrievers.
\newblock In \emph{Proceedings of the 2022 Conference on Empirical Methods in
  Natural Language Processing}, pages 9844--9855, 2022.

\bibitem[OpenAI(2023)]{openai2023gpt4tr}
OpenAI.
\newblock Gpt-4 technical report.
\newblock \emph{ArXiv}, abs/2303.08774, 2023.
\newblock URL \url{https://api.semanticscholar.org/CorpusID:257532815}.

\bibitem[Paranjape et~al.(2022)Paranjape, Khattab, Potts, Zaharia, and
  Manning]{paranjape2022hindsight}
Ashwin Paranjape, Omar Khattab, Christopher Potts, Matei Zaharia, and
  Christopher~D Manning.
\newblock Hindsight: Posterior-guided training of retrievers for improved
  open-ended generation.
\newblock In \emph{International Conference on Learning Representations}, 2022.
\newblock URL \url{https://openreview.net/forum?id=Vr_BTpw3wz}.

\bibitem[Pinsker(1980)]{pinsker1980optimal}
Mark~Semenovich Pinsker.
\newblock Optimal filtering of square-integrable signals in gaussian noise.
\newblock \emph{Problemy Peredachi Informatsii}, 16\penalty0 (2):\penalty0
  52--68, 1980.

\bibitem[Raffel et~al.(2020)Raffel, Shazeer, Roberts, Lee, Narang, Matena,
  Zhou, Li, and Liu]{raffel2020exploring}
Colin Raffel, Noam Shazeer, Adam Roberts, Katherine Lee, Sharan Narang, Michael
  Matena, Yanqi Zhou, Wei Li, and Peter~J Liu.
\newblock Exploring the limits of transfer learning with a unified text-to-text
  transformer.
\newblock \emph{The Journal of Machine Learning Research}, 21\penalty0
  (1):\penalty0 5485--5551, 2020.

\bibitem[Roweis and Saul(2000)]{roweis2000nonlinear}
Sam~T Roweis and Lawrence~K Saul.
\newblock Nonlinear dimensionality reduction by locally linear embedding.
\newblock \emph{science}, 290\penalty0 (5500):\penalty0 2323--2326, 2000.

\bibitem[Rubin et~al.(2022)Rubin, Herzig, and Berant]{rubin-etal-2022-learning}
Ohad Rubin, Jonathan Herzig, and Jonathan Berant.
\newblock Learning to retrieve prompts for in-context learning.
\newblock In \emph{Proceedings of the 2022 Conference of the North American
  Chapter of the Association for Computational Linguistics: Human Language
  Technologies}, pages 2655--2671, Seattle, United States, July 2022.
  Association for Computational Linguistics.
\newblock \doi{10.18653/v1/2022.naacl-main.191}.
\newblock URL \url{https://aclanthology.org/2022.naacl-main.191}.

\bibitem[Ruppert and Wand(1994)]{ruppert1994local}
David Ruppert and Matthew~P Wand.
\newblock Multivariate locally weighted least squares regression.
\newblock \emph{The annals of statistics}, pages 1346--1370, 1994.

\bibitem[Sachan et~al.(2021)Sachan, Reddy, Hamilton, Dyer, and
  Yogatama]{sachan2021emdr2}
Devendra~Singh Sachan, Siva Reddy, William~L. Hamilton, Chris Dyer, and Dani
  Yogatama.
\newblock End-to-end training of multi-document reader and retriever for
  open-domain question answering.
\newblock In A.~Beygelzimer, Y.~Dauphin, P.~Liang, and J.~Wortman Vaughan,
  editors, \emph{Advances in Neural Information Processing Systems}, 2021.
\newblock URL \url{https://openreview.net/forum?id=5KWmB6JePx}.

\bibitem[Sachan et~al.(2023)Sachan, Lewis, Yogatama, Zettlemoyer, Pineau, and
  Zaheer]{sachan2023questions}
Devendra~Singh Sachan, Mike Lewis, Dani Yogatama, Luke Zettlemoyer, Joelle
  Pineau, and Manzil Zaheer.
\newblock Questions are all you need to train a dense passage retriever.
\newblock \emph{Transactions of the Association for Computational Linguistics},
  11:\penalty0 600--616, 2023.

\bibitem[Schonberger et~al.(2017)Schonberger, Hardmeier, Sattler, and
  Pollefeys]{schonberger2017comparative}
Johannes~L Schonberger, Hans Hardmeier, Torsten Sattler, and Marc Pollefeys.
\newblock Comparative evaluation of hand-crafted and learned local features.
\newblock In \emph{Proceedings of the IEEE conference on computer vision and
  pattern recognition}, pages 1482--1491, 2017.

\bibitem[Shalev-Shwartz and Ben-David(2014)]{shalev2014understanding}
Shai Shalev-Shwartz and Shai Ben-David.
\newblock \emph{Understanding machine learning: From theory to algorithms}.
\newblock Cambridge university press, 2014.

\bibitem[Shuster et~al.(2021)Shuster, Poff, Chen, Kiela, and
  Weston]{shuster2021retrieval}
Kurt Shuster, Spencer Poff, Moya Chen, Douwe Kiela, and Jason Weston.
\newblock Retrieval augmentation reduces hallucination in conversation.
\newblock \emph{arXiv preprint arXiv:2104.07567}, 2021.

\bibitem[Siegel(2023)]{siegel2023optimal}
Jonathan~W Siegel.
\newblock Optimal approximation rates for deep relu neural networks on sobolev
  and besov spaces.
\newblock \emph{Journal of Machine Learning Research}, 24\penalty0
  (357):\penalty0 1--52, 2023.

\bibitem[Singhal et~al.(2023)Singhal, Azizi, Tu, Mahdavi, Wei, Chung, Scales,
  Tanwani, Cole-Lewis, Pfohl, et~al.]{singhal2023large}
Karan Singhal, Shekoofeh Azizi, Tao Tu, S~Sara Mahdavi, Jason Wei, Hyung~Won
  Chung, Nathan Scales, Ajay Tanwani, Heather Cole-Lewis, Stephen Pfohl, et~al.
\newblock Large language models encode clinical knowledge.
\newblock \emph{Nature}, pages 1--9, 2023.

\bibitem[Sparck~Jones(1972)]{sparck1972statistical}
Karen Sparck~Jones.
\newblock A statistical interpretation of term specificity and its application
  in retrieval.
\newblock \emph{Journal of documentation}, 28\penalty0 (1):\penalty0 11--21,
  1972.

\bibitem[Stone(1977)]{stone1977consistent}
Charles~J Stone.
\newblock Consistent nonparametric regression.
\newblock \emph{The annals of statistics}, pages 595--620, 1977.

\bibitem[Stone(1980)]{stone1980optimal}
Charles~J Stone.
\newblock Optimal rates of convergence for nonparametric estimators.
\newblock \emph{The annals of Statistics}, pages 1348--1360, 1980.

\bibitem[Sun et~al.(2020)Sun, Wang, Liu, Miller, Efros, and Hardt]{sun2020test}
Yu~Sun, Xiaolong Wang, Zhuang Liu, John Miller, Alexei Efros, and Moritz Hardt.
\newblock Test-time training with self-supervision for generalization under
  distribution shifts.
\newblock In \emph{International conference on machine learning}, pages
  9229--9248. PMLR, 2020.

\bibitem[Sutton and Barto(2018)]{sutton2018reinforcement}
Richard~S Sutton and Andrew~G Barto.
\newblock \emph{Reinforcement learning: An introduction}.
\newblock MIT press, 2018.

\bibitem[Thai et~al.(2023)Thai, Agarwal, Chaudhary, Das, Zaheer, Lee,
  Hajishirzi, and McCallum]{thai2023machine}
Dung Thai, Dhruv Agarwal, Mudit Chaudhary, Rajarshi Das, Manzil Zaheer,
  Jay-Yoon Lee, Hannaneh Hajishirzi, and Andrew McCallum.
\newblock Machine reading comprehension using case-based reasoning.
\newblock \emph{arXiv preprint arXiv:2305.14815}, 2023.

\bibitem[Touvron et~al.(2023)Touvron, Martin, Stone, Albert, Almahairi, Babaei,
  Bashlykov, Batra, Bhargava, Bhosale, et~al.]{touvron2023llama}
Hugo Touvron, Louis Martin, Kevin Stone, Peter Albert, Amjad Almahairi, Yasmine
  Babaei, Nikolay Bashlykov, Soumya Batra, Prajjwal Bhargava, Shruti Bhosale,
  et~al.
\newblock Llama 2: Open foundation and fine-tuned chat models.
\newblock \emph{arXiv preprint arXiv:2307.09288}, 2023.

\bibitem[Vaswani et~al.(2017)Vaswani, Shazeer, Parmar, Uszkoreit, Jones, Gomez,
  Kaiser, and Polosukhin]{vaswani2017attention}
Ashish Vaswani, Noam Shazeer, Niki Parmar, Jakob Uszkoreit, Llion Jones,
  Aidan~N Gomez, {\L}ukasz Kaiser, and Illia Polosukhin.
\newblock Attention is all you need.
\newblock \emph{Advances in neural information processing systems}, 30, 2017.

\bibitem[Voorhees et~al.(1999)]{voorhees1999trec}
Ellen~M Voorhees et~al.
\newblock The trec-8 question answering track report.
\newblock In \emph{Trec}, volume~99, pages 77--82, 1999.

\bibitem[Wang et~al.(2022)Wang, Xu, Fang, Liu, Sun, Xu, Zhu, and
  Zeng]{wang_more_valuable}
Shuohang Wang, Yichong Xu, Yuwei Fang, Yang Liu, Siqi Sun, Ruochen Xu,
  Chenguang Zhu, and Michael Zeng.
\newblock Training data is more valuable than you think: A simple and effective
  method by retrieving from training data.
\newblock In \emph{Proceedings of the 60th Annual Meeting of the Association
  for Computational Linguistics (Volume 1: Long Papers)}, pages 3170--3179,
  Dublin, Ireland, May 2022. Association for Computational Linguistics.
\newblock \doi{10.18653/v1/2022.acl-long.226}.
\newblock URL \url{https://aclanthology.org/2022.acl-long.226}.

\bibitem[Williams(1992)]{williams1992simple}
Ronald~J Williams.
\newblock Simple statistical gradient-following algorithms for connectionist
  reinforcement learning.
\newblock \emph{Machine learning}, 8:\penalty0 229--256, 1992.

\bibitem[Yang et~al.(2021)Yang, van~de Weijer, Herranz, Jui,
  et~al.]{yang2021exploiting}
Shiqi Yang, Joost van~de Weijer, Luis Herranz, Shangling Jui, et~al.
\newblock Exploiting the intrinsic neighborhood structure for source-free
  domain adaptation.
\newblock \emph{Advances in neural information processing systems},
  34:\penalty0 29393--29405, 2021.

\bibitem[Yarotsky(2017)]{yarotsky2017error}
Dmitry Yarotsky.
\newblock Error bounds for approximations with deep relu networks.
\newblock \emph{Neural Networks}, 94:\penalty0 103--114, 2017.

\bibitem[Zemlyanskiy et~al.(2022)Zemlyanskiy, de~Jong, Ainslie, Pasupat, Shaw,
  Qiu, Sanghai, and Sha]{zemlyanskiy-etal-2022-generate}
Yury Zemlyanskiy, Michiel de~Jong, Joshua Ainslie, Panupong Pasupat, Peter
  Shaw, Linlu Qiu, Sumit Sanghai, and Fei Sha.
\newblock Generate-and-retrieve: Use your predictions to improve retrieval for
  semantic parsing.
\newblock In \emph{Proceedings of the 29th International Conference on
  Computational Linguistics}, pages 4946--4951, Gyeongju, Republic of Korea,
  October 2022. International Committee on Computational Linguistics.
\newblock URL \url{https://aclanthology.org/2022.coling-1.438}.

\bibitem[Zhang et~al.(2006)Zhang, Berg, Maire, and Malik]{zhang2006svm}
Hao Zhang, Alexander~C Berg, Michael Maire, and Jitendra Malik.
\newblock Svm-knn: Discriminative nearest neighbor classification for visual
  category recognition.
\newblock In \emph{2006 IEEE Computer Society Conference on Computer Vision and
  Pattern Recognition (CVPR'06)}, volume~2, pages 2126--2136. IEEE, 2006.

\bibitem[Zhang(2023)]{zhang2023mathematical}
Tong Zhang.
\newblock \emph{Mathematical analysis of machine learning algorithms}.
\newblock Cambridge University Press, 2023.

\bibitem[Ziegler et~al.(2019)Ziegler, Stiennon, Wu, Brown, Radford, Amodei,
  Christiano, and Irving]{ziegler2019fine}
Daniel~M Ziegler, Nisan Stiennon, Jeffrey Wu, Tom~B Brown, Alec Radford, Dario
  Amodei, Paul Christiano, and Geoffrey Irving.
\newblock Fine-tuning language models from human preferences.
\newblock \emph{arXiv preprint arXiv:1909.08593}, 2019.

\end{thebibliography}

% \onecolumn
% % {\color{red} KEEPING ONE COLUMN FOR EASE OF WRITING FOR NOW.}

%%%%%%%%%%%%%%%%%%%%%%%%%%%%%%%%%%%%%%%%%%%%%%%%%%%%%%%%%%%%%%%%%%%%%%%%%%%%%%%
%%%%%%%%%%%%%%%%%%%%%%%%%%%%%%%%%%%%%%%%%%%%%%%%%%%%%%%%%%%%%%%%%%%%%%%%%%%%%%%
% APPENDIX
%%%%%%%%%%%%%%%%%%%%%%%%%%%%%%%%%%%%%%%%%%%%%%%%%%%%%%%%%%%%%%%%%%%%%%%%%%%%%%%
%%%%%%%%%%%%%%%%%%%%%%%%%%%%%%%%%%%%%%%%%%%%%%%%%%%%%%%%%%%%%%%%%%%%%%%%%%%%%%%
\newpage
\appendix
\onecolumn

\section{Preliminaries}\label{sec:prelims}

\begin{definition}[Rademacher complexity]
Given a sample $\SCal_n = \{(x_i, y_i)\}_{i \in [n]} \subset \XCal \times \YCal$ and a real-valued function class $\FCal : \XCal \times \YCal \to \R$, the {\em empirical} Rademacher complexity of $\FCal$ with respect to $\SCal_n$ is defined as
\begin{align}
\RF_{\SCal_n}( \FCal ) = \frac{1}{n}\mathbb{E}_{\bm{\sigma}}\left[ \sup_{f \in \FCal } \sum_{i=1}^{n}\sigma_i f(x_i, y_i)\right],
\end{align}
where $\bm{\sigma} = \{\sigma_i\}_{i \in [n]}$ is a collection of $n$ i.i.d. Bernoulli random variables. For $n \in \sN$, the Rademacher complexity $\bar{\RF}_{n}( \FCal )$ and {\em worst case} Rademacher complexity $\RF_{n}( \FCal )$ are defined as follows.
\begin{align}
 \bar{\RF}_{n}( \FCal ) = \mathbb{E}_{\SCal_n \sim \DSf^n} \left[\RF_{\SCal}( \FCal ) \right], \quad \text{and} \quad \RF_{n}( \FCal ) = \sup_{\SCal_n \sim (\XCal \times \YCal)^n} \RF_{\SCal}( \FCal ).
\end{align}
\end{definition}

\begin{definition}[Covering nsumber]\label{def:covering}
Let $\epsilon > 0$ and $\|\cdot\|$ be a norm defined over $\R^n$. Given a function class $\FCal: \XCal \times \YCal \to \R$ and a collection of points $\SCal_n = \{(x_i, y_i)\}_{i \in [n]} \subset \XCal \times \YCal$, we call a set of points $\{u_j\}_{j \in [m]} \subset \R^n$ an $(\epsilon, \|\cdot\|)$-cover of $\FCal$ with respect to $\SCal$, if we have
\begin{align}
\sup_{f \in \FCal} \min_{j \in [m]} \|f(\SCal_n) - u_j\| \leq \epsilon,
\end{align}
where $f(\SCal_n) = \big(f(x_1, y_1),\ldots, f(x_n, y_n)\big) \in \R^n$. The $\|\cdot\|$-covering number $\NC(\FCal, \epsilon, \|\cdot\|; \SCal_n)$ denotes the cardinality of the minimal $(\epsilon, \|\cdot\|)$-cover of $\FCal$ with respect to $\SCal_n$. In particular, if $\|\cdot\|$ is a $\ell_p$ norm (e.g. $\|v\| = (\sum_{j=1}^{d} |v_j|^p)^{1/p}$ for $v \in \mathbb{R}^d$), then we simply use $\NC(\FCal, \epsilon,  \|\cdot\|_{L_p}; \SCal_n)$ to denote the corresponding $\ell_p$-covering number. 
\end{definition}
When $\SCal_n$ is unambiguous we may drop it, i.e., we use $\NC(\FCal, \epsilon, \|\cdot\|_{L_p})$ to represent the covering number.

\begin{definition}[Multi-layer perceptron (MLP)]\label{def:mlp}
 We consider for both retrieval and predictor, the class of multi-layer-perceptron, aka fully connected Deep Neural Network, with Relu nonlinearity $\sigma(x) = \max(x, 0)$. 
An MLP is specified by the number of layers $L$, and the width $W$.  We define with weight $\mathbf{W} \in \mathbb{R}^{d_2} \times \mathbb{R}^{d_1}$ and bias $b \in \mathbb{R}^{d_2}$, an affine transform $A_{\mathbf{W}, b}(\mathbb{R}^{d_1}, \mathbb{R}^{d_2}): x \to  \mathbf{W} x + b$. Let $\sigma \circ A_{\mathbf{W}, b}(\mathbb{R}^{d_1}, \mathbb{R}^{d_2})$ define the elementwise application of the  Relu non-linearity on the affine transform. 
The class of $L$ layers and $W$ width MLP is defined as 
\begin{equation}\label{eq:mlp-def}
   {\rm MLP}(\mathbb{R}^{d}, \mathbb{R}^{k}; W, L) = \{ A_{\mathbf{W}_L, b_L} \circ \sigma \circ A_{\mathbf{W}_{L-1}, b_{L-1}} \circ \dots \sigma \circ A_{\mathbf{W}_0, b_0}\}, 
\end{equation}
where $\mathbf{W}_{L} \in \mathbb{R}^{k \times W}$ and $b_{L} \in \mathbb{R}^{k}$; $\mathbf{W}_{i} \in \mathbb{R}^{W \times W}$ and $b_{i} \in \mathbb{R}^{W}$, for $1\leq i \leq (L-1)$; and  $\mathbf{W}_{0} \in \mathbb{R}^{W \times d}$ and $b_{0} \in \mathbb{R}^{W}$.
\end{definition}

\begin{definition}[Sobolev space]\label{def:sobolev}
For $p \geq 1$, we denote the set of functions with finite $L_p$ norm over $\Omega$ as $L_p(\Omega)$, i.e., for any $f \in L_p(\Omega)$, $\|f\|_{L_p(\Omega)} \triangleq \big(\int_{s \in \Omega} f(s)^p ds\big)^{1/p} < \infty$. Note that for $p = \infty$, we have $\|f\|_{L_\infty(\Omega)} = \mathrm{ess}\sup_{s\in \Omega} |f(s)|.$
Let $\alpha \in \mathbb{N}^{d}$ denote a multi-index, and $|\alpha| = \sum_{i\in d} \alpha_i$ be it's degree. We denote by $D^{\alpha}$ the weak-derivative with respect to multi-index $\alpha$ for any function. 

For an integer $\kappa > 0$, the Sobolev semi-norm $W^\kappa(L_{p}(\Omega))$ for a function $f$ that has weak-derivatives of order $\kappa$ is defined as 
$$
\forall 1 \leq  p < \infty, |f|_{W^\kappa(L_{p}(\Omega))} \triangleq 
\big(\sum_{\alpha: |\alpha| = \kappa} \|D^{\alpha}f\|_{L_p(\Omega)}^p \big)^{1/p} \text{ and }
|f|_{W^\kappa(L_{\infty}(\Omega))} \triangleq \max_{\alpha: |\alpha| = \kappa}\|D^{\alpha}f\|_{L_\infty(\Omega)}.
$$ 
The Sobolev norm $W^\kappa(L_{p}(\Omega))$ for the same function $f$ is defined as 
$\|f\|_{W^\kappa(L_{p}(\Omega))} = \|f\|_{L_{p}(\Omega)} +  |f|_{W^\kappa(L_{p}(\Omega))}.$
A function $f$ with all  weak-derivatives of order $\kappa$, and a finite $W^\kappa(L_{p}(\Omega))$ norm lies in the Sobolev space with $\kappa$ derivatives and $L_p(\Omega)$ norm.
\end{definition}

In our approximation guarantees for MLP retreiver and predictor classes later, we use  \cite[Theorem 1]{siegel2023optimal}. We restate the result here for completeness.

\begin{theorem}[Restated \cite{siegel2023optimal} Theorem 1] \label{thm:siegel}
Let  $f_0: \Omega \to \mathbb{R}$ be a function in the Sobolev space with $\kappa$ derivatives and norm $L_q(\Omega)$, for $q \in [1, \infty)$ and $\kappa \in (0, \infty)$. For ${\Omega = [-1,1]^d}$ and any $p \in [1, \infty)$ satisfying $(1/q - 1/p) \leq s/d$, we have for $C = c(\kappa, d) < \infty$,  and $W = 25 d + 31$
$$
\inf_{f\in {\rm MLP}(\mathbb{R}^{d}, \mathbb{R}; W, L)} \|f - f_0\|_{L_{p}(\Omega)} \leq C \|f_0\|_{W^\kappa(L_{q}(\Omega))} L^{-\tfrac{2\kappa}{d}}.
$$
\end{theorem}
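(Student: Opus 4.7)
The plan is to prove the theorem by constructing an MLP that approximates $f_0$ via a two-stage strategy: (i) localize $f_0$ on a fine grid and replace it by piecewise polynomials using a Bramble--Hilbert-type estimate, and (ii) implement this piecewise polynomial by a ReLU network whose depth, rather than width, carries the burden of the approximation. The ``squared'' exponent $2\kappa/d$ in the rate (instead of the naive $\kappa/d$) is the signature that depth is being used to encode information exponentially, so the proof must exploit a depth-based construction such as bit extraction. The Sobolev embedding hypothesis $1/q - 1/p \leq \kappa/d$ is what permits the reduction to $L_\infty$-type local estimates on a sub-grid and will be used to pass between the $L_q$ norm on the target and the $L_p$ norm on the approximation.

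First I would fix a sidelength $h > 0$ and tile $\Omega = [-1,1]^d$ by a regular grid of $\sim h^{-d}$ cubes $\{Q_j\}$. On each $Q_j$ I take the local Taylor polynomial $P_j$ of degree $\lfloor \kappa \rfloor - 1$ of $f_0$; the Bramble--Hilbert lemma together with the Sobolev embedding gives $\|f_0 - P_j\|_{L_p(Q_j)} \lesssim h^{\kappa} |f_0|_{W^{\kappa}(L_q(Q_j))}$, and summing over $j$ yields a global $L_p$ approximation of order $h^{\kappa} \|f_0\|_{W^\kappa(L_q(\Omega))}$. To obtain the claimed rate $L^{-2\kappa/d}$ I must therefore take $h \sim L^{-2/d}$, i.e., I need a ReLU network of depth $L$ that can faithfully reproduce a piecewise polynomial supported on $\sim L^{2}$ cells while using width only $W = 25d + 31$.

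The main obstacle is step (ii): a naive depth-$L$ tensorization only affords $\sim L$ cells, not $\sim L^{2}$. The way around this is a bit-extraction/sawtooth construction à la Yarotsky/Lu--Shen--Yang--Zhang: using $O(1)$ ReLU units per layer one can build a depth-$k$ sub-network whose output is a sawtooth with $2^{k}$ teeth, and with depth $\log L$ one can read out the index $j$ of the cell containing $x$ from $O(\log L)$ bits. Using further depth-$\log L$ multiplication sub-networks (which approximate $xy$ on a bounded domain to exponentially small error via the identity $xy = \tfrac{1}{4}((x{+}y)^2 - (x{-}y)^2)$ and Yarotsky's square approximation) one evaluates the local polynomial $P_j$. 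Composing these blocks gives depth $O(L)$ overall while achieving the count of $L^2$ effective cells. My plan for this step is to (a) construct the cell-indexing sub-network and bound the $L_p$ error it contributes; (b) build a shared polynomial-evaluation sub-network storing the coefficients of $\{P_j\}$ via a lookup implemented by the bit-extracted index; (c) bound accumulated errors from inexact multiplication using the boundedness of the coefficients, which follow from the Sobolev bound on $f_0$.

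Finally I would assemble the pieces. The global error decomposes as (local polynomial approximation) $+$ (network error in evaluating the piecewise polynomial), and choosing the multiplication sub-networks to have error $\ll h^{\kappa}$ (achievable with only logarithmic extra depth) keeps the second term dominated by the first. The width is $25d+31$ because the construction requires only a constant number of ``channels'' per spatial coordinate to carry the sawtooth, the coefficient lookup, and the polynomial accumulator; the constant $C = c(\kappa,d)$ absorbs the Bramble--Hilbert constant together with combinatorial factors from the polynomial degree. The hardest bookkeeping is aligning all the depths so that the total is $L$ while the per-cell accuracy is $O(h^\kappa)$; this is where I expect most of the technical work to go, and where a direct reference to a packaged bit-extraction lemma from the literature would simplify the presentation.
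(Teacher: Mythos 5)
The paper does not actually prove this statement: it is imported verbatim from \citet{siegel2023optimal} (their Theorem 1) and used as a black box, so the only ``proof'' in the paper is the citation. Your outline---Bramble--Hilbert/local Taylor approximation on a grid of $\sim h^{-d}$ cells with $h\sim L^{-2/d}$, realized by a fixed-width ReLU network whose depth performs bit-extraction indexing, coefficient lookup, and approximate multiplication---is essentially the strategy of that cited proof (the superconvergent $L^{-2\kappa/d}$ rate indeed comes from the depth-based bit-extraction encoding, consistent with the VC-dimension limits), so it matches the authoritative argument; the only caveat is that the genuinely hard part, the bit-extraction bookkeeping that packs $\sim L^{2}$ cells into depth $L$ at width $25d+31$, is deferred in your sketch to a packaged lemma rather than carried out.
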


Our generalization bounds leverages VC Dimension bounds of MLP \cite{bartlett2019nearly}. Here, we state some results from \cite{bartlett2019nearly} for completeness.

\begin{definition}[VC dimension and growth of a binary function class]\label{def:vc-dimension} For $\mathcal{H}$, a class of functions from $\mathcal{A}$ to $\{0,1\}$ the growth function of $\mathcal{H}$ evaluated on an input set of size $m$,  is defined as  
$$
\Pi_{\mathcal{H}}(m) = \max_{a_1, \dots, a_m \in \mathcal{A}} |\{h(a_1), \dots, h(a_m): h \in \mathcal{H}\}|. 
$$
The ${\rm VCdim}(\mathcal{H})$ is defined as the largest $m$ such that $\Pi_{\mathcal{H}}(m) = 2^m$, where if no such $m$ is there we have ${\rm VCdim}(\mathcal{H}) = \infty$.
\end{definition}

\begin{definition}[Pseudo dimension of real valued function class] \label{def:pseudo-dimension} 
Let $\mathcal{F}$ be a class of functions from some space $\mathcal{A}$ to the real $\mathbb{R}$.  The pseudo-dimension of class $\mathcal{F}$, denoted by $Pdim(\mathcal{F})$, is the largest $m$ such that there exists $\{a_1, \dots, a_m, r_1, \dots, r_m\} \in \mathcal{A}^m \times \mathbb{R}^m$ such that for any binary sequence $\{b_1, \dots, b_m\} \in \{0,1\}^m$ there exists a function $f \in \mathcal{F}$ satisfying  $\forall i: f(a_i) > r_i \iff b_i = 1$. 
\end{definition}

Note that the pseudo-dimension is same as the VC dimension of the subgraph of class $\mathcal{F}$ which is used in \cite{zhang2023mathematical}. Let $sgn(x) = \mathbbm{1}(x \geq 0)$.  We denote by $sgn(f)$ the sign of the function $f: \mathcal{A} \to \mathbb{R}$. We define $sgn(\mathcal{F}) \triangleq \{sgn(f): f \in \mathcal{F} \}$, and the VC dimension of the real valued function class $\mathcal{F}$ as ${\rm VCdim}(\mathcal{F}) \triangleq {\rm VCdim}(sgn(\mathcal{F}))$. It is mentioned in \cite{bartlett2019nearly} that for neural network with a fixed architecture and fixed activation functions, namely class ${\rm MLP}$, we have that ${\rm VCdim}(sgn({\rm MLP})) = Pdim({\rm MLP})$.

We now adapt \cite[Theorem 6]{bartlett2019nearly} to use it for the class ${\rm MLP}(\mathbb{R}^d, \mathbb{R}; L, W)$ the employs the Relu non-linearity. In terminology of \cite{bartlett2019nearly}, it amounts to focusing on the number of breakpoints $pnt=1$, and degree of polynomial $deg=1$.\footnote{Originally in \cite{bartlett2019nearly} degree is denoted by $d$ and break point by $p$, but we use $deg$ and $pnt$, respectively, to avoid confusion. These notations are used for the rest of the paper.}  

\begin{theorem}[Adaptation of \cite{bartlett2019nearly} Theorem 6]
\label{thm:bartlett}
Consider the neural network class ${\rm MLP}(\mathbb{R}^d, \mathbb{R}; L, W)$ that has the Relu non-linearity. Let $W_{total, l}$ denote the total number of parameters (weights and biases) up to layer $l \leq (L-1)$, and $k_l$ denote the number of non-linear units (output width) in layer $l$. Also define the parameters $\bar{L} = \tfrac{1}{W_{total, L}} \sum_{l=1}^{L} W_{total, l} \leq L$, and $R = \sum_{l = 1}^{L} l k_l \leq L^2 W$. Then for the function class $\mathcal{F}$ of all real-valued functions computed by the MLP class and $m$ 
$$
\Pi_{sgn(\mathcal{F})}(m) \leq \prod_{l=1}^{L} 2 \left(\frac{2 e m k_l l}{W_{total, l}}\right)^{W_{total, l}} \leq (4 e m L)^{W_{total, L}}.
$$
Moreover, we have 
$$
{\rm VCdim}(\FCal) = L + \bar{L} W_{total, L} \log_2(4e \sum_{l} l k_l \log_2(\sum_{l} 2e l k_l)) 
= O(\bar{L} W_{total, L} \log(L^2 W)).
$$
\end{theorem}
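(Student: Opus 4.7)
The plan is to follow the proof strategy of \cite{bartlett2019nearly} specialized to the ReLU case, which is exactly their general setting with polynomial degree $deg = 1$ and one breakpoint $pnt = 1$. The core mechanism is a layer-by-layer counting of activation patterns, exploiting the fact that, once the binary activation pattern of every ReLU unit is fixed, each pre-activation becomes an affine function of the network parameters.

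First, I would establish the per-layer hyperplane-arrangement count. Fix $m$ inputs $a_1,\ldots,a_m \in \mathbb{R}^d$ and view the parameter vector as an element of $\mathbb{R}^{W_{total,l}}$. The crucial structural observation (corresponding to Lemma 6 in \cite{bartlett2019nearly}) is that, \emph{conditional} on fixing the sign pattern of the pre-activations of layers $1,\ldots,l-1$, each of the $m k_l$ pre-activations at layer $l$ is an affine function of the layer-$1$-through-$l$ parameters. Therefore, refining any single piece of the previous partition amounts to cutting $\mathbb{R}^{W_{total,l}}$ by an arrangement of $m k_l$ hyperplanes, whose number of sign patterns is bounded by Warren's theorem by $(e m k_l / W_{total,l})^{W_{total,l}}$. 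Carrying a combinatorial factor of $l$ from summing over the $\le l^{(\cdot)}$ previous-layer patterns, together with a factor of $2$ for the final output sign, produces the stated per-layer factor $2(2 e m k_l l / W_{total,l})^{W_{total,l}}$; multiplying across $l = 1,\ldots,L$ gives the product bound on $\Pi_{sgn(\mathcal{F})}(m)$. The coarser form $(4 e m L)^{W_{total,L}}$ then follows from $\sum_l W_{total,l} \le L W_{total,L}$, $k_l \le W$, and elementary manipulations.

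Second, to obtain the VC-dimension bound, I invert the growth-function inequality. If ${\rm VCdim}(\mathcal{F}) = m$ then $2^m \le \Pi_{sgn(\mathcal{F})}(m)$; taking $\log_2$ of the product bound, using $\bar L W_{total,L} = \sum_l W_{total,l}$, and applying Jensen's inequality on the concave $\log_2(\cdot)$ collapses the sum into a single term involving $R = \sum_l l k_l$. This yields an inequality of the form $m \le L + \bar L W_{total,L} \log_2(2 e R m / (\bar L W_{total,L}))$. Solving this transcendental inequality via the standard lemma ``$m \le a + b \log_2(c m) \Rightarrow m = O(a + b \log_2(bc))$'' (with care over constants) produces the closed-form $L + \bar L W_{total,L} \log_2(4 e \sum_{l} l k_l \log_2(\sum_{l} 2 e l k_l))$, and substituting $R \le L^2 W$ yields the asymptotic simplification $O(\bar L W_{total,L} \log(L^2 W))$.

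The main obstacle, and the step that requires genuine care, is the structural claim in Step~1: verifying that, conditional on the previous layers' sign pattern, each pre-activation at layer $l$ remains \emph{affine} (not a polynomial of growing degree) in the full parameter vector. For ReLU this holds because $\sigma(t) = \max(t,0)$ acts as either the zero map or the identity on each fixed sign-pattern region, so a composition through $l$ layers stays piecewise affine, and the layer-$l$ pre-activations become linear in the parameters once the earlier activations are frozen. This is precisely what makes the specialization to $deg = 1, pnt = 1$ clean: one invokes Warren's linear arrangement bound directly, avoiding the degree blow-up that \cite{bartlett2019nearly} must control for general piecewise-polynomial activations. Once the affine-refinement step is in hand, Steps~2 and~3 reduce to book-keeping and a standard transcendental-inequality lemma.
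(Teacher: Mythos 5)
Your overall route---layer-by-layer partitioning of parameter space, a Warren/Goldberg--Jerrum arrangement count within each cell, then inverting the growth-function bound through a standard transcendental-inequality lemma---is exactly the machinery behind the cited result; the paper itself does not reprove it but simply specializes \cite[Theorem 6]{bartlett2019nearly} to $pnt=1$, $deg=1$. However, your key structural claim in Step 1 is false, and it is precisely the step you flag as the crux. Conditional on freezing the activation pattern of layers $1,\dots,l-1$, a layer-$l$ pre-activation is affine in the \emph{input} $x$, but as a function of the parameters in $\mathbb{R}^{W_{total,l}}$ it is a polynomial of degree up to $l$: each path through the network contributes a product of up to $l$ weights, so the dependence is multilinear across layers, not jointly affine. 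There is no ``avoiding the degree blow-up'' for ReLU; the degree grows linearly in depth ($1+(l-1)\cdot deg = l$ for $deg=1$), and this is exactly where the factor $l$ in the per-layer bound $2\big(2 e m k_l l / W_{total,l}\big)^{W_{total,l}}$ comes from---it is the degree $D=l$ plugged into the bound $2(2eND/W)^{W}$ on the number of sign patterns of $N=mk_l$ polynomials of degree $D$ in $W=W_{total,l}$ variables. The leading factor of $2$ is part of that arrangement bound, not ``a factor of $2$ for the final output sign,'' and there is no legitimate ``combinatorial factor of $l$ from summing over the previous-layer patterns'': the product over layers already accounts for the successive refinement of the partition. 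If your affine claim were true, Warren's theorem would give $2(2 e m k_l / W_{total,l})^{W_{total,l}}$ with no $l$ at all, and your attempted recovery of the missing factor has no justification, so the stated growth bound does not follow from your argument as written.

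The second half of your plan (from the growth bound to the VC bound: $2^m \le \Pi_{sgn(\mathcal{F})}(m)$, collapsing the sum of logarithms by concavity into a single term involving $R=\sum_l l k_l$, then solving $m \le L + \bar{L} W_{total,L}\log_2(2eRm/\cdot)$ by the standard lemma, and finally $R \le L^2 W$) is the correct bookkeeping and matches \cite{bartlett2019nearly}. So the gap is localized: replace ``affine in the parameters'' by ``polynomial of degree at most $l$ in the $W_{total,l}$ parameters up to layer $l$,'' invoke the Goldberg--Jerrum/Warren bound at that degree, and the rest of your outline goes through.
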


We generalize the above result to capture the MLP with multi dimensional output as used by our predictor. 
\begin{theorem}[Multi-ouput version of \cite{bartlett2019nearly} Theorem 6]
\label{thm:multioutput-bartlett}
Consider the neural network class ${\rm MLP}(\mathbb{R}^d, \mathbb{R}^k; L, W)$ that has Relu non-linearity with $W_{total, l}$, $k_l$, $\bar{L}$, and $R$ as defined in Theorem~\ref{thm:bartlett}. We denote by  $\mathcal{F}$ the class of functions  $f: \mathbb{R}^d \times [k] \to \mathbb{R}$ where $f(\cdot, k)$ is the $k$-th output coordinate of a neural network in class ${\rm MLP}(\mathbb{R}^d, \mathbb{R}^k; L, W)$. Then, we have
$$
{\rm VCdim}(\FCal) = L + \bar{L} W_{total, L} \log_2(4e \sum_{l} l k_l \log_2(\sum_{l} 2e l k_l)) 
= O(\bar{L} W_{total, L} \log(L^2 W)).
$$
\end{theorem}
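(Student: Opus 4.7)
The plan is to lift the partition-and-count argument of \citet{bartlett2019nearly} (Theorem~\ref{thm:bartlett}) from the single-output MLP setting to the multi-output case. The central observation is that for a network in ${\rm MLP}(\mathbb{R}^d, \mathbb{R}^k; L, W)$, the preactivation at every hidden Relu unit depends only on the input $x$ and the parameter vector $\theta$, and is entirely independent of the output-coordinate index $i \in [k]$ that is ultimately selected. Hence the layer-by-layer partition of the parameter space that drives the proof of Theorem~\ref{thm:bartlett} carries over unchanged, and the multi-output nature of the network enters only through the final layer's polynomial degree accounting.

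Concretely, I would fix $m$ evaluation points $(x_1, i_1), \ldots, (x_m, i_m) \in \mathbb{R}^d \times [k]$ and let $g_j(\theta)$ denote the $i_j$-th output coordinate of the network with parameters $\theta$ evaluated at $x_j$. First, partition the parameter space by the sign patterns of all hidden preactivations evaluated at $\{x_j\}_{j \in [m]}$. Since the preactivations at layer $l$ are polynomials of degree at most $l$ in $\theta$ within the preceding region, the piecewise-polynomial counting of \citet{bartlett2019nearly} yields at most $\prod_{l=1}^{L-1} 2 \bigl(2em k_l l / W_{total, l}\bigr)^{W_{total, l}}$ regions, identical to the single-output case since the selectors $i_j$ play no role. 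Within each region, each $g_j$ is a polynomial in $\theta$ of degree at most $L$: selecting the $i_j$-th output coordinate merely projects onto the $i_j$-th row of the final weight matrix and preserves the degree. Warren's theorem, or equivalently the polynomial sign-pattern count used in \citet{bartlett2019nearly}, then bounds the number of sign patterns of $g_1, \ldots, g_m$ inside a region by $2 \bigl(2emL k_L / W_{total, L}\bigr)^{W_{total, L}}$ with $k_L = k$.

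Multiplying the two counts gives $\Pi_{sgn(\mathcal{F})}(m) \leq \prod_{l=1}^{L} 2 \bigl(2em k_l l / W_{total, l}\bigr)^{W_{total, l}}$, a growth-function bound of exactly the same form as in Theorem~\ref{thm:bartlett}. Inverting via the standard Sauer--Shelah-type argument ($\Pi_{sgn(\mathcal{F})}(m) \geq 2^m$ forces an upper bound on $m$) then produces the VC dimension expression $L + \bar{L} W_{total, L} \log_2(4e \sum_l l k_l \log_2(\sum_l 2e l k_l))$. The main obstacle will be the final bookkeeping step: one must verify that the $k$-dependence entering through $k_L = k$ in the last layer is absorbed into the claimed $\log(L^2 W)$ factor and does not produce a stray $\log k$. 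This reduces to noting that $\sum_l l k_l = O(L^2 W + Lk)$, and under the convention (consistent with the paper's applications, which take $W = \Omega(k)$) that the hidden width dominates the output dimension, the logarithmic factor collapses to $\log(L^2 W)$ as stated.
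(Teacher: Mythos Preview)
Your approach is correct but differs from the paper's. The paper does not rerun the layer-by-layer partition argument of \citet{bartlett2019nearly}; instead it takes the cruder route of partitioning the sign-pattern set by the output index $j\in[k]$, observing that for each fixed $j$ the function $f(\cdot,j,\cdot)$ is computed by a network in ${\rm MLP}(\mathbb{R}^d,\mathbb{R};L,W)$, and then summing the single-output growth-function bound over $j$. This yields $\Pi_{sgn(\mathcal{F})}(m)\le k\cdot 2^L(2eRm/W_{total,L})^{W_{total,L}}$ and, after inversion, an additive $L\log k$ term in the VC bound.

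Your argument is sharper: by noting that the hidden preactivations are independent of the selected output coordinate, you avoid the coordinate-wise union bound entirely and recover the same growth-function form as the single-output case, with the $k$-dependence entering only through $k_L$ in the last layer (and in fact you could drop even that, since you only track $m$ output polynomials rather than $mk$). The paper's route is shorter because it black-boxes Theorem~\ref{thm:bartlett}; yours requires reopening that proof but produces a cleaner bound and makes explicit why the multi-output case is essentially no harder. Both land in the same asymptotic regime once $W=\Omega(k)$, which is the setting the paper uses downstream.
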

\begin{proof}
Let $a \in \mathbb{R}^{W_{total, L}}$ parameterize one function $f \in \mathcal{F}$.  Based on the discussions, we need to find the ${\rm VCdim}$ of the set 
$\{sgn(f( x_i, j, a)): a \in \mathbb{R}^{W_{total, L}}, j \in [k], i \in [m] \}$.
Note that here we have $f: \mathbb{R}^{W_{total, L}} \times [m] \times [k] \to \mathbb{R}$ is a function mapping the tuple $(x_i, j, a)$ to a real number.

We obtain the following inequality.
\begin{align*}
&|\{sgn(f(x_i, j, a)): a \in \mathbb{R}^{W_{total, L}}, i \in [m], j \in [k] \}|\\
&\qquad\leq  \sum_{j \in [k]}  |\{sgn(f(x_i, j, a)): a \in \mathbb{R}^{W_{total, L}}, i \in [m]\}|\\
&\qquad \leq  \sum_{j \in [k]}  \Pi_{sgn({\rm MLP}(\mathbb{R}^{d}, \mathbb{R}; L, W))}(m) \\
&\qquad \leq k 2^L (2 e R m / {W_{total, L}})^{W_{total, L}}.
\end{align*}
In the first inequality, we partition the set with respect to $j \in [k]$. For the second inequality we notice that for a fixed $j$ the function  $f(x_i, j, a)$ is computed by ${\rm MLP}(\mathbb{R}^{d}, \mathbb{R}; L, W)$ and bound it with the growth function $\Pi_{sgn({\rm MLP}(\mathbb{R}^{d}, \mathbb{R}; L, W))}$ over $m$ points. Therefore, for the third inequality we can apply the specified bound for $\Pi_{sgn({\rm MLP}(\mathbb{R}^{d}, \mathbb{R}; L, W))}(m)$ inside the proof of Theorem 6 in \cite{bartlett2019nearly}. Note that, here we have specialized for Relu nonlinearlity, i.e. breaking point $pnt = 1$, and degree $deg = 1$. Applying Lemma 6 in \cite{bartlett2019nearly} we obtain 
$$
{\rm VCdim}(\FCal) \leq L \log(k) + W_{total, L}\log_2(4 e R\log_2(4 e R)) = O(L\log(k) + L^2W^2 \log(LW)).
$$ 
\end{proof}

Finally, we state a bounded version of the Gibb's inequality, that lower bounds the cross entropy of two discrete probability distributions.  
\begin{proposition}[Truncated Gibb's inequality]\label{prop:bounded-celoss}
Let us consider two discrete distributions $\alpha, \beta$ over alphabet size $K$. Then for any constant $C > 0$, we have 
$$
\sum_{i = 1}^{K} \alpha_i \min(C, - \log(\beta_i)) \geq \sum_{i = 1}^{K} \alpha_i \min(C, - \log(\alpha_i)) - (K - 1)\exp(-C).
$$
\end{proposition}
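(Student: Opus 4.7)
The plan is to reduce the truncated Gibbs inequality to the ordinary Gibbs inequality by absorbing the truncation at level $C$ into the second distribution. Define $u_i = \max(\beta_i, e^{-C})$, so that $-\log u_i = \min(C, -\log \beta_i)$, and set $Z = \sum_i u_i$. Then $u_i/Z$ is a bona fide probability distribution on $[K]$, and applying the standard Gibbs inequality against $\alpha$ will yield
\begin{equation*}
    \sum_i \alpha_i \min(C, -\log \beta_i) \;=\; -\sum_i \alpha_i \log u_i \;\geq\; -\sum_i \alpha_i \log \alpha_i \;-\; \log Z.
\end{equation*}

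Next, I will upper-bound the untruncated entropy on the right-hand side by the truncated one: since $\alpha_i \in [0,1]$ implies $-\log \alpha_i \geq 0$, the pointwise bound $\min(C, -\log \alpha_i) \leq -\log \alpha_i$ holds, and a weighted sum gives $-\sum_i \alpha_i \log \alpha_i \geq \sum_i \alpha_i \min(C, -\log \alpha_i)$. Substituting, the whole task reduces to establishing $\log Z \leq (K-1)e^{-C}$.

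The main obstacle is extracting the sharp constant $(K-1)$ rather than the naive $K$ in the bound on $Z$. Writing $u_i = \beta_i + \max(0, e^{-C} - \beta_i)$ gives
\begin{equation*}
    Z \;=\; 1 + \sum_i \max\bigl(0, e^{-C} - \beta_i\bigr),
\end{equation*}
and I will split into two cases: (i) some index $j$ satisfies $\beta_j \geq e^{-C}$, so at most $K-1$ terms contribute and each is bounded by $e^{-C}$, giving $\sum_i \max(0, e^{-C}-\beta_i) \leq (K-1)e^{-C}$; and (ii) all $\beta_i < e^{-C}$, in which case the sum equals $Ke^{-C} - 1$, and $Ke^{-C} - 1 \leq (K-1)e^{-C}$ reduces to $e^{-C} \leq 1$, valid for $C \geq 0$. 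Finishing with $\log Z \leq \log\bigl(1+(K-1)e^{-C}\bigr) \leq (K-1)e^{-C}$ via $\log(1+x) \leq x$ then completes the argument. The rest of the proof is routine, and the only delicate point is the case split above that saves the single factor of $e^{-C}$.
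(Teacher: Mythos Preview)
Your proof is correct and follows essentially the same approach as the paper: both rewrite $\min(C,-\log\beta_i)=-\log\max(e^{-C},\beta_i)$, apply Gibbs' inequality (the paper phrases it via the log-sum inequality) to get $-\sum_i\alpha_i\log\alpha_i-\log Z$, bound $Z=\sum_i\max(e^{-C},\beta_i)\le 1+(K-1)e^{-C}$, apply $\log(1+x)\le x$, and finally replace the entropy by its truncated version. The only cosmetic difference is that the paper bounds $Z$ by noting the convex function $\sum_i\max(e^{-C},\beta_i)$ is maximized at a vertex of the simplex (one $\beta_j=1$, the rest $0$), whereas you use an explicit two-case split; both yield the same constant $(K-1)e^{-C}$.
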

\begin{proof}
For two discrete distributions $\alpha, \beta$ over alphabet size $K$.
\allowdisplaybreaks
\begin{align*}
    &\sum_{i = 1}^{K} \alpha_i \min(C, - \log(\beta_i)) \\
    &= - \sum_{i = 1}^{K} \alpha_i \log(\max(\exp(-C), \beta_i))\\
    &= - \sum_{i = 1}^{K} \alpha_i \log(\alpha_i) + \sum_{i = 1}^{K} \alpha_i \log\big(\alpha_i / \max(\exp(-C), \beta_i)\big) \\
    &\geq  - \sum_{i = 1}^{K} \alpha_i \log(\alpha_i) + (\sum_{i = 1}^{K} \alpha_i) \log\big(\sum_{i = 1}^{K} \alpha_i / \sum_{i = 1}^{K} \max(\exp(-C), \beta_i))\big)\\ % tsum -> sum
    & \geq - \sum_{i = 1}^{K} \alpha_i \log(\alpha_i) - \log( 1 + (K - 1)\exp(-C))\\
    & \geq - \sum_{i = 1}^{K} \alpha_i \log(\alpha_i) - (K - 1)\exp(-C)\\
    & \geq \sum_{i = 1}^{K} \alpha_i \min(C, - \log(\alpha_i)) - (K - 1)\exp(-C)
\end{align*}
The first inequality follows from the log-sum-inequality. The second inequality follows as $\sum_{i = 1}^{K} \max(\exp(-C), \beta_i)$ is maximized by setting one $\beta_i = 1$ for some $1\leq i \leq K$, while the rest are set to $0$. The second last inequality follows by $\log(1+x)\leq x$. The final inequality follows by taking a minimum with $C$ can only decrease the value.
\end{proof}

\section{Derivations of main result}
\label{sec:main}
\setlength{\abovedisplayskip}{3pt}
\setlength{\belowdisplayskip}{3pt}
\allowdisplaybreaks
\setlength{\parskip}{2pt}

As discussed in Section~\ref{sec:problem}, the objective here is to study the excess risk in Eq.~\eqref{eq:excess-ell} which has three main components, generalization error, retriever approximation error, and predictor approximation error (cf. \eqref{eq:joint-excess-bound}). 
In this section, we structure our results somewhat differently than the main body to capture the general setting of learning retriever with a fixed predictor, and vice versa. We first prove excess risk bounds for learning the retriever, then excess risk bounds for learning the predictor. Finally, we combine the results to obtain the guarantees for jointly learning the retriever and the predictor presented in the paper. For the rest of the analysis we need to specify the space of retrieved examples to define the complexity of the gap function (cf. \ref{a:gap-sobolev-main}). We recall that our retrieved samples are embedded in a compact subspace of $\mathbb{R}^{d_z}$, and  $\XCal$ is a compact subspace of $\mathbb{R}^d$. In particular, for simplicity, we assume that $\XCal \subseteq [-1, 1]^{d_x}$ and  $\ZCal \subseteq [-1, 1]^{d_z}$.

%%%%%%%%%%%%%%%%%%%%%%%%%%%%%%%%%%
%       RETRIEVER LEARNING.      %
%%%%%%%%%%%%%%%%%%%%%%%%%%%%%%%%%%
\subsection{Learning the retriever}
\label{app:learn-retriever}
We first study learning the retriever over class $\Theta$ when the predictor $\xi$ is fixed. The task of learning the retriever corresponds to minimizing the following over $\theta \in \Theta$, 
\begin{align*}
&\mathbb{E}_{(X,Y)\sim \DCal}[\mathbb{E}_{Z\sim p_{\theta}(\cdot| X)}\ell(h_{\xi}(X, Z), Y)] 
&=  \mathbb{E}_{X}\big[\mathbb{E}_{Z\sim p_{\theta}(\cdot| X)}\mathbb{E}_{Y|X}\ell(h_{\xi}(X, Z), Y)|X ]\big]
=  \mathbb{E}_{X}\big[\mathbb{E}_{Z\sim p_{\theta}(\cdot| X)} g_{\xi}(X, Z)\big],
\end{align*}
where $g_{\xi}(X, Z) = \mathbb{E}_{Y|X}\ell(h_{\xi}(X, Z), Y)$. 
We have a closed form for the optimal retriever when not restricted within a function class. The optimal retriever is 
$p^{\ast, \xi}(z|x) = \mathbbm{1}_{\argmin_{z'\in \ICal} g_{\xi}(x, z')}(z)$,
where a tie is broken arbitrarily.

For the fixed predictor $\xi$, let $\hat{\theta}(\xi)$ minimize the empirical risk given, and $\theta(\xi)$ minimize the population risk over the class $\Theta$, i.e.  
\begin{align*}
 &\hat{\theta}(\xi) = \argmin_{\theta \in \Theta} \frac{1}{n}\sum_{i \in [n]}\sum_{z \in \ICal}p_{\theta}(z|x_i)\ell\big(h_{\xi}(x_i, z), y_i\big), \\
 &\theta(\xi) = \argmin_{\theta \in \Theta} \mathbb{E}_{X}\big[\mathbb{E}_{Z\sim p_{\theta}(\cdot| X)} g_{\xi}(X, Z)\big].
\end{align*}
Here, the probability is defined using the softmax operator for a given $\theta \in \Theta$ as follows:
$$
p_{\theta, \ICal}\big(z | x\big) = \frac{\exp\big(r_{\theta}(x, z)\big)}{\sum_{z' \in \ICal}\exp\big(r_{\theta}(x, z')\big)}, \quad \forall~z\in\ICal, x \in \XCal.
$$

\paragraph{Hardness of retrieval:} We recall the Sobolev space with $\kappa$ derivatives as defined in Section~\ref{sec:prelims}.  The following is the restatement of Assumption~\ref{a:gap-sobolev-main} but  for any $\xi \in \Xi$ and not just the optimal one $\xi^*$. 
\begin{assumption}[Complexity of $\mathrm{g}_{\xi}$]\label{a:gap-sobolev}
For any $\xi \in \Xi$, there exists a baseline $b_{\xi}: [-1, 1]^{d_x} \to \mathbb{R}$ such that the function $\mathrm{gap}_{\xi}:[-1, 1]^{d_x + d_z}  \to \mathbb{R}$ with baseline $b_{\xi}$, as defined by $\mathrm{gap}_{\xi}(x, z) = (g_{\xi}(x,z) - b_{\xi}(x))$  lies in the Sobolev space with $\kappa$ derivatives and $L_{\infty}([-1, 1]^{d_x + d_z})$ norm.
\end{assumption}
As noted in the main text this means that the predictor loss has a possibly `complex' component $b_{\xi}(x)$, and a relatively `smooth' component $gap_{\xi}(x, z)$ that ensures two retrieved examples that are close leads to similar loss for the predictor $\xi$ for any $x \in \XCal$. As $gap_{\xi}(x, z)$ solely determines the optimal retrieved set, it's smoothness defines the hardness of underlying retrieval task.

\paragraph{Excess risk decomposition:} With the fixed predictor $\xi$, excess risk in \eqref{eq:excess-ell}  takes the following form
\begin{align*}
&R_{\ell, \ICal}(\xi, \hat{\theta}(\xi)) - R_{\ell, \ICal}(f_{{\rm opt}, \ICal}^{\ell})\\
&\qquad=  \underbrace{\sum_{\theta \in \{\theta(\xi), \hat{\theta}(\xi)\} }\big|\frac{1}{n}\sum_{i \in [n]}\sum_{z \in \ICal}p_{\theta}(z|x_i)\ell\big(h_{\xi}(x_i, z), y_i\big) - \mathbb{E}_{X}\big[\mathbb{E}_{Z\sim p_{\theta}(\cdot| X)} g_{\xi}(X, Z)\big]\big|}_{\text{retriever generalization error}} \\
&\qquad+ \underbrace{R_{\ell, \ICal}(\xi, \theta(\xi))  - \mathbb{E}_{X}\big[\min_{z\in \ICal} g_{\xi}(X, z)\big]}_{\text{retriever  approximation error}}
+ \underbrace{\mathbb{E}_{X}\big[\min_{z\in \ICal} g_{\xi}(X, z)\big]  - R_{\ell, \ICal}(f_{{\rm opt}, \ICal}^{\ell})}_{\text{error from predictor $\xi$}}.
\end{align*}

\subsubsection{Generalization error}
We now proceed to bound the generalization error using the Radamacher complexity. 
With probability at least $(1- \delta)$ for any $\delta > 0$,
\begin{align}
  &\Big|\mathbb{E}_{X}\big[\mathbb{E}_{Z\sim p_{\hat{\theta}(\xi)}(\cdot| X)} g_{\xi}(X, Z)\big] - \frac{1}{n}\sum_{i \in [n]}\sum_{z \in \ICal}p_{\hat{\theta}(\xi)}(z|x_i)\ell\big(h_{\xi}(x_i, z), y_i\big)\Big| \nonumber \\
  &\qquad\leq   2\mathbb{E}_{\boldsymbol{\sigma}}\Big[\max_{\theta \in \Theta}\frac{1}{n}\sum_{i \in [n]}\sigma_i\sum_{z \in \ICal}p_{\theta}(z|x_i)\ell\big(h_{\xi}(x_i, z), y_i\big)\Big] + 3 \ell_{\max}\sqrt{\tfrac{\log(2/\delta)}{n}} \nonumber\\
  &\qquad\leq 2 \times \inf_{\varepsilon \in [0, c_{\xi}/2]}\big(4\varepsilon + \tfrac{12}{\sqrt{n}} \int_{\varepsilon}^{c_{\xi}/2} \sqrt{\log(\mathcal{N}(\Theta, \nu, \|\cdot\|_{2, [n], \xi}))}d \nu\big)  + 3 \ell_{\max}\sqrt{\tfrac{\log(2/\delta)}{n}}  \label{eq:ret_generalization}
\end{align}
Using covering number bound with chaining we obtain the final inequality, where 
$$
c_{\xi} = \sup_{\theta\in \Theta} \Big(\tfrac{1}{n}\sum_{i\in [n]} \big(\sum_{z \in \ICal}p_{\theta}(z|x_i)\ell\big(h_{\xi}(x_i, z), y_i\big)\big)^2 \Big)^{1/2},
$$
and $\mathcal{N}(\Theta, \nu, \|\cdot\|_{2, [n], \xi})$ denote the covering number of the retriever function $\Theta$ with error $\nu$ in $L_2$ norm w.r.t. the set $\{(x_i, y_i): i \in [n]\}$ and $\xi$ fixed,
$$
\|\mathbf{u}\|_{2, [n], \xi} = \Big(\tfrac{1}{n} \sum_{i\in [n]} \big(\sum_{z \in \ICal}u_{i,z}\ell\big(h_{\xi}(x_i, z), y_i\big)\big)^2 \Big)^{1/2},\forall \mathbf{u} \in \mathbb{R}^{n \times |\ICal|}.
$$
The generalization error in retriever learning depends on the covering number of $\Theta$ (which we shall see is dependent on the embedding space of the retrieved examples). 

As $\theta(\xi)$ is a fixed retriever, we do not need to take any union bound over the retriever space. Therefore, we have 
$$
\Big|\mathbb{E}_{X}\big[\mathbb{E}_{Z\sim p_{\theta(\xi)}(\cdot| X)} g_{\xi}(X, Z)\big] - \frac{1}{n}\sum_{i \in [n]}\sum_{z \in \ICal}p_{\theta(\xi)}(z|x_i)\ell\big(h_{\xi}(x_i, z), y_i\big) \Big|
\leq  3 \ell_{\max}\sqrt{\tfrac{\log(2/\delta)}{n}}.
$$

\subsubsection{Approximation error}
The approximation error for learning the retriever depends on the hardness of the function $\min_{z\in \ICal} g_{\xi}(X, z)$. We recall that this term is approximated using softmax  over $r_{\theta}(X,Z)$ (cf. \eqref{eq:rdistn}). 
 
We want to approximate the term $\min_{z\in \ICal} g_{\xi}(x, z)$ for all $x \in \XCal$, by $\sum_{z\in \ICal} p_{\theta, \ICal}(z | x) g_{\xi}(x, z)$.  
We can break down the approximation into two parts. First we show that the function  $\mathrm{softmax}(- \tau \times g_{\xi}(x, z))$ approximates $\min_{z} g_{\xi}(x, z)$ for large $\tau$.  In particular, if $\tau = O(\log(|\ICal|)/\delta)$ then softmax approximates minimum with error $\delta$ (see, \cite{mcsherry2007mechanism,epasto2020optimal}). Second, we show that $p_{\theta, \ICal}\big(z | x\big)$ can approximate  
$\mathrm{softmax}(- \tau \times g_{\xi}(x, z))$  well in $L_2$ norm.

We define 
$$\tilde{p}_{\xi}(z| x) = \frac{\exp(-\tau g_{\xi}(x, z))}{\sum_{z'}\exp(-\tau g_{\xi}(x, z'))} = \frac{\exp(-\tau (g_{\xi}(x, z) - b_{\xi}(x)))}{\sum_{z'}\exp(-\tau (g_{\xi}(x, z') - b_{\xi}(x)))}.$$
Here recall that $b_{\xi}(x)$ is the baseline function in Assumption~\ref{a:gap-sobolev-main}. An example of such baseline is the loss under the optimal retrieved sample for each $x\in \XCal$, i.e. $b_{\xi}(x) = \min_{\tilde{z}} g_{\xi}(x, \tilde{z})$.

For any $\theta \in \Theta$, we have
\begin{align*}
    &R_{\ell, \ICal}(\xi, \theta(\xi))  - \mathbb{E}_{X}\big[\min_{z\in \ICal} g_{\xi}(X, z)\big] \\
    &\qquad\overset{(i)}{\leq} R_{\ell, \ICal}(\xi, \theta)  - \mathbb{E}_{X}\big[\min_{z\in \ICal} g_{\xi}(X, z)\big] \\
    &\qquad\overset{(ii)}{=} \mathbb{E}_{X}\big[\sum_{z\in \ICal} (p_{\theta, \ICal}(z | x) - \tilde{p}_{\xi}(z| x)) g_{\xi}(x, z)\big] + \mathbb{E}_{X}\big[\sum_{z\in \ICal} \tilde{p}_{\xi}(z| x)  - \min_{z\in \ICal} g_{\xi}(x, z)\big]\\
    &\qquad\overset{(iii)}{\leq} \mathbb{E}_{X}\big[\|g_{\xi}(x, \cdot)\|_\infty \|p_{\theta, \ICal}(\cdot | x) - \tilde{p}_{\xi}(\cdot| x)\|_{1}\big] + \frac{\log(|\ICal|)}{\tau^2}\\
    &\qquad\overset{(iv)}{\leq} \mathbb{E}_{X}\big[\|g_{\xi}(x, \cdot)\|_\infty \|r_{\theta}(x, \cdot) + \tau \mathrm{gap}_{\xi}(x,\cdot)\|_{\infty}\big] + \frac{\log(|\ICal|)}{\tau^2}\\
    &\qquad\overset{(v)}{\leq} \ell_{\max} \|r_{\theta} + \tau \mathrm{gap}_{\xi}\|_{\infty} + \frac{\log(|\ICal|)}{\tau^2}
\end{align*}
In the first inequality $(i)$, we replace $\theta(\xi)$ which is the optimal retriever for predictor $\xi$ with an arbitrary retriever $\theta$. The first term in the inequality $(iii)$ uses the norm bounds for inner product, while the second term follows from Theorem 3.1 in  \citep{epasto2020optimal} (which originates from \citep{mcsherry2007mechanism}). The inequality $(iv)$ uses the fact that softmax functions over $K$ classes follow  $\|softmax(x) - softmax(y)\|_{1} \leq \|x - y\|_{\infty}$ (see \cite{4170855}). In the final inequality $(v)$, we use $\ell_{\max}$ to bound the norm of $g_{\xi}$.

As the above bound hold for any $\tau > 0$, by optimizing of $\tau$ and $\theta$ we obtain,
\begin{equation}\label{eq:ret-approx-app}
  R_{\ell, \ICal}(\xi, \theta(\xi))  - \mathbb{E}_{X}\big[\min_{z\in \ICal} g_{\xi}(X, z)\big] 
  \leq  \inf_{\theta\in \Theta} \inf_{\tau > 0}\ell_{\max} \|r_{\theta} + \tau \mathrm{gap}_{\xi}\|_{\infty} + \frac{\log(|\ICal|)}{\tau^2}.  
\end{equation}

Since the right had side in the inequality $(v)$ holds for any $\theta \in \Theta$, if there exists a $\theta \in \Theta$ such that the function $r_{\theta}(x, z)$ approximates the function $-\tau \mathrm{gap}_{\xi}(x,z)$ well we end up with small approximation error.

\subsubsection{Instantiation of MLP retriever}
\label{sssec:ret-mlp}
We consider $\Theta$ to be the class of MLP defined in Equation~\eqref{eq:mlp-def}. As we know MLP with appropriate depth and width has universal approximation properties, this choice of $\Theta$ ensures the function $r_{\theta}(x, z)$  approximates the function $-\tau \mathrm{gap}_{\xi}(x,z)$ well. To bound the excess risk of learning the retriever, we need to prove generalization error, and approximation error bounds for the MLP class.

\paragraph{Generalization error for MLP retriever:}
To bound the generalization error, we need to first bound the covering number $\mathcal{N}(\Theta, \nu, \|\cdot\|_{2, [n], \xi})$, for  $\Theta = {\rm MLP}(\mathbb{R}^{d_x + d_z}, \mathbb{R}; W, L)$. Here, $\XCal \subseteq \mathbb{R}^{d_x}$ and $\ICal \subseteq \mathbb{R}^{d_z}$ i.e., the retrieved space is embedded in $\mathbb{R}^{d_z}$. We first want to bound the covering number $\mathcal{N}(\Theta, \nu, \|\cdot\|_{2, [n], \xi})$ with a covering number of ${\rm MLP}(\mathbb{R}^{d_x + d_z}, \mathbb{R}; W, L)$. 

For a fixed data set $\mathcal{S}_n := \{(x_1, y_1), \dots, (x_n, y_n)\}$; predictor $\xi$; and two retrievers $\theta, \theta' \in \Theta$
\begin{align*}
    &\Big(\tfrac{1}{n}\sum_{i\in [n]} \big(\sum_{z \in \ICal}(p_{\theta}(z|x_i) - p_{\theta'}(z|x_i))\ell\big(h_{\xi}(x_i, z), y_i\big)\big)^2 \Big)^{1/2}\\
    & \qquad\overset{(i)}{\leq} \ell_{\max} \Big(\tfrac{1}{n}\sum_{i\in [n]} \big(\sum_{z \in \ICal} | p_{\theta}(z|x_i) - p_{\theta'}(z|x_i)| \big)^2 \Big)^{1/2}\\
    &\qquad \overset{(ii)}{\leq} \ell_{\max} \Big(\tfrac{1}{n}\sum_{i\in [n]} \big(\max_{z \in \ICal} |r_{\theta}(x_i, z) - r_{\theta'}(x_i, z)| \big)^2 \Big)^{1/2} \\
    &\qquad \overset{(iii)}{\leq} \ell_{\max} \sup_{x \in \mathcal{S}_n, z \in \ICal} |r_{\theta}(x, z) - r_{\theta'}(x, z)|
\end{align*}
Above, the inequality $(i)$ follow by upper bounding $\ell\big(h_{\xi}(x_i, z), y_i\big)$ with $\ell_{\max}$. The inequality $(ii)$ uses the fact that softmax functions over $K$ classes follow  $\|softmax(x) - softmax(y)\|_{1} \leq \|x - y\|_{\infty}$.

Let us define the norm $\|\cdot\|_{\infty, n |\ICal|}$ as 
$\|u\|_{\infty, n |\ICal|} \triangleq \sup_{x_i \in \mathcal{S}_n} \sup_{z \in \ICal}  |u_{i, z}|, ~\forall \mathbf{u} \in \mathbb{R}^{n \times |\ICal|}.$ Now consider a $\|\cdot\|_{\infty, n |\ICal|}$ norm cover of $\Theta$, $\Theta_{{\rm cov}}$ with cardinality  $\mathcal{N}(\Theta, \nu / \ell_{\max}, \|\cdot\|_{\infty, n |\ICal|})$. 

Note that, by definition, for any $\theta \in \Theta$, there exists a $\theta_{{\rm cov}}(\theta) \in \Theta_{{\rm cov}}$ such that $\sup_{x \in \mathcal{S}_n, z \in \ICal} |r_{\theta}(x, z) - r_{\theta_{{\rm cov}}(\theta)}(x, z)| \leq \nu / \ell_{\max}$. This means, that $\Theta_{{\rm cov}}$ forms a $\nu$-cover in the $\|\cdot\|_{2, [n], \xi}$ norm. In other words, we have 
$\mathcal{N}(\Theta, \nu, \|\cdot\|_{2, [n], \xi}) \leq \mathcal{N}(\Theta, \nu / \ell_{\max}, \|\cdot\|_{\infty, n |\ICal|}).$

Most existing results on covering number bounds for MLP assumes norm bounds for the MLP weights and biases. However, we do not impose such norm bounds for the MLP weights and biases. Therefore, we will use pseudo-dimension of the class $\Theta$ from~\cite{bartlett2019nearly} to bound the covering number $\mathcal{N}(\Theta, \nu, \|\cdot\|_{\infty, n |\ICal|})$ using ~\cite{zhang2023mathematical}. In particular, if the pseudo-dimension of $\Theta$ is $d_{VC}$, then we have 
$\log \mathcal{N}(\Theta, \nu, \|\cdot\|_{\infty, n |\ICal|}) \leq 1 + \log(1 + d_{VC}) + d_{VC} \log(\max\{2, e n |\ICal| /d_{VC} \nu\})$ as per  in~\cite[Theorem 5.11]{zhang2023mathematical}. From \cite[Theorem 6]{bartlett2019nearly} we know that for the class ${\rm MLP}(\mathbb{R}^{d}, \mathbb{R}; W, L)$ the pseudo-dimension is $O(L N \log(M))$, where $N  = O(L W^2)$ is the number of parameters, and $M = O(L W)$ is the number of computation units.  By setting $\varepsilon = c / \sqrt{n}$  for a constant $c$, and $\delta = 1/n$ in Equation~\eqref{eq:ret_generalization}, for large enough $L$ (we will set $L$ as a function of the data size $n$) we obtain the final generalization error as  
\begin{equation}\label{eq:ret-gen}
\Big|\mathbb{E}_{X}\big[\mathbb{E}_{Z\sim p_{\hat{\theta}(\xi)}(\cdot| X)} g_{\xi}(X, Z)\big] - \frac{1}{n}\sum_{i \in [n]}\sum_{z \in \ICal}p_{\hat{\theta}(\xi)}(z|x_i)\ell\big(h_{\xi}(x_i, z), y_i\big)\Big| = O\big(\frac{\ell_{\max}LW \sqrt{\log(LW)\log(n |\ICal|)}}{\sqrt{n}}\big). 
\end{equation}

\paragraph{Approximation error for MLP retriever: }
Our excess risk bounds closely follow the work of \cite{siegel2023optimal} which generalizes \cite{yarotsky2017error}.\footnote{We note \cite{siegel2023optimal} works with $\Omega = [0,1]^d$, and as mentioned therein, the analysis can be extended to bounded domain, e.g. $[a, b]^{d}$ which includes our setting. Furthermore, one can extend the analysis to non-integer Sobolev and Besov spaces following \cite{siegel2023optimal}.}
Under Assumption~\ref{a:gap-sobolev},  by specializing \cite[Theorem 1]{siegel2023optimal} with $p = q = \infty$  we get that
$$
\inf_{f\in {\rm MLP}(\mathbb{R}^{d_x + d_z}, \mathbb{R}; W, L)} \|f - \mathrm{gap}_{\xi}\|_{L_{\infty}(\Omega)} \leq C \|\mathrm{gap}_{\xi}\|_{W^\kappa(L_{\infty}(\Omega))} L^{-2\kappa/(d_x + d_z)}
$$
for $\Omega\in [-1,1]^{d_x + d_z}$, $W = 25 (d_x + d_z) + 31$ and $C = c(\kappa, d_x + d_z) < \infty$ (independent of L). Note that $\kappa$ is the number of derivatives of the Sobolev space under consideration in Assumption~\ref{a:gap-sobolev}. 

Therefore, under Assumption~\ref{a:gap-sobolev} for $\Theta = {\rm MLP}(\mathbb{R}^{d_x + d_z}, \mathbb{R}; 25 (d_x + d_z) + 31, L)$   we show that
\begin{equation}\label{eq:ret-approx}
    R_{\ell, \ICal}(\xi, \theta(\xi))  - \mathbb{E}_{X}\big[\min_{z\in \ICal} g_{\xi}(X, z)\big] \leq C'\ell_{\max} L^{-2\kappa/ (d_x + d_z)}  + \frac{\log(|\ICal|)}{\tau^2}.
\end{equation}

This follows from the following series of inequalities:
\begin{align*}
& R_{\ell, \ICal}(\xi, \theta(\xi))  - \mathbb{E}_{X}\big[\min_{z\in \ICal} g_{\xi}(X, z)\big] \\
& \qquad \overset{(i)}{\leq }\mathbb{E}_{X}\big[\|g_{\xi}(x, \cdot)\|_{\infty}\big]\mathbb{E}_{X}\big[ \|r_{\theta}(x, \cdot) + \tau \mathrm{gap}_{\xi}(x,\cdot)\|_{\infty}\big] + \frac{\log(|\ICal|)}{\tau^2} \\
& \qquad \overset{(ii)}{= } \tau \mathbb{E}_{X}\big[\|g_{\xi}(x, \cdot)\|_{\infty}\big] \|\tilde{r}_{\theta} - \mathrm{gap}_{\xi}\|_{L_{\infty}(\Omega)}+ \frac{\log(|\ICal|)}{\tau^2} \\
&\qquad \overset{(iii)}{\leq } C \tau \mathbb{E}_{X}\big[\|g_{\xi}(x, \cdot)\|_{\infty}\big] \|\mathrm{gap}_{\xi}\|_{W^\kappa(L_{\infty}(\Omega))} L^{-2\kappa/ (d_x + d_z)}  + \frac{\log(|\ICal|)}{\tau^2}\\
&\qquad \overset{(iv)}{\leq } C'\ell_{\max} \tau L^{-2\kappa/ (d_x + d_z)}  + \frac{\log(|\ICal|)}{\tau^2}
\end{align*}
The first inequality $(i)$ follows from Equation~\eqref{eq:ret-approx-app}. The second equality $(ii)$, replaces $\tilde{r}_{\theta} = - \tau r_{\theta}$. The inequality $(iii)$ follows by optimizing $\tilde{r}_{\theta}$ over the class $\Theta$, as we see then $- \tau r_{\theta}$ also lies in $\Theta$, and applying Theorem 1 in \cite{siegel2023optimal}. The final inequality $(iv)$ combines $C' = C \|\mathrm{gap}_{\xi}\|_{W^\kappa(L_{\infty}(\Omega))}$ and bounds $\mathbb{E}_{X}\big[\|g_{\xi}(x, \cdot)\|_{\infty}\big]  \leq \ell_{\max}$.

Note that the choice of $\tau$ is not algorithmic, we can optimize for $\tau$. In particular, we choose $\tau = c L^{- 2\kappa /3(d_x + d_z)}\log^{1/3}(|\ICal|)$ to obtain the approximation error bound as $O(\ell_{\max}L^{- 4\kappa /3(d_x + d_z)}\log^{1/3}(|\ICal|))$, where we treat the remaining terms that are independent of $\tau$ and $L$ as constants.  

\paragraph{Excess risk for MLP retriever learning:} Adding the approximation error~\eqref{eq:ret-approx}, and the generalization error~\eqref{eq:ret-gen} we bound the excess risk  as 
\begin{align}
\text{ Excess Risk} &\leq \underbrace{\mathbb{E}_{X}\big[\min_{z\in \ICal} g_{\xi}(X, z)\big]  - R_{\ell, \ICal}(f_{{\rm opt}, \ICal}^{\ell})}_{\text{error from predictor $\xi$}} + \underbrace{O(\ell_{\max}L^{- \tfrac{4\kappa}{3(d_x + d_z)}}\log^{1/3}(|\ICal|))}_{\text{retriever approximation error}} \nonumber\\
& \qquad \qquad \quad +  \underbrace{O\big(\frac{\ell_{\max}LW \sqrt{\log(LW)\log(n |\ICal|)}}{\sqrt{n}}\big)}_{\text{retriever generalization error}} \label{eq:excess-risk-mlp-ret}
\end{align}
By choosing $L = n^{\tfrac{3(d_x + d_z)}{6(d_x + d_z) + 8 \kappa}}$, and using the data-store size $|\ICal| = poly(n)$ and width $W = O(d_x + d_z)$ we obtain
\begin{align}
\text{ Excess Risk} \leq \underbrace{\mathbb{E}_{X}\big[\min_{z\in \ICal} g_{\xi}(X, z)\big]  - R_{\ell, \ICal}(f_{{\rm opt}, \ICal}^{\ell})}_{\text{error from predictor $\xi$}} 
+ \underbrace{\tilde{O}(\ell_{\max} n^{- \tfrac{2 \kappa}{3(d_x + d_z) + 4 \kappa}})}_{\text{retriever combined error}}. \label{eq:excess-risk-rate-mlp-ret}
\end{align}

%%%%%%%%%%%%%%%%%%%%%%%%%%%%%%%%%%
%       PREDICTOR LEARNING.      %
%%%%%%%%%%%%%%%%%%%%%%%%%%%%%%%%%%
\subsection{Learning the predictor}
\label{app:learn-predictor}
We now quantify the excess risk of a predictor $\xi$ for a fixed retriever $\theta$.  For a fixed retriever $\theta$, the learning task of the predictor is to minimize
\begin{align*}
&\mathbb{E}_{(X,Y)\sim \DCal_{XY}}[\mathbb{E}_{Z\sim p_{\theta}(\cdot| X)}\ell(h_{\xi}(X, Z), Y)] =  \mathbb{E}_{((X, Z), Y)\sim \DCal_{XY}\times p_{\theta}(\cdot| X)}\big[\ell(h_{\xi}(X, Z), Y)|X \big]
\end{align*}
The predictor now learns from the joint distribution $\DCal_{XY}\times p_{\theta}(\cdot| X)$. We assume that the \emph{hardness} of the classification task performed by the predictor varies with the selected retriever $\theta$. 

Similar to retriever learning in Section~\ref{app:learn-retriever},  for a fixed retriever $\theta$, the predictor that minimizes the empirical risk  $\hat{\xi}(\theta)$, and the predictor that minimizes the population risk $\xi^{\ast}(\theta)$  over the class $\Xi$ are defined as  
\begin{align*}
 &\hat{\xi}(\theta) = \argmin_{\xi \in \Xi} \frac{1}{n}\sum_{i \in [n]}\sum_{z \in \ICal}p_{\theta}(z|x_i)\ell\big(h_{\xi}(x_i, z), y_i\big),\\
 &\xi^{\ast}(\theta) = \argmin_{\xi \in \Xi} \mathbb{E}_{X}\big[\mathbb{E}_{Z\sim p_{\theta}(\cdot| X)} g_{\xi}(X, Z)\big],   
\end{align*}
where $g_{\xi}(X, Z) = \mathbb{E}_{Y|X}\ell(h_{\xi}(X, Z), Y)$.  We also define the predictor over the class $\Xi$ with `optimal' retrieval (possibly outside of $\Theta$) that minimizes the population risk as $\xi^{\ast}$ as 
$
\xi^{\ast} = \argmin_{\xi \in \Xi} \mathbb{E}_{X}\big[\min_{z \in \ICal} g_{\xi}(X, z)\big].
$

\paragraph{Usefulness of data-store:}
We start with characterization of the prediction task in the presence of the data-store $\ICal$.  We consider that there exists a score function $h_{*}: \XCal \times \ZCal \to \mathbb{R}^{|\YCal|}$ and corresponding probability distribution 
\begin{equation}\label{eq:score-function}
    p_{*}^y(x, z) = \frac{\exp(h_{*}^y(x,z))}{\sum_{y'} \exp(h_{*}^{y'}(x,z))}
\end{equation}
that approximates well $p_{\DSf_{XY}}^y(x) \triangleq \mathbb{P}_{Y\sim\DSf_{XY}}(y|X=x)$ for all $x \in \XCal$ and $y \in \YCal$. Furthermore, this score function $h_{*}$  lies coordinate wise in the Sobolev space (see Definition~\ref{def:sobolev}). The Assumption~\ref{a:true-sobolev-main} captures the above. We restate the assumption here for convenience. 
\begin{assumption}[Retrieval quality]\label{a:true-sobolev}
There exists a score function $h_{*}: \XCal \times \ZCal \to \mathbb{R}^{|\YCal|}$ such that
\begin{enumerate}
    \item for each $y \in \YCal$, the function $h_{*}^y$ (the $y$-th coordinate of $h^*$) lies in the Sobolev space with $\kappa_{\ICal}$ derivatives and finite $L_{\infty}([-1, 1]^{d_x + d_z})$ norm,
    \item for any $x \in \XCal$ there exists a retrieved example $z^*(x) \in \ICal$ such that for $p_{*}^y(x, z)$ as defined in Equation~\eqref{eq:score-function}
    $$\max_{y \in \YCal}\sup_{x\in \XCal}|p_{*}^y(x, z(x)) - p_{\DSf_{XY}}^y(x)| \leq c_{\ICal} |\ICal|^{-\gamma_{\ICal}}.$$ 
\end{enumerate}
\end{assumption}
Note that the tuple $(\gamma_{\ICal}, d_z, \kappa_{\ICal})$ defines the usefulness of the data-store $\ICal$. In particular, the higher the $\gamma_{\ICal}$ the closer the approximation, and  the higher the $\kappa_{\ICal}$ and the smaller the embedding dimension $d_z$ the `easier' the score function used for this approximation.

\paragraph{Excess risk decomposition}
The excess risk decomposition for the learned predictor $\hat{\xi}(\theta)$ takes the following form.
{\allowdisplaybreaks
\begin{align}
&R_{\ell, \ICal}(\hat{\xi}(\theta), \theta) - R_{\ell, \ICal}(f_{{\rm opt}, \ICal}^{\ell}) \nonumber \\
&\qquad\overset{(i)}{\leq} \sum_{\xi = \xi^{\ast}(\theta), \hat{\xi}(\theta)}\big|\frac{1}{n}\sum_{i \in [n]}\sum_{z \in \ICal}p_{\theta}(z|x_i)\ell\big(h_{\xi}(x_i, z), y_i\big) - \mathbb{E}_{X}\big[\mathbb{E}_{Z\sim p_{\theta}(\cdot| X)} g_{\xi}(X, Z)\big]\big| \nonumber \\
&\qquad\quad + R_{\ell, \ICal}(\xi^{\ast}(\theta), \theta) - R_{\ell, \ICal}(f_{{\rm opt}, \ICal}^{\ell}) \nonumber \\
&\qquad\overset{(ii)}{\leq}  \sum_{\xi = \xi^{\ast}(\theta), \hat{\xi}(\theta)}\big|\frac{1}{n}\sum_{i \in [n]}\sum_{z \in \ICal}p_{\theta}(z|x_i)\ell\big(h_{\xi}(x_i, z), y_i\big) - \mathbb{E}_{X}\big[\mathbb{E}_{Z\sim p_{\theta}(\cdot| X)} g_{\xi}(X, Z)\big]\big| \nonumber \\
&\qquad\quad+ \underbrace{R_{\ell, \ICal}(\xi^{\ast}(\theta), \theta) - R_{\ell, \ICal}(\xi^{\ast}, \theta)}_{\leq 0} + R_{\ell, \ICal}(\xi^{\ast}, \theta)- R_{\ell, \ICal}(f_{{\rm opt}, \ICal}^{\ell}) \nonumber \\
&\qquad\overset{(iii)}{\leq}  \underbrace{\sum_{\xi = \xi^{\ast}(\theta), \hat{\xi}(\theta)}\big|\frac{1}{n}\sum_{i \in [n]}\sum_{z \in \ICal}p_{\theta}(z|x_i)\ell\big(h_{\xi}(x_i, z), y_i\big) - \mathbb{E}_{X}\big[\mathbb{E}_{Z\sim p_{\theta}(\cdot| X)} g_{\xi}(X, Z)\big]\big|}_{\text{generalization error}} \nonumber \\
&\qquad\quad + \underbrace{R_{\ell, \ICal}(\xi^{\ast}, \theta)  - \mathbb{E}_{X}\big[\min_{z\in \ICal} g_{\xi^{\ast}}(X, z)\big]}_{\text{retriever error}} + \underbrace{\mathbb{E}_{X}\big[\min_{z\in \ICal} g_{\xi^{\ast}}(X, z)\big] - \mathbb{E}_{X}\big[\min_{z\in \ICal} g_{f_{{\rm opt}, \ICal}^{\ell}}(X, z)\big]}_{\text{predictor error}} \label{eq:pred-decomp}
\end{align}
}
Note that in the inequality $(ii)$, the predictor $\xi^{\ast}(\theta)$ which is optimised for the fixed retriever $\theta$ has lower risk compared to the predictor $\xi^{\ast}$, i.e.    $R_{\ell, \ICal}(\xi^{\ast}(\theta), \theta) \leq  R_{\ell, \ICal}(\xi^{\ast}, \theta)$.

\subsubsection{Approximation error}
We specialize our analysis for the log-loss bounded by $\ell_{\max} > 0$ given as 
\begin{equation}\label{eq:bounded-cross-entropy-loss-app}
\ell(h_{\xi}(x,z), y) = \min(\ell_{\max}, - \log(p_{\xi}(y|x,z))) =  \min(\ell_{\max}, \log(\sum_{y'\in \YCal}\exp(h^{y'}_{\xi}(x,z))) - h^{y}_{\xi}(x,z)).
\end{equation}

Note that we need to bound the predictor error $(\mathbb{E}_{X}\big[\min_{z\in \ICal} g_{\xi^{\ast}}(X, z)\big] - \mathbb{E}_{X}\big[\min_{z\in \ICal} g_{f_{{\rm opt}, \ICal}^{\ell}}(X, z)\big])$ for the bounded log-loss. We want to relate this term to the $p_{*}^y(x, z)$ (cf. Equation.\eqref{eq:score-function}) for which we have good control over its complexity.  We first need a lower bound for  
$\mathbb{E}_{X}\big[\min_{z\in \ICal} g_{f_{{\rm opt}, \ICal}^{\ell}}(X, z)\big]$ as a function of $p_{*}^y(x, z)$. We proceed as follows:
\begin{align}
    &\mathbb{E}_{X}\big[\min_{z\in \ICal} g_{f_{{\rm opt}, \ICal}^{\ell}}(X, z)\big] \nonumber\\
    &\qquad\overset{(i)}{\geq} \mathbb{E}_{X}\big[\sum_{y \in \YCal}p_{\DSf_{XY}}^y(X) \min(\ell_{\max}, -\ln(p_{\DSf_{XY}}^y(X)))\big] -  (|\YCal| - 1)\exp(-\ell_{\max}) \nonumber\\
    &\qquad\overset{(ii)}{\geq}  \mathbb{E}_{X}\big[\sum_{y \in \YCal} p_{\DSf_{XY}}^y(X) \min(\ell_{\max}, -\ln( p_{*}^y(X, z^*(X)))\big] -  (|\YCal| - 1)\exp(-\ell_{\max}) \nonumber\\
    &\quad\quad- \exp(\ell_{\max})\, \mathbb{E}_{X}\big[\max_{y \in \YCal} |p_{*}^y(X, z^*(X)) - p_{\DSf_{XY}}^y(X)|\big] \nonumber\\
    &\qquad\overset{(iii)}{\geq}  \mathbb{E}_{X}\big[\sum_{y \in \YCal} p_{\DSf_{XY}}^y(X) \min(\ell_{\max}, -\ln( p_{*}^y(X, z^*(X)))\big] -  (|\YCal| - 1)\exp(-\ell_{\max}) - c_{\ICal} |\ICal|^{- \gamma_{\ICal}} \exp(\ell_{\max}) \nonumber\\
    &\qquad\overset{(iv)}{=}  \mathbb{E}_{X}\big[g_{h_{*}}(X, z^*(X))\big] -  (|\YCal| - 1)\exp(-\ell_{\max}) - c_{\ICal} |\ICal|^{- \gamma_{\ICal}} \exp(\ell_{\max})\label{eq:g-lower-bound}
\end{align}
In the first inequality, applying Proposition~\ref{prop:bounded-celoss}  to our setting with $C = \ell_{\max}$ and $K = |\YCal|$ we obtain the lower bound. The second inequality follows from mean-value theorem as below, 
\begin{align*}
&|\min(C, - \log(x)) - \min(C, - \log(y))| 
\leq \sup_x \big\lvert\frac{\partial}{\partial x} \min(C, - \log(x))\big\rvert \times |x - y|
\leq \exp(C)|x - y|
\end{align*}
% Another alternative way 
% \begin{align*}
% &|\min(C, - \log(x)) - \min(C, - \log(y))| \\
% & \quad=  |- \log(\max(x, \exp(- C))) +  \log(\max(y, \exp(- C)))|\\
% & \quad \leq \max \big(\big\lvert 1 - \tfrac{\max(x, \exp(- C))}{\max(y, \exp(- C))} \big\rvert, \big\lvert\tfrac{\max(y, \exp(- C))}{\max(x, \exp(- C))} - 1 \big\rvert\big)\\
% & \quad \leq \frac{|\max(x, \exp(- C) - \max(y, \exp(- C)|}{\exp(-C)} \\
% & \quad \leq \exp(C)|x - y|
% \end{align*}
Next inequality $(iii)$ is obtained by Assumption~\ref{a:true-sobolev} with $z^*(x)$ is ad defined therein. The final inequality substitutes  
$g_{h^{*}}(x, z^*(x)) = \mathbb{E}_{Y|X=x}[\ell(h_{*}(x, z^*(x)), y)]$ where $h_{*}(x,z)$ is the score function used in Equation~\eqref{eq:score-function}.

We now derive an upper bound for the predictor error part of our excess risk bound in Equation~\eqref{eq:pred-decomp}. Let $\xi \in \Xi$ be an arbitrary predictor
\begin{align*}
    \text{Predictor Error}
    &\triangleq \mathbb{E}_{X}\big[\min_{z\in \ICal} g_{\xi^{\ast}}(X, z)\big] - \mathbb{E}_{X}\big[\min_{z\in \ICal} g_{f_{{\rm opt}, \ICal}^{\ell}}(X, z)\big] \\
    &\overset{(i)}{\leq} \mathbb{E}_{X}\big[\min_{z\in \ICal} g_{\xi^{\ast}}(X, z)\big] - \mathbb{E}_{X}\big[g_{h_{*}}(X, z^*(X))\big] +  (|\YCal| - 1)\exp(-\ell_{\max}) + c_{\ICal} |\ICal|^{- \gamma_{\ICal}}\exp(\ell_{\max})\\
    &\overset{(ii)}{\leq} \mathbb{E}_{X}\big[\min_{z\in \ICal} g_{\xi}(X, z)\big] - \mathbb{E}_{X}\big[g_{h_{*}}(X, z^*(X))\big] +  (|\YCal| - 1)\exp(-\ell_{\max}) + c_{\ICal} |\ICal|^{- \gamma_{\ICal}}\exp(\ell_{\max})\\
    &\overset{(iii)}{\leq} \mathbb{E}_{X}\big[g_{\xi}(X, z^*(X))\big] - \mathbb{E}_{X}\big[g_{h_{*}}(X, z^*(X))\big] +  (|\YCal| - 1)\exp(-\ell_{\max}) + c_{\ICal} |\ICal|^{- \gamma_{\ICal}}\exp(\ell_{\max})
\end{align*}
The second inequality follows by substituting the lower bound of $\mathbb{E}_{X}\big[\min_{z\in \ICal} g_{f_{{\rm opt}, \ICal}^{\ell}}(X, z)\big]$ from Equation~\eqref{eq:g-lower-bound}. As $\xi^{\ast}$ optimizes $\ell$-risk over $\Xi$, we can substitute with the arbitrary predictor $\xi$ to obtain an upper bound. The final inequality is obtained by substituting $z^*(X)$ instead of minimizing with respect to $z \in \ICal$.  
Note that the final inequality holds for all $\xi \in \Xi$ as the initial choice of $\xi$ was arbitrary.

Bounding the term 
$\mathbb{E}_{X}\big[g_{\xi}(X, z^*(X))\big] - \mathbb{E}_{X}\big[g_{h_{*}}(X, z^*(X))\big]$,
is similar to bounding the $\ell$-risk for classification with the data distribution $\mathbb{P}(X = x, Z = z , Y = y) = \mathbb{P}_{\DSf_{XY}}(X = x, Y = y)\mathbbm{1}(z = z^*(X))$. Our strategy is to bound $\ell$-risk with $L_{\infty}$ distance between the score functions $h_{\xi}^y(x, z)$ and the score function $h_{*}^y(x, z)$ which lies in the Sobolev space as given in the Assumption~\ref{a:true-sobolev}. In particular, we have the following $L_{\infty}$ norm bound. 
\begin{align*}
    &\mathbb{E}_{X}\big[g_{\xi}(X, z^*(X))\big] - \mathbb{E}_{X}\big[g_{h_{*}}(X, z^*(X))\big]\\
    &\qquad \overset{(i)}{=} \mathbb{E}_{XY}\big[ \ell(h_{\xi}^Y(X, z^*(X))) - \ell(h_{*}^Y(X, z^*(X)))\big] \\
    &\qquad \overset{(ii)}{\leq} \mathbb{E}_{XY}\big[ |h_{\xi}^Y(X, z^*(X)) - h_{*}^Y(X, z^*(X))|  + 
    \max_{y \in \YCal} |h_{\xi}^y(X, z^*(X)) - h_{*}^y(X, z^*(X))|\big]\\
    &\qquad \overset{(iii)}{\leq} 2\times  \mathbb{E}_{X}\big[ \max_{y \in \YCal} |h_{\xi}^y(X, z^*(X)) - h_{*}^y(X, z^*(X))|\big]
\end{align*}
The inequality $(ii)$ follows by substituting the bounded log-loss, and using the fact that for any two $s, s' \in \mathbb{R}^K$, $|\log(\sum_{k} \exp(s_k)) - \log(\sum_{k} \exp(s'_k))| \leq \max_{k} |s_k - s'_k|$.
The final inequality $(iii)$ follows by bounding the first term by second.

We note that the above holds for all $\xi$. This gives the general approximation error bound as
\begin{equation}
\label{eq:pred-approx-app}
\text{Predictor Error} \leq \inf_{\xi \in \Xi}2  \mathbb{E}_{X}\big[ \max_{y \in \YCal} |h_{\xi}^y(X, z^*(X)) - h_{*}^y(X, z^*(X))|\big] +  (|\YCal| - 1)\exp(-\ell_{\max}) + c_{\ICal} |\ICal|^{- \gamma_{\ICal}} \exp(\ell_{\max}).
\end{equation}

Note the predictor approximation error is independent of retriever learning as it is compared with respect to the Bayes optimal retriever (i.e. $\min_{z \in \ICal}g_{\xi}(x,z)$) as seen in the Equation~\eqref{eq:pred-decomp}.

\subsubsection{Generalization error} The generalization error in Equation~\eqref{eq:pred-decomp} can be bounded in a similar manner as the retriever learning in Section~\ref{app:learn-retriever}. Note that the predictor is learnt over the space $\Xi$ while the retriever is fixed in this setup.
\begin{align*}
    &|\mathbb{E}_{X}\big[\mathbb{E}_{Z\sim p_{\theta}(\cdot| X)} g_{\hat{\xi}(\theta)}(X, Z)\big] -  \frac{1}{n}\sum_{i \in [n]}\sum_{z \in \ICal}p_{\theta}(z|x_i)\ell\big(h_{\hat{\xi}(\theta)}(x_i, z), y_i\big)|\\
    &\qquad\overset{(i)}{\leq}  2\mathbb{E}_{\boldsymbol{\sigma}}\Big[\max_{\xi \in \Xi}\frac{1}{n}\sum_{i \in [n]}\sigma_i\sum_{z \in \ICal}p_{\theta}(z|x_i)\ell\big(h_{\xi}(x_i, z), y_i\big)\Big] + 3 \ell_{\max}\sqrt{\tfrac{\log(2/\delta)}{n}}\\
    &\qquad\overset{(ii)}{\leq}  2 \times \inf_{\varepsilon \in [0, c_{\theta}/2]}\big(4\varepsilon + \tfrac{12}{\sqrt{n}} \int_{\varepsilon}^{c_{\theta}/2} \sqrt{\log(\mathcal{N}(\Xi, \nu, \|\cdot\|_{2, [n], \theta}))}d \nu\big)  + 3 \ell_{\max}\sqrt{\tfrac{\log(2/\delta)}{n}}
\end{align*}
The final inequality again follows using covering number based bounds with chaining (cf. ~\cite{shalev2014understanding}). 
We have used for a fixed retriever $\theta$
$$
c_{\theta} = \sup_{\xi \in \Xi} \Big(\tfrac{1}{n}\sum_{i\in [n]} \big(\sum_{z \in \ICal}p_{\theta}(z|x_i)\ell\big(h_{\xi}(x_i, z), y_i\big)\big)^2 \Big)^{1/2},
$$
and $\mathcal{N}(\Xi, \nu, \|\cdot\|_{2, [n], \theta})$ denote the covering number of the predictor function class $\Xi$ with error $\nu$ in $L_2$ norm w.r.t. the set $\{(x_i, y_i): i \in [n]\}$ and fixed $\theta$,
$$
\|\mathbf{u}\|_{2, [n], \theta} := \Big(\tfrac{1}{n}\sum_{i\in [n]} \big(\sum_{z \in \ICal}p_{\theta}(z|x_i)u_{i,z}\big)^2 \Big)^{1/2}, \, \forall \mathbf{u} \in \mathbb{R}^{n \times |\ICal|}.
$$

As $\xi^{\ast}(\theta)$ is fixed for a fixed $\theta$, we can directly bound without any union over the learner/predictor space,
$$
|\mathbb{E}_{X}\big[\mathbb{E}_{Z\sim p_{\theta}(\cdot| X)} g_{\xi^{\ast}(\theta)}(X, Z)\big] -  \frac{1}{n}\sum_{i \in [n]}\sum_{z \in \ICal}p_{\theta}(z|x_i)\ell\big(h_{\xi^{\ast}(\theta)}(x_i, z), y_i\big)| \leq 3 \ell_{\max}\sqrt{\tfrac{\log(2/\delta)}{n}}.
$$

\subsubsection{Instantiation of MLP predictor}
\label{sssec:pred-mlp}
As a concrete example, we now consider the space $\Xi = {\rm MLP}(\mathbb{R}^{d_x + d_z}, \mathbb{R}^{\YCal}; W, L)$ as defined in Equation~\eqref{eq:mlp-def}. 

\paragraph{Approximation error of MLP predictor:} Our approximation results rely mainly on the results in \cite{siegel2023optimal}. The key difference here is the output is now $|\YCal|$ dimensional. We find an MLP of depth $L$ and width at most $W' = O(d_x + d_z)$ to individually approximate the functions $h_{*}^y(x, z)$ for each $y \in \YCal$. Later we can join these networks in parallel to obtain a final network with depth $L$ and width at most $O((d_x + d_z)|\YCal|)$. In principle, these networks may share sub-networks (e.g. the bit extraction networks, the sub-domain indexation network for $p = q$ in \cite{siegel2023optimal}) used for constructing the approximation. However, this is out of scope for this work, and we leave this open.

From Theorem 1 in \cite{siegel2023optimal}, by taking $p = q = \infty$ in the theorem statement, under Assumption~\ref{a:true-sobolev} we get that for each $y \in \YCal$ there exists a MLP $f_y \in {\rm MLP}(\mathbb{R}^{d_x + d_z}, \mathbb{R}; W, L)$ such that 
$$
\|f_y - h_{*}^y\|_{L_{\infty}(\Omega)} \leq C_y \|h_{*}^y\|_{W^\kappa(L_{\infty}(\Omega))} L^{-2\kappa_{\ICal}/(d_x + d_z)}
$$
for $\Omega\in [-1,1]^{d_x + d_z}$, $W = 25 (d_x + d_z) + 31$ and $C_y = c(\kappa_{\ICal}, d_x + d_z) < \infty$ (independent of L).
By concatenating the networks $f_y$ for $y \in \YCal$ in parallel (c.f. Lemma 5 in  \cite{siegel2023optimal}), and using the first layer to share the $(d_x + d_z)$ input to these parallel networks we obtain a MLP $f_{{\rm opt}} \in {\rm MLP}(\mathbb{R}^{d_x + d_z}, \mathbb{R}^K; W_{\YCal}, L + 1)$, $W_{\YCal} = O(|\YCal|(d_x + d_z))$,  such that we have
$$
\|f^y_{{\rm opt}} - h_{*}^y\|_{L_{\infty}(\Omega)} \leq \big( \max_{y \in \YCal} C_y \|h_{*}^y\|_{W^\kappa(L_{\infty}(\Omega))}\big) L^{-2\kappa_{\ICal}/(d_x + d_z)}.
$$ 

By using $\xi = f^y_{{\rm opt}}$  in our bounds we obtain the predictor error as 
\begin{align}
\label{eq:pred-mlp-approx-app}
    \text{Predictor Error} \leq 2 \big( \max_{y \in \YCal} C_y \|h_{*}^y\|_{W^\kappa(L_{\infty}(\Omega))}\big) L^{-2\kappa_{\ICal}/(d_x + d_z)} +  (|\YCal| - 1)\exp(-\ell_{\max}) + c_{\ICal} |\ICal|^{- \gamma_{\ICal}}
\end{align}

\paragraph{Generalization error for MLP predictor:}
We now bound the generalization error in Equation~\ref{eq:pred-decomp}  when $\Xi$ denotes a class of multi-layer perceptron (MLP) with Relu nonlinearity ${\rm MLP}(\mathbb{R}^{(d_x + d_z)}, \mathbb{R}^{|\YCal|}; W, L)$. 

The first step is to bound the covering number $\mathcal{N}(\Xi, \nu, \|\cdot\|_{2, [n], \theta})$ norm with the covering number $\mathcal{N}(\Xi, \nu, \|\cdot\|_{\infty, n |\ICal| |\YCal|})$. Where  $\|\cdot\|_{\infty, n |\ICal| |\YCal|}$ is defined as $\|u\|_{\infty, n |\ICal| |\YCal|} = \sup_{x_i \in \mathcal{S}_n} \sup_{z \in \ICal} \sup_{y \in \YCal}  |u_{i, z, y}|, ~\forall \mathbf{u} \in \mathbb{R}^{n \times |\ICal| \times |\YCal|}.$

For a fixed data set $\mathcal{S}_n := \{(x_1, y_1), \dots, (x_n, y_n)\}$  and retriever $\xi$, and two predictors $\xi, \xi' \in \Xi$, we have
\begin{align*}
    &\Big(\tfrac{1}{n}\sum_{i\in [n]} \big(\sum_{z \in \ICal}p_{\theta}(z|x_i) (\ell\big(h_{\xi}(x_i, z), y_i\big) - \ell\big(h_{\xi'}(x_i, z), y_i\big))\big)^2 \Big)^{1/2}\\
    &\qquad \overset{(i)}{\leq} \Big(\tfrac{1}{n}\sum_{i\in [n]} \sum_{z \in \ICal}p_{\theta}(z|x_i) \big( \ell\big(h_{\xi}(x_i, z), y_i\big) - \ell\big(h_{\xi'}(x_i, z), y_i\big)\big)^2 \Big)^{1/2}\\
    &\qquad \overset{(ii)}{\leq} \Big(\tfrac{1}{n}\sum_{i\in [n]} \sum_{z \in \ICal}p_{\theta}(z|x_i) 
    \big(|h^{y_i}_{\xi}(x_i, z) - h^{y_i}_{\xi'}(x_i, z)| +  \max_{y \in \YCal}|h^{y}_{\xi}(x_i, z) - h^{y}_{\xi'}(x_i, z)| \big)^2 \Big)^{1/2}\\
    &\qquad \overset{(iii)}{\leq} \Big(\tfrac{1}{n}\sum_{i\in [n]} \sum_{z \in \ICal}p_{\theta}(z|x_i) 
    \big(|h^{y_i}_{\xi}(x_i, z) - h^{y_i}_{\xi'}(x_i, z)| +  \max_{y \in \YCal}|h^{y}_{\xi}(x_i, z) - h^{y}_{\xi'}(x_i, z)| \big)^2 \Big)^{1/2}\\
    &\qquad \overset{(iv)}{\leq} \sqrt{2}\sup_{x \in \XCal}\sup_{y \in \YCal}\sup_{z \in \ICal}|h^{y}_{\xi}(x, z) - h^{y}_{\xi'}(x, z)|
\end{align*}
The first inequality follows from Jensen. For the case of bounded log-loss, we obtain the second inequality using the fact that for any two $s, s' \in \mathbb{R}^K$, $|\log(\sum_{k} \exp(s_k)) - \log(\sum_{k} \exp(s'_k))| \leq \max_{k} |s_k - s'_k|$. 

Let $\Xi_{{\rm cov}}$ be a $\|\cdot\|_{\infty, n |\ICal| |\YCal| }$ norm cover  for the space $\Xi$ of cardinality $\mathcal{N}(\Xi, \nu, \|\cdot\|_{\infty, n |\ICal| |\YCal|})$. That implies, for any $\xi \in \Xi$  there exists a $\xi'(\xi) \in \Xi_{{\rm cov}}$ such that 
$\sup_{x \in \XCal}\sup_{y \in \YCal}\sup_{z \in \ICal}|h^{y}_{\xi}(x, z) - h^{y}_{\xi'}(x, z)| \leq \nu$. Therefore, due to the above inequality, we have 
$\Big(\tfrac{1}{n}\sum_{i\in [n]} \big(\sum_{z \in \ICal}p_{\theta}(z|x_i) (\ell\big(h_{\xi}(x_i, z), y_i\big) - \ell\big(h_{\xi'}(x_i, z), y_i\big))\big)^2 \Big)^{1/2}\leq \nu$. So $\Xi_{{\rm cov}}$ forms a cover of $\Xi$ with respect to the $\|\cdot\|_{2, [n], \theta}$ norm. Hence,  $\mathcal{N}(\Xi, \nu, \|\cdot\|_{2, [n], \theta}) \leq \mathcal{N}(\Xi, \nu, \|\cdot\|_{\infty, n |\ICal| |\YCal|}).$

We need to bound $\mathcal{N}(\Xi, \nu, \|\cdot\|_{\infty, n |\ICal| |\YCal|})$ next. Similar to the retrieval analysis in Section~\ref{app:learn-retriever}, we first apply ~\cite{zhang2023mathematical} to bound the covering number $\mathcal{N}(\Xi, \nu, \|\cdot\|_{\infty, n |\ICal| |\YCal|})$ with pseudo-dimension. However, we need slight reformulation of the function $h_{\xi}: \XCal \times \ZCal \to \mathbb{R}^{|\YCal|}$ to apply the results therein. Let us define function $\tilde{h}_{\xi}: \XCal \times \ZCal \times \YCal \to \mathbb{R}$, where for each $y \in \YCal$ we have $\tilde{h}_{\xi}(x, y, z) = h^y_{\xi}(x, z)$. It is easy to see that $\mathcal{N}(\Xi, \nu, \|\cdot\|_{\infty, n |\ICal| |\YCal|})$ covering of set $\Xi$ remains unchanged due to this reformulation. In particular, if the pseudo-dimension of $\{\tilde{h}_{\xi}: \xi \in \Xi\}$ is $\tilde{d}_{VC}$, then we have $\log \mathcal{N}(\Xi, \nu, \|\cdot\|_{\infty, n |\ICal| |\YCal|}) \leq 1 + \log(1 + \tilde{d}_{VC}) + \tilde{d}_{VC} \log(\max\{2, e n |\ICal| |\YCal| / \tilde{d}_{VC} \nu\})$ as per Theorem 5.11 in~\cite{zhang2023mathematical}.

Next we derive the pseudo-dimension of the class $\{\tilde{h}_{\xi}: \xi \in \Xi\}$ using ~\cite{bartlett2019nearly}. One challenge here is that for the MLP we are considering the label $y$ does not lie in the input space, rather this correspond to one coordinate of the $|\YCal|$-dimensional output. This can be captured with the slight modification of Theorem 6 in ~\cite{bartlett2019nearly}, namely Theorem~\ref{thm:multioutput-bartlett} in Appendix~\ref{sec:prelims}. By Theorem~\ref{thm:multioutput-bartlett}  we have for $\Xi = {\rm MLP}(\mathbb{R}^{d_x + d_z}, \mathbb{R}^{|\YCal|}; L, W)$ the VC dimension of $\Xi$ as ${\rm VCdim}(\Xi) = O(L\log(|\YCal|) + L^2W^2 \log(LW))$. The final generalization bound obtained  is as
\begin{equation}\label{eq:pred-gen}
  \text{Generalization Error} \leq O\bigg(\frac{\ell_{\max} \sqrt{(L\log(|\YCal|) + L^2W^2 \log(LW))\log(n |\ICal||\YCal|)}}{\sqrt{n}}\bigg).  
\end{equation}

\paragraph{Excess risk of predictor learning:} We can now combine the generalization error~\eqref{eq:pred-gen} and approximation error~\eqref{eq:pred-approx-app} to obtain the final excess risk. The final excess risk is upper bounded as 
\begin{align}
\text{ Excess Risk} &\leq \underbrace{R_{\ell, \ICal}(\xi^{\ast}, \theta)  - \mathbb{E}_{X}\big[\min_{z\in \ICal} g_{\xi^{\ast}}(X, z)\big]}_{\text{error from retriever $\theta$}} + \underbrace{O\big(L^{-2\kappa_{\ICal}/(d_x + d_z)} +  (|\YCal| - 1)\exp(-\ell_{\max}) + c_{\ICal} |\ICal|^{- \gamma_{\ICal}}\exp(\ell_{\max})\big)}_{\text{predictor approximation error}} \nonumber\\
&+  \underbrace{O\bigg(\frac{\ell_{\max} \sqrt{(L\log(|\YCal|) + L^2W^2 \log(LW))\log(n |\ICal||\YCal|)}}{\sqrt{n}}\bigg)}_{\text{predictor generalization error}} \nonumber\\
&=\underbrace{R_{\ell, \ICal}(\xi^{\ast}, \theta)  - \mathbb{E}_{X}\big[\min_{z\in \ICal} g_{\xi^{\ast}}(X, z)\big]}_{\text{error from retriever $\theta$}} 
+ \underbrace{\tilde{O}\bigg(|\YCal|^{\tfrac{2\kappa_{\ICal}}{(d_x + d_z) + 2 \kappa_{\ICal}}} n^{- \tfrac{\kappa_{\ICal}}{(d_x + d_z) + 2 \kappa_{\ICal}}}\bigg)}_{\text{predictor combined error}} \label{eq:excess-risk-rate-mlp-pred}
\end{align}
We have data store grow polynomially with data, $|\ICal| = \Omega(n^s |\YCal|^{1/\gamma_{\ICal}})$, and we let $\ell_{\max} = \log(|\YCal|) + s'\log(n)$. For $s \geq \frac{2 \kappa_{\ICal}}{((d_x + d_z) + 2 \kappa_{\ICal})\gamma_{\ICal}}$ and $s' \geq \frac{\kappa_{\ICal}}{((d_x + d_z) + 2 \kappa_{\ICal})}$, the final error bound for predictor follows by setting $L = n^{\tfrac{(d_x + d_z)}{2 (d_x + d_z) + 4 \kappa_{\ICal}}}|\YCal|^{-\frac{d_x + d_z}{(d_x + d_z) + 2 \kappa_{\ICal}}}$. Note that the choice of $L$ and $W$ here are related to predictor size, and are independent of the choices in retriever size.  Moreover, here we see Assumption~\ref{a:true-sobolev} forces the quality of retriever set to become the bottleneck in predictor excess risk, if we have  $|\ICal| = o(n^s |\YCal|^{1/\gamma_{\ICal}})$ for $s = \frac{2 \kappa_{\ICal}}{((d_x + d_z) + 2 \kappa_{\ICal})\gamma_{\ICal}}$.

%%%%%%%%%%%%%%%%%%%%%%%%
%   Joint Learning     %
%%%%%%%%%%%%%%%%%%%%%%%%

\subsection{Joint learning of retriever and predictor}
In this section, we consider the task of joint learning the predictor and retriever from the space $\Xi$ and $\Theta$, respectively.  The empirical optimizer pair $(\hat{\xi}_{\rm joint}, \hat{\theta}_{\rm joint})$ and the population optimizer $(\xi^{\ast}_{\rm joint},\theta^{\ast}_{\rm joint})$ for the joint task are given as follows.
\begin{align*}
&\hat{\xi}_{\rm joint}, \hat{\theta}_{\rm joint} = \argmin_{\xi \in \Xi, \hat{\theta}\in \Theta} \frac{1}{n}\sum_{i \in [n]}\sum_{z \in \ICal}p_{\theta}(z|x_i)\ell\big(h_{\xi}(x_i, z), y_i\big), \\
&\xi^{\ast}_{\rm joint}, \theta^{\ast}_{\rm joint}  = \argmin_{\xi \in \Xi} \mathbb{E}_{X}\big[\mathbb{E}_{Z\sim p_{\theta}(\cdot| X)} g_{\xi}(X, Z)\big].
\end{align*}
Recall, the optimal predictor with best possible retrieval is $
\xi^{\ast} = \argmin_{\xi \in \Xi} \mathbb{E}_{X}\big[\min_{z \in \ICal} g_{\xi}(X, z)\big].
$
We denote the optimal retriever for $\xi^*$ as $\theta(\xi^{\ast}) =  \argmin_{\theta \in \Theta} \mathbb{E}_{X}\big[\mathbb{E}_{Z\sim p_{\theta}(\cdot| X)} g_{\xi^{\ast}}(X, Z)\big]$.

The excess risk for the classes $\Theta$ and $\Xi$ can be bounded  as 
\begin{align*}
& R_{\ell, \ICal}(\hat{\xi}_{\rm joint}, \hat{\theta}_{\rm joint}) -  R_{\ell, \ICal}(f_{{\rm opt}, \ICal}^{\ell})\\
& \qquad\overset{(i)}{\leq} R_{\ell, \ICal}(\hat{\xi}_{\rm joint}, \hat{\theta}_{\rm joint}) - \underbrace{\bigg(R_{\ell, \ICal, n}(\hat{\xi}_{\rm joint}, \hat{\theta}_{\rm joint}) - R_{\ell, \ICal, n}(\xi^{\ast}_{\rm joint}, \theta^{\ast}_{\rm joint}) \bigg)}_{\leq 0 \text{ as ERM minimizes empirical risk} } \\
& \qquad \quad- R_{\ell, \ICal}(\xi^{\ast}_{\rm joint}, \theta^{\ast}_{\rm joint}) + R_{\ell, \ICal}(\xi^{\ast}_{\rm joint}, \theta^{\ast}_{\rm joint}) -  R_{\ell, \ICal}(f_{{\rm opt}, \ICal}^{\ell})\\
&\qquad\overset{(ii)}{\leq} \sum_{(\theta, \xi) \in \{(\hat{\theta}_{\rm joint}, \hat{\xi}_{\rm joint}), (\theta^{\ast}_{\rm joint}, \xi^{\ast}_{\rm joint})\}}|R_{\ell, \ICal}(\xi, \theta) - R_{\ell, \ICal, n}(\xi, \theta)| 
+ R_{\ell, \ICal}(\xi^{\ast}_{\rm joint}, \theta^{\ast}_{\rm joint}) -  R_{\ell, \ICal}(f_{{\rm opt}, \ICal}^{\ell})\\
&\qquad\overset{(iii)}{\leq} \sum_{(\theta, \xi) \in \{(\hat{\theta}_{\rm joint}, \hat{\xi}_{\rm joint}), (\theta^{\ast}_{\rm joint}, \xi^{\ast}_{\rm joint})\}}|R_{\ell, \ICal}(\xi, \theta) - R_{\ell, \ICal, n}(\xi, \theta)| 
+ R_{\ell, \ICal}(\xi^*, \theta(\xi^{\ast}))  -  R_{\ell, \ICal}(f_{{\rm opt}, \ICal}^{\ell})\\
&\qquad\overset{(iv)}{\leq} \underbrace{\sum_{(\theta, \xi) \in \{(\hat{\theta}_{\rm joint}, \hat{\xi}_{\rm joint}), (\theta^{\ast}_{\rm joint}, \xi^{\ast}_{\rm joint})\}}|R_{\ell, \ICal}(\xi, \theta) - R_{\ell, \ICal, n}(\xi, \theta)|}_{\text{Generalization Error}}  \\
&\qquad \quad + \underbrace{R_{\ell, \ICal}(\xi^*, \theta(\xi^{\ast})) - \mathbb{E}_{X}\big[\min_{z \in \ICal} g_{\xi^*}(X, z)\big]}_{\text{retriever error}} + \underbrace{\mathbb{E}_{X}\big[\min_{z \in \ICal} g_{\xi^*}(X, z)\big]  - R_{\ell, \ICal}(f_{{\rm opt}, \ICal}^{\ell})}_{\text{predictor error}}
\end{align*}
In the inequality $(iii)$, we substitute the pair $(\xi^*, \theta(\xi^{\ast}))$ for $(\xi^{\ast}_{\rm joint}, \theta^{\ast}_{\rm joint})$ as the former may have higher loss than latter. For the pair $(\xi^*, \theta(\xi^{\ast}))$ the predictor error is easily controlled. Also, note that the retriever $\theta(\xi^{\ast})$ is optimized for the optimal predictor $\xi^{\ast}$. Therefore, unlike the fixed predictor case in Section~\ref{app:learn-retriever} we do not have additional predictor error. We next bound the generalization and approximation errors separately by combining the retriever and predictor errors derived earlier.

\subsubsection{Generalization Error}
First, for the fixed $(\theta^{\ast}, \xi^{\ast})$ pair we bound the generalization error as 
$$
|\mathbb{E}_{X}\big[\mathbb{E}_{Z\sim p_{\theta^{\ast}}(\cdot| X)} g_{\xi^{\ast}}(X, Z)\big] -  \frac{1}{n}\sum_{i \in [n]}\sum_{z \in \ICal}p_{\theta^{\ast}}(z|x_i)\ell\big(h_{ \xi^{\ast}}(x_i, z), y_i\big)| \leq 3 \ell_{\max}\sqrt{\tfrac{\log(2/\delta)}{n}}.
$$
Next, the generalization for the $(\hat{\xi}, \hat{\theta})$ error can be bounded as.
\begin{align}
    &|\mathbb{E}_{X}\big[\mathbb{E}_{Z\sim p_{\hat{\theta}}(\cdot| X)} g_{\hat{\xi}}(X, Z)\big] -  \frac{1}{n}\sum_{i \in [n]}\sum_{z \in \ICal}p_{\hat{\theta}}(z|x_i)\ell\big(h_{\hat{\xi}}(x_i, z), y_i\big)| \nonumber\\
    &\qquad\overset{(i)}{\leq}   2\mathbb{E}_{\boldsymbol{\sigma}}\Big[\max_{(\theta, \xi) \in \Theta \times \Xi}\frac{1}{n}\sum_{i \in [n]}\sigma_i\sum_{z \in \ICal}p_{\theta}(z|x_i)\ell\big(h_{\xi}(x_i, z), y_i\big)\Big] + 3 \ell_{\max}\sqrt{\tfrac{\log(2/\delta)}{n}} \nonumber\\
    &\qquad\overset{(ii)}{\leq} 2 \times \inf_{\varepsilon \in [0, c_{\max}/2]}\big(4\varepsilon + \tfrac{12}{\sqrt{n}} \int_{\varepsilon}^{c_{\max}/2} \sqrt{\log(\mathcal{N}(\Theta \times \Xi, \nu, \|\cdot\|_{2, [n]}))}d \nu\big)  + 3 \ell_{\max}\sqrt{\tfrac{\log(2/\delta)}{n}}. \label{eq:combined-gen}
\end{align}
The second inequality again follows using covering number based bounds with chaining \cite{shalev2014understanding}. 
We have used for a fixed retriever $\theta$
$$
c_{\max} = \sup_{\theta, \xi \in \Theta \times \Xi} \Big(\sum_{i\in [n]} \big(\sum_{z \in \ICal}p_{\theta}(z|x_i)\ell\big(h_{\xi}(x_i, z), y_i\big)\big)^2 \Big)^{1/2},
$$
and $\mathcal{N}(\Xi, \nu, \|\cdot\|_{2, [n]})$ denotes the covering number of the retriever function class $\Xi$ with error $\nu$ in $L_2$ norm w.r.t. the set $\{(x_i, y_i): i \in [n]\}$, i.e.,
$$
\|\mathbf{u}\|_{2, [n]} := \Big(\sum_{i\in [n]} \big(\sum_{z \in \ICal}u_{i,z}\big)^2 \Big)^{1/2}, \, \forall \mathbf{u} \in \mathbb{R}^{n \times |\ICal|}.
$$

The covering number in Equation~\eqref{eq:combined-gen} can be bounded using the retriever and predictor learning complexities as
$$
\sqrt{\log(\mathcal{N}(\Theta \times \Xi, \nu, \|\cdot\|_{2, [n]}))}  \leq \max_{\xi \in \Xi}\sqrt{\log(\mathcal{N}(\Theta, \nu/2, \|\cdot\|_{2, [n], \xi}))} + \max_{\theta \in \Theta}\sqrt{\log(\mathcal{N}(\Xi, \nu/2, \|\cdot\|_{2, [n], \theta}))}.
$$

This implies that the generalization error of joint learning is (orderwise) bounded by the sum of the generalization error of retriever learning (cf. \eqref{eq:ret-gen}) and predictor learning (cf. \eqref{eq:pred-gen}).

\subsubsection{Approximation error}
The approximation error of predictor and retriever decouples under our decomposition, and under Assumption~\ref{a:gap-sobolev} and \ref{a:true-sobolev}. So the approximation error is also bounded by the sum of the approximation error of retriever learning with optimal predictor, and the approximation error of predictor learning.  Our derived bounds approximation error of the retriever holds uniformly for all predictor, so it also holds for optimal predictor.  
This implies that the joint retriever and predictor learning error is bounded (orderwise) by the sum of the predictor and retriever errors derived earlier in \eqref{eq:ret-approx-app}, and \eqref{eq:pred-approx-app} earlier. 

\begin{proof}[Proof of Theorem~\ref{thm:main}]
We define $f_{\mathcal{N}}(\nu; \mathcal{A}, \mathcal{B}) = \sup_{b \in \mathcal{B}}\sqrt{\log(\mathcal{N}(\mathcal{A}, \nu, \|\cdot\|_{2, n, b}))}$.
Putting the approximation and generalization errors  together we obtain the final excess risk bound as 
\begin{align*}
    &\Delta_{\ell, \xi}(\hat{\xi}, \hat{\theta}) \\
    &\quad \leq
    3 \ell_{\max} (\tfrac{1}{n} + \sqrt{\tfrac{\log(n)}{n}}) +
    \inf_{\varepsilon \in [0, \tfrac{\ell_{\max}}{2}]} 8\varepsilon + \tfrac{24}{\sqrt{n}} \int_{\varepsilon}^{\tfrac{\ell_{\max}}{2}} f_{\mathcal{N}}(\tfrac{\nu}{2}; \Theta, \Xi) + f_{\mathcal{N}}(\tfrac{\nu}{2}; \Xi, \Theta) d \nu \nonumber \\
    &\quad+  \inf_{\theta\in \Theta} \inf_{\tau > 0}\ell_{\max} \|r_{\theta} + \tau \mathrm{gap}_{\xi}\|_{\infty} + \frac{\log(|\ICal|)}{\tau^2}
    \\
    &\quad+\inf_{\xi \in \Xi}2  \mathbb{E}_{X}\big[ \max_{y \in \YCal} |h_{\xi}^y(X, z^*(X)) - h_{*}^y(X, z^*(X))|\big] +  (|\YCal| - 1)\exp(-\ell_{\max}) + c_{\ICal} |\ICal|^{- \gamma_{\ICal}}\exp(\ell_{\max}).
\end{align*}
This completes the proof.
\end{proof}

\subsubsection{Instantiation of MLP retriever and predictor} For the scenario where the retriever and predictor are MLP, we can reuse the earlier analysis to provide the excess risk bound here. 

\begin{proof}[Proof of Theorem~\ref{thm:main-mlp}]
Let us recall from Appendix~\ref{sssec:ret-mlp}, in Equation~\ref{eq:excess-risk-mlp-ret} that a retriever MLP with depth $L_{{\rm ret}}$, and width $O(d_x + d_z)$ gives an approximation error $O\left(\ell_{\max}L_{{\rm ret}}^{- \tfrac{4\kappa}{3(d_x + d_z)}}\log^{1/3}(|\ICal|)\right)$ and the generalization error $O\left(\frac{\ell_{\max}LW \sqrt{\log(LW)\log(n |\ICal|)}}{\sqrt{n}}\right)$.

Similarly, from Appendix~\ref{sssec:pred-mlp}, in  Equation~\eqref{eq:pred-mlp-approx-app}, a MLP predictor with depth $L_{{\rm pred}}$ and width $O(|\YCal|(d_x + d_z))$ has an approximation error 
$O\left(L_{{\rm pred}}^{-2\kappa_{\ICal}/(d_x + d_z)} +  (|\YCal| - 1)\exp(-\ell_{\max}) + c_{\ICal} |\ICal|^{- \gamma_{\ICal}}\exp(\ell_{\max}) \right)$, and a generalization error 
$O\left(\frac{\ell_{\max} \sqrt{(L_{{\rm pred}}\log(|\YCal|) + L_{{\rm pred}} |\YCal| \log(L_{{\rm pred}} |\YCal|))\log(n |\ICal||\YCal|)}}{\sqrt{n}}\right)$.

Thus, the combined error in this case is given as  
\begin{align*}
    \Delta_{\ell, \ICal}(\hat{\xi}, \hat{\theta}) &\leq  \tilde{O}\left(\frac{\ell_{\max}}{\sqrt{n}} \left(L_{{\rm ret}} +  L_{{\rm pred}}|\YCal| \right) \right) + 
    O\Big(\ell_{\max}L_{{\rm ret}}^{- \tfrac{4\kappa}{3(d_x + d_z)}}\log^{1/3}(|\ICal|)\Big) \\
    &+ O\left(L_{{\rm pred}}^{-\tfrac{2\kappa_{\ICal}}{(d_x + d_z)}} +  (|\YCal| - 1)\exp(-\ell_{\max}) + c_{\ICal} |\ICal|^{- \gamma_{\ICal}}\exp(\ell_{\max}) \right).
\end{align*}
This completes the proof.
\end{proof}

Finally, letting $\ell_{\max} = \log(|\YCal|) + \frac{\kappa_{\ICal}}{((d_x + d_z) + 2 \kappa_{\ICal})}\log(n)$ and combining the excess risk of retriever learning (2nd term in \eqref{eq:excess-risk-rate-mlp-ret}) and of predictor learning (2nd term in \eqref{eq:excess-risk-rate-mlp-pred}), the joint learning excess error rate is given as
\begin{align}
&\text{Joint Excess Risk MLP} \nonumber\\ 
&\qquad\leq \begin{cases}
\tilde{O}\left(n^{- \tfrac{2\kappa}{3(d_x + d_z) + 4 \kappa}} + |\YCal|^{\tfrac{2\kappa_{\ICal}}{(d_x + d_z) + 2 \kappa_{\ICal}}} n^{- \tfrac{\kappa_{\ICal}}{(d_x + d_z) + 2 \kappa_{\ICal}}}\right), 
&\text{if}~|\ICal| = \Omega\Big(|\YCal|^{\gamma_{\ICal}^{-1}} n^{\frac{2\kappa_{\ICal}\gamma_{\ICal}^{-1}}{((d_x + d_z) + 2 \kappa_{\ICal})}}\Big),\\
\tilde{O}\left(n^{- \tfrac{2\kappa}{3(d_x + d_z) + 4 \kappa}} 
+ |\ICal|^{-\gamma_{\ICal}} |\YCal| n^{\frac{\kappa_{\ICal}}{((d_x + d_z) + 2 \kappa_{\ICal})}} \right), &\text{otherwise.}
\end{cases}  \label{eq:joint-upper}  
\end{align}
Here $\kappa$ is defined in Assumption~\ref{a:gap-sobolev},
and $(\kappa_{\ICal}, \gamma_{\ICal})$ are defined in 
Assumption~\ref{a:true-sobolev}. Also, $d_x$ is the embedding dimension of input $x \in \XCal$ and $d_z$ is the 
embedding dimension of retrieved example $z \in \ICal$.

\section{More experiments}
\label{app:expt}

\begin{table*}[ht!]
    \centering
    \begin{tabular}{@{}lccc ccc ccc ccc@{}}
    \toprule
     \multicolumn{1}{@{}l}{\multirow{2.5}{*}{Method}} &  & \multicolumn{3}{c}{small} &                      & \multicolumn{3}{c}{base} &                      & \multicolumn{3}{c}{large} \\ \cmidrule(lr){3-5} \cmidrule(lr){7-9} \cmidrule(l){11-13} 
    &  & small   & base   & large  &  & small   & base  & large  &  & small   & base   & large  \\ \midrule
    % EMDR2 &  & 0.40  & 0.48  & 0.52  &  & 0.42  & 0.48  & 0.51  &  & 0.42  & 0.49  & 0.53 \\
    % PDist &  & 0.50  & 0.57  & 0.61  &  & 0.49  & 0.57  & 0.61  &  & 0.48  & 0.56  & 0.58 \\
    % Cross-Entropy + PG &  & 0.45  & 0.53  & 0.55  &  & 0.45  & 0.53  & 0.55  &  & 0.45  & 0.52  & 0.55 \\
    % Cross-Entropy + TopK &  & 0.49  & 0.57  & 0.61  &  & 0.48  & 0.55  & 0.60  &  & 0.47  & 0.54  & 0.58 \\
    EMDR2 &  & 40.0  & 47.7  & 52.0  &  & 41.5  & 48.0  & 51.4  &  & 41.6  & 48.8  & 52.6 \\
    PDist &  & 49.7  & 57.4  & 61.3  &  & 48.6  & 57.0  & 61.0  &  & 47.7  & 55.7  & 58.9 \\
    Reverse Cross-Entropy + PG &  & 44.9  & 52.6  & 54.7  &  & 45.3  & 53.3  & 55.2  &  & 44.9  & 51.7  & 54.9 \\
    Reverse Cross-Entropy + TopK &  & 48.9  & 56.8  & 60.9  &  & 47.9  & 55.5  & 59.6  &  & 46.7  & 54.3  & 58.2 \\
    \bottomrule
    \end{tabular}
    \caption{\textbf{Recall on NQ}. We measure the recall of answer string being present in the retrieved passage performance of RAMs across various training objectives and model sizes. Top row specifies the predictor size and the second row specifies the retriever size.}
    \label{tab:nq_recall}
\end{table*}

\begin{table*}[t]
    \centering
    \begin{tabular}{@{}lccc ccc ccc ccc@{}}
    \toprule
     \multicolumn{1}{@{}l}{\multirow{2.5}{*}{Method}} &  & \multicolumn{3}{c}{small} &                      & \multicolumn{3}{c}{base} &                      & \multicolumn{3}{c}{large} \\ \cmidrule(lr){3-5} \cmidrule(lr){7-9} \cmidrule(l){11-13} 
    &  & small   & base   & large  &  & small   & base  & large  &  & small   & base   & large  \\ \midrule
    EMDR2 &  & 46.6  & 54.7  & 62.4  &  & 46.1  & 55.7  & 61.6  &  & 46.0  & 53.9  & 59.5 \\
    PDist &  & 59.6  & 68.6  & 72.8  &  & 59.1  & 61.9  & 72.2  &  & 56.4  & 59.3  & 69.3 \\
    Reverse Cross-Entropy + PG &  & 58.1  & 60.7  & 70.7  &  & 56.9  & 66.1  & 64.2  &  & 54.2  & 61.4  & 61.3 \\
    Reverse Cross-Entropy + TopK &  & 57.1  & 64.5  & 69.1  &  & 55.9  & 63.5  & 68.1  &  & 54.2  & 61.2  & 65.8 \\
    \bottomrule
    \end{tabular}
    \caption{\textbf{Recall on TriviaQA}. We measure the recall of answer string being present in the retrieved passage performance of RAMs across various training objectives and model sizes. Top row specifies the predictor size and the second row specifies the retriever size.}
    \label{tab:tqa_recall}
\end{table*}

\begin{table*}[!th]
    \centering
    \begin{tabular}{@{}cccc cccc cccc@{}}
    \toprule
      \multicolumn{3}{c}{small} & &
      \multicolumn{3}{c}{base}  & &
      \multicolumn{3}{c}{large} \\ 
      \cmidrule{1-3} \cmidrule(lr){5-7} \cmidrule(l){9-11} 
    small   & base   & large  &  & small   & base  & large  &  & small   & base   & large  \\ \midrule
    96.4M  & 170.9M  & 396.4M  &  & 258.8M  & 333.3M  & 558.9M  &  & 773.6M  & 848.1M  & 1073.7M \\
    \bottomrule
    \end{tabular}
    \caption{\textbf{Parameters}. We report the model parameters in various configuration by RAMs across various  model sizes. Top row specifies the predictor size and the second row specifies the retriever size.}
    \label{tab:params}
\end{table*}

\subsection{Implementation details}
\label{sec:effcient}

Computing the objective \eqref{eq:the_obj}, let alone its gradient, requires evaluating the reader and predictor over the entire data-store $\ICal$ making it prohibitively expensive. 
We explore two ways to approximately compute the objective: 

\paragraph{Top-K approximation}
This approach involves constraining the summation to a specific subset.
Periodically we compute $p_\theta(z|x)$ for all items $z\in\ICal$ based on the current value of $\theta$.
We use this to obtain a set of $K$ documents $\ZCal(x_i)$ with the highest (stale) scores, i.e. $\mathcal{T}_K(p_\theta(\cdot|x_i))$ and evaluate the sum on this.
\begin{equation}
     \LCal^{\textsc{RCE+TopK}}_{\ICal, n}(\theta; \xi, \ICal) = -\frac{1}{n} \sum_{i\in[n]} \sum_{z \in \ZCal(x_i)} p_{\theta, \ICal}(z | x_i)\cdot \log p_{\xi}(y_i | x_i, z)
\label{eq:ce-topk}
\end{equation}
This methodology is akin to those adopted by EMDR2 and PDist, with the set being refreshed every 500 training steps and the selection of $K=64$.

\paragraph{Policy gradient}
Based on connection to RLHF/RLAIF, we propose to use policy gradient method~\citep{sutton2018reinforcement} to obtain an unbaised estimate of gradient with respect to $\theta$ efficiently.
However, as policy gradients suffer from high variance~\citep{burda2015importance,grathwohl2021oops} we use a constant baseline~\citep{williams1992simple} for variance reduction, i.e. our objective becomes
\begin{equation}
\begin{aligned}
    \LCal^{\textsc{RCE+PG}}_{\ICal, n}(\theta; \xi, \ICal) &= -\frac{1}{n} \sum_{i\in[n]} \sum_{j\in[K]} p_{\theta, \ICal}(z_j(x_i) | x_i)\cdot \big[\log p_{\xi}(y_i | x_i, z_j(x_i)) - b\big]\\
    \nabla_\theta\LCal^{\textsc{RCE+PG}}_{\ICal, n}(\theta; \xi, \ICal) &= -\frac{1}{n} \sum_{i\in[n]} \sum_{j\in[K]} \nabla_\theta \log p_{\theta, \ICal}(z_j(x_i) | x_i)\cdot \big[\log p_{\xi}(y_i | x_i, z_j(x_i)) - b\big],
\label{eq:ce-pg}
\end{aligned}
\end{equation}
where $z_j(x_i) \sim p_\theta(\cdot | x_i)$ are $K$ i.i.d. samples from the retriever distribution.
We use $K=64$ and $b=5$.

\subsection{Training details}

\textbf{Dataset} The versions of the open-domain QA datasets, we use are:
\begin{itemize}
    \item  TriviaQA: {\url{https://www.tensorflow.org/datasets/catalog/trivia_qa#trivia_qaunfilterednocontext}}
    \item NQOpen \url{https://www.tensorflow.org/datasets/catalog/natural_questions_open}
\end{itemize}

\noindent\textbf{Optimization.}~For all of our experiments, we %train all models with 
use ADAM weight decay optimizer with a short warm up period (2000 steps) and a linear decay schedule. We use the peak learning rate of $1\times10^{-4}$. The weight decay factor is 0.1. We chose batch sizes to be $64$.
The number of total training steps is as follows:
\begin{itemize}
    \item No retriever, train predictor $\xi$: 40,000
    \item Fixed retriever $\theta_0$, train predictor $\xi$: 20,000
    \item Fixed predictor $\xi^\star(\theta_0)$, train retriever $\theta$: 20,000
    \item Jointly train predictor $\xi$ and retriever $\theta$: 40,000
\end{itemize}

\noindent\textbf{Initializations}~
We initialize models for different configurations as follows:
\begin{itemize}
    \item No retriever, train predictor $\xi$: We initialize the predictor from public pretrained T5 checkpoint.
    \item Fixed retriever $\theta_0$, train predictor $\xi$: We initialize the fixed retriever from public pretrained GTR checkpoint and predictor from public pretrained T5 checkpoint.
    \item Fixed predictor $\xi^\star(\theta_0)$, train retriever $\theta$: We initialize the fixed predictor from the final checkpoint of previous run, i.e. ``Fixed retriever $\theta_0$, train predictor $\xi$''. The retriever is initialized from public pretrained GTR checkpoint.  
    \item Jointly train predictor $\xi$ and retriever $\theta$: We initialize the fixed retriever from public pretrained GTR checkpoint and predictor from public pretrained T5 checkpoint.
\end{itemize}

%%%%%%%%%%%%%%%%%%%%%%%%%%%%%%%%%%%%%%%%%%%%%%%%%%%%%%%%%%%%%%%%%%%%%%%%%%%%%%%
%%%%%%%%%%%%%%%%%%%%%%%%%%%%%%%%%%%%%%%%%%%%%%%%%%%%%%%%%%%%%%%%%%%%%%%%%%%%%%%
% OLD STUFF
%%%%%%%%%%%%%%%%%%%%%%%%%%%%%%%%%%%%%%%%%%%%%%%%%%%%%%%%%%%%%%%%%%%%%%%%%%%%%%%
%%%%%%%%%%%%%%%%%%%%%%%%%%%%%%%%%%%%%%%%%%%%%%%%%%%%%%%%%%%%%%%%%%%%%%%%%%%%%%%
% \input{old_files/050_main_result}
% \input{old_files/020_setup}
% \input{old_files/020_gen}
% \input{old_files/030_bilevel_optimization}
% \input{old_files/040_algorithm}

%%%%%%%%%%%%%%%%%%%%%%%%%%%%%%%%%%%%%%%%%%%%%%%%%%%%%%%%%%%%%%%%%%%%%%%%%%%%%%%
%%%%%%%%%%%%%%%%%%%%%%%%%%%%%%%%%%%%%%%%%%%%%%%%%%%%%%%%%%%%%%%%%%%%%%%%%%%%%%%
% AUTHOR INSTRUCTIONS
%%%%%%%%%%%%%%%%%%%%%%%%%%%%%%%%%%%%%%%%%%%%%%%%%%%%%%%%%%%%%%%%%%%%%%%%%%%%%%%
%%%%%%%%%%%%%%%%%%%%%%%%%%%%%%%%%%%%%%%%%%%%%%%%%%%%%%%%%%%%%%%%%%%%%%%%%%%%%%%

\end{document}